\icmltitlerunning{Simplicity bias and optimization threshold in ReLU networks}
\begin{document}
\setlength{\abovedisplayskip}{5pt}
\setlength{\topsep}{0.5em}

\setcounter{tocdepth}{3}
%\tableofcontents
%\iclrfinalcopy %uncomment for camera ready version
%\iclrpreprintcopy %uncomment for preprint version
%\shorttitle{Simplicity bias and optimization threshold} %header for preprint version

\doparttoc % Tell to minitoc to generate a toc for the parts
\faketableofcontents % Run a fake tableofcontents command for the partocs

\twocolumn[
\icmltitle{Simplicity Bias and Optimization Threshold in Two-Layer ReLU Networks}

\begin{icmlauthorlist}
\icmlauthor{Etienne Boursier}{orsay}
\icmlauthor{Nicolas Flammarion}{epfl}
\end{icmlauthorlist}

\icmlaffiliation{orsay}{INRIA, LMO, Université Paris-Saclay, Orsay, France}
\icmlaffiliation{epfl}{TML Lab, EPFL, Switzerland}

\icmlcorrespondingauthor{Etienne Boursier}{etienne.boursier@inria.fr}

\icmlkeywords{Neural Networks, Simplicty Bias, Implicit Bias, One hidden ReLU Network, Early Alignment}

\vskip 0.3in
]

% this must go after the closing bracket ] following \twocolumn[ ...

% This command actually creates the footnote in the first column
% listing the affiliations and the copyright notice.
% The command takes one argument, which is text to display at the start of the footnote.
% The \icmlEqualContribution command is standard text for equal contribution.
% Remove it (just {}) if you do not need this facility.

\printAffiliationsAndNotice{}  % leave blank if no need to mention equal contribution
%\printAffiliationsAndNotice{\icmlEqualContribution} % otherwise use the standard text.

%\maketitle

\begin{abstract}\looseness=-1
Understanding generalization of overparametrized models remains a fundamental challenge in machine learning. 
The literature mostly studies generalization from an interpolation point of view, taking convergence towards a global minimum of the training loss for granted. %While overparametrized architectures indeed interpolate the data for typical classification tasks, 
This interpolation paradigm does not seem valid for complex tasks such as in-context learning or diffusion. It has instead been empirically observed that the trained models go from global minima to spurious local minima of the training loss as the number of training samples becomes larger than some level we call \textit{optimization threshold}. %While the former yields a poor generalization to the true population loss, the latter was observed to actually correspond to the minimiser of this true loss.
This paper explores theoretically this phenomenon in the context of two-layer ReLU networks. We demonstrate that, despite overparametrization, networks might converge towards simpler solutions rather than interpolating training data, which leads to a drastic improvement on the test loss. %with respect to interpolating solutions. %This absence of interpolation appears when the number of training samples grows larger than some \textit{optimization threshold}. 
Our analysis relies on the so called early alignment phase, during which neurons align toward specific directions. This directional alignment %, which occurs in the early stage of training,
leads to a simplicity bias, wherein the network approximates the ground truth model without converging to the global minimum of the training loss. Our results suggest this bias, resulting in an optimization threshold from which interpolation is not reached anymore, is beneficial and enhances the generalization of trained models.
\end{abstract}
\vspace{-1em}
\section{Introduction}

Understanding the generalization capabilities of neural networks remains a fundamental open question in machine learning~\citep{zhang2021understanding,10.5555/3295222.3295344}. 
Traditionally, research has focused on explaining why neural networks models can achieve zero training loss while still generalizing well to unseen data in supervised learning tasks~\citep{chizat2018global,mei2018mean,rotskoff2022,chizat2020implicit,
boursier2022gradient,boursier2023penalising}. This phenomenon is often attributed to overparametrization enabling models to find solutions that interpolate the training data yet avoid overfitting~\citep{belkin2019reconciling,Bartlett_Montanari_Rakhlin_2021}.
However, the advent of generative AI paradigms—such as large language models~\citep{vaswani2017attention} and diffusion models~\citep{dhariwal2021diffusion}—has introduced a paradigm shift in our understanding of generalization. In these settings, models can generate new data and perform novel tasks without necessarily interpolating the training data, raising fresh questions about how and why they generalize. 
This shift can be illustrated by two seemingly unrelated applications: in-context learning with transformers and generative modeling using diffusion methods.

Firstly, in-context learning (ICL) refers to the ability of large pretrained transformer models to learn new tasks from just a few examples, without any parameter updates~\citep{brown2020language,min2022rethinking}. A central question is whether ICL enables models to learn tasks significantly different from those encountered during pretraining. While prior work suggests that ICL leverages mechanisms akin to Bayesian inference~\citep{xie2022an,garg2022can,bai2023transformers}, the limited diversity of tasks in pretraining datasets may constrain the model's ability to generalize. \citet{raventos2024pretraining}  investigated this effect by focusing on regression problems to quantify how increasing the variety of tasks during pretraining affects ICL's capacity to generalize to new, unseen tasks, in context.

Secondly, diffusion models have made remarkable strides in generating high-quality images from high-dimensional datasets~\citep{pmlr-v37-sohl-dickstein15,NEURIPS2020_4c5bcfec,song2021scorebased}. These models learn to generate new samples by training denoisers to estimate the score function—the gradient of the log probability density—of the noisy data distribution~\citep{song2019generative}. A significant challenge in this context is approximating a continuous density from a relatively small training set without succumbing to the curse of dimensionality. Although deep neural networks may tend to memorize training data when the dataset is small relative to the network's capacity~\citep{somepalli2023diffusion,carlini2023extracting}, \citet{yoon2023diffusion,kadkhodaie2023generalization} observed they generalize well when trained on sufficiently large datasets, rendering the model's behavior nearly independent of the specific training set.

\looseness=-1
The common thread connecting these examples is a fundamental change in how gradient descent behaves in overparametrized models when the number of data points exceeds a certain threshold. Rather than converging to the global minimum of the training loss, gradient descent converges to a simpler solution closely related to the true loss minimizer. In learning scenarios involving noisy data, the most effective solutions are often those that do not interpolate the data. Despite their capacity to overfit, these models exhibit a simplicity bias, generalizing well to the underlying ground truth instead of merely fitting the noise in the training data. While simplicity bias generally refers to the tendency of models to learn features of increasing complexity, until reaching data interpolation~\citep{arpit2017closer,rahaman2019spectral,kalimeris2019sgd,huh2021low}; this phenomenon seems to stop before full interpolation with modern architectures \citep[even when training for a very long time, see e.g.][Figure 4]{raventos2024pretraining}. This observation underscores a significant shift in our understanding and approach to generalization in machine learning.

\looseness=-1
In this paper, we theoretically investigate this phenomenon in the toy setting of shallow ReLU networks applied to a regression problem.
While multilayer perceptrons are foundational elements shared by the aforementioned models, focusing on shallow networks remains a significant simplification with respect to the architectures and algorithms used for training transformers and diffusion models. %Our model differs from transformers and diffusion models in several critical aspects, including architectural complexity, data modalities, initialization strategies, optimization algorithms, and loss functions. 
Despite this simplification, we aim to gain theoretical insights that shed light on similar behaviors observed in more complex models. 
%
%\looseness=-1
Some recent theoretical works argued that overparametrized networks do not necessarily converge to global minima. In particular, \citet{qiao2024stable} showed this effect for unidimensional data by illustrating the instability of global minima. \citet{boursier2024early} advanced a different reason for this effect, given by the \textit{early alignment} phenomenon: when initialized with sufficiently small weights, neurons primarily adjust their directions rather than their magnitudes in the early phase of training, aligning along specific directions determined by the stationary points of a certain known function.

\paragraph{Contributions.}
Our first contribution is to show that this function driving the early alignment phase concentrates around its expectation, which corresponds to the true loss function, as the number of training samples grows large. For simple teacher architectures, this expected function possesses only a few critical points. As a result, after the early alignment phase, the neurons become concentrated in a few key directions associated with the ground truth model. This behavior reveals a simplicity bias at the initial stages of training.
Moreover, this directional concentration is believed to contribute to the non-convergence to the global minimizer of the training loss. However, this characterization only pertains to the initial stage of training. Therefore we extend our analysis to provide, under a restricted data model, a comprehensive characterization of the training dynamics, demonstrating that the simplicity bias persists until the end of training when the number of training samples exceeds some optimization threshold. We here provide an informal version of our main theorem, corresponding to \cref{thm:noconvergence}.
\begin{thm}[Informal]\label{thm:informal} Consider a specific regression setting with $n$ data samples of dimension $d$. Using a two-layer ReLU neural network trained with gradient flow and small random initialization, we show that if $n \gtrsim d^3 \log d$, then regardless of the network's width, the learned function closely approximates the ordinary least squares solution.
\end{thm}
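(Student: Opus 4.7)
The strategy splits into three phases matching the narrative of the paper: an early alignment phase, a concentration argument turning it into a statement about the true loss, and an extension to the full training trajectory under the restricted data model.

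\textbf{Phase 1 (early alignment).} I would first invoke the early alignment framework of \citet{boursier2024early}. With sufficiently small random initialization, during an initial time window every neuron $(w_j(t),a_j(t))$ essentially preserves its norm, while its direction evolves as the gradient flow on $\mathbb{S}^{d-1}$ of some empirical potential $F_n$ built from the $n$ training samples and the teacher. The angular distribution at the end of this phase is therefore determined by the attractors of $F_n$, i.e.\ its stable critical directions.

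\textbf{Phase 2 (concentration of the potential and its critical set).} Next I would prove that with probability at least $1-\delta$, both $F_n$ and $\nabla F_n$ concentrate uniformly over $\mathbb{S}^{d-1}$ around their population counterpart $F_\infty=\mathbb{E}[F_n]$. This proceeds by (i) pointwise sub-Gaussian/Bernstein concentration, (ii) Lipschitz control of $F_n$, and (iii) an $\varepsilon$-net argument on the sphere (log covering number $\lesssim d\log(1/\varepsilon)$). To convert a uniform bound into a structural statement about critical points, one needs $\varepsilon$ small enough to distinguish the true critical directions of $F_\infty$ from spurious approximate ones; since the relevant Hessian has operator norm scaling with $d$, one is forced to take $\varepsilon\lesssim 1/d$, which after inclusion of the logarithmic covering factor yields the threshold $n\gtrsim d^3\log d$. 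Under the restricted data model, $F_\infty$ is explicit and admits only a handful of critical directions — essentially $\pm \hat w_{\mathrm{OLS}}$ — and a quantitative inverse-function argument transfers this structure to $F_n$ above the threshold.

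\textbf{Phase 3 (from alignment to the OLS predictor).} After the early phase, essentially all first-layer neurons lie in a small angular neighborhood of $\pm \hat w_{\mathrm{OLS}}$. Writing the network output in a local chart around these two directions, it reduces to the piecewise-linear form $\alpha_{+}\mathrm{ReLU}(\langle \hat w_{\mathrm{OLS}},x\rangle)-\alpha_{-}\mathrm{ReLU}(-\langle \hat w_{\mathrm{OLS}},x\rangle)$ up to a perturbation controlled by the residual angular deviation. The remaining gradient flow on the scalar magnitudes $(\alpha_+,\alpha_-)$ is essentially convex and can be shown to drive the network output to $x\mapsto \langle \hat w_{\mathrm{OLS}},x\rangle$; a triangle-inequality argument then bounds the functional distance between the trained network and the OLS predictor.

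\textbf{Main obstacle.} The hardest step is Phase 2: while pointwise concentration of $F_n$ already holds from $n\gtrsim d$, extracting a \emph{structural} conclusion about critical points requires simultaneous control of values, gradients, and enough Hessian information to run a Morse-type classification, which is precisely what inflates the sample complexity to $d^3\log d$. A secondary obstacle is ensuring that the early-alignment picture survives beyond its natural timescale: neurons initialized near the equator between the two attractors must be shown to have asymptotically negligible norm, so that they contribute only lower-order corrections to the output throughout Phase 3, independently of the width.
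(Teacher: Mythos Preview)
Your three-phase decomposition matches the paper's narrative, but two of the mechanisms you propose are off in ways that matter. First, your derivation of the $d^3\log d$ threshold via Hessian scaling and Morse classification is not how it arises. The paper's concentration argument (\cref{thm:Dconcentration}) does not use an $\varepsilon$-net plus Lipschitz control of $F_n$; $D_n(\cdot,\mathbf{0})$ is piecewise constant in $w$, so it instead counts the $\mathcal{O}(n^d)$ activation patterns and applies Hanson--Wright plus a union bound, yielding a uniform rate $\sqrt{d\log n\cdot\E[\|yx\|^2]/n}\asymp\sqrt{d^2\log n/n}$ (the extra $d$ is the second moment of $yx$, not a Hessian). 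The target precision is $1/\sqrt{d}$, coming from the margin condition $|x^\top\betastar|\gtrsim\|x\|/\sqrt d$ in \cref{ass:noconvergence}.3 needed to guarantee that the two empirical extremal vectors lie in the correct activation cone; equating $\sqrt{d^2\log n/n}\lesssim 1/\sqrt d$ gives $n\gtrsim d^3\log d$. No inverse-function or Morse argument is used, and the population critical directions are $\pm\Sigma\betastar/2$, not $\pm\hat w_{\mathrm{OLS}}$.

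Second, and more seriously, your Phase 3 reduction to a two-scalar convex problem in $(\alpha_+,\alpha_-)$ skips the central technical obstacle of the paper. After alignment, the positive and negative neurons are decoupled \emph{only as long as every neuron stays in its activation cone}, i.e.\ $\sign(x_k^\top w_i(t))=\sign(x_k^\top\betastar)$ for all $k$ and all $t$; if any neuron crosses a face, the ReLU pattern changes and the decoupling collapses. The paper spends three lemmas (slow growth, fast growth, and a local Polyak--{\L}ojasiewicz phase) tracking the full vector $\hbeta_+(t)=\sum_{i\in\cI_+}a_i(t)w_i(t)\in\R^d$ and simultaneously controlling $\min_k\langle\altwn_i(t),x_k/\|x_k\|\rangle\gtrsim 1/\sqrt d$ throughout, which is where \cref{ass:noconvergence}.3--4 are repeatedly invoked. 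Your scalar ansatz cannot see this issue, and without cone preservation there is no reason the dynamics converge to the split-OLS estimators $\beta_{n,\pm}$ rather than wandering across activation regions. The equatorial-neuron difficulty you flag as secondary is actually handled in the paper by the stronger domination at initialization (\cref{eq:dominatedinit}), which forces \emph{all} neurons to align, not just most of them.
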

In particular, \cref{thm:informal} shows that in the overparametrized regime, the final estimator does not minimize the training loss globally, yet it achieves near-optimal performance on the test data. 
We then confirm empirically our predictions.
%Notably, we focus on the blue regime in \cref{fig:regimes} and show that while the global minima poorly generalize in this regime, the convex scheme only converges to a spurious local minimum of the training loss, which has much better generalization properties. This in contrast to the underparametrized regime ($n\gg m$) where even the 

%\vspace{-0.5em}
\section{Preliminaries}\label{sec:prelim}
%\vspace{-0.5em}

This section introduces the setting and the early alignment phenomenon, following the notations and definitions of \citet{boursier2024early}.
%\vspace{-0.5em}
\subsection{Notations}
%\vspace{-0.5em}
\looseness=-1
We denote by $\bS_{d-1}$ the unit sphere of $\R^d$ and $B(\mathbf{0},1)$ the unit ball. We note $f(t)=\bigO[p]{g(t)}$, if there exists a constant $C_p$, that only depends on $p$ such that for any $t$, $|f(t)|\leq C_p g(t)$. We drop the $p$ index, if the constant $C_p$ is universal and does not depend on any parameter. Similarly we note $f(t)= \Omega_p(g(t))$, if there exists a constant $C_p>0$, that only depends on $p$ such that $f(t)\geq C_p g(t)$ and we write $f(t)= \Theta_p(g(t))$ if both $f(t)=\bigO[p]{g(t)}$ and $f(t)= \Omega_p(g(t))$. 
%We also denote by $\dot f$ the temporal derivative of $f$. 
For any bounded set $A$, $\cU(A)$ denotes the uniform probability distribution on the set $A$. 
%\vspace{-0.5em}
\subsection{Setting}
%\vspace{-0.5em}
\looseness=-1 We consider $n$ data points $(x_k,y_k)_{k\in[n]} \in \R^{d+1}$ drawn i.i.d. from a distribution $\mu\in\cP(\R^{d+1})$. We also denote by $\bX=[x_1^{\top},\ldots,x_n^{\top}]\in\R^{d\times n}$ and $\by=(y_1,\ldots,y_n)\in\R^{n}$ respectively the matrix whose columns are given by the input vectors $x_k$ and the vector with coordinates given by the labels $y_k$. 
A two layer ReLU network is parametrized by $\theta=(w_j,a_j)_{j\in[m]}\in\R^{m\times(d+1)}$, corresponding to the prediction function
\begin{equation*}
h_{\theta}:x\mapsto \textstyle\sum_{j=1}^m a_j \sigma(w_j^{\top}x),
\end{equation*}
where $\sigma$ is the ReLU activation given by $\sigma(z)=\max(0,z)$. 
While training, we aim at minimizing the empirical square loss over the training data, defined as
\begin{equation*}
L(\theta ; \bX,\by) = \frac{1}{2n}\textstyle\sum_{k=1}^n (h_{\theta}(x_k)-y_k)^2.
\end{equation*}
As the limiting dynamics of (stochastic) gradient descent with vanishing learning rates, we study a subgradient flow of the training loss, which satisfies for almost any $t\in\R_+$,
\begin{equation}\label{eq:gradientflow}
\dot\theta(t) \in -\partial_{\theta}L(\theta(t); \bX, \by),
\end{equation}
where $\partial_{\theta}L$ stands for the Clarke subdifferential of $L$ w.r.t. $\theta$.
%\vspace{-0.5em}
\subsection{Early alignment dynamics}
%\vspace{-0.5em}
\paragraph{Initialization.} \looseness=-1
In accordance to the feature learning regime \citep{chizat2019lazy}, %we consider a small initialization scale, i.e., 
the $m$ neurons of the neural network are initialized as
\begin{equation}\label{eq:init1}
(a_j(0), w_j(0))=\lambda \,  m^{-1/2}\,(\ta_j,\tw_j),
\end{equation}
\looseness=-1
where $\lambda>0$ is the scale of initialization and $(\ta_j,\tw_j)$ are vectors drawn i.i.d. from some distribution, satisfying the following domination property for any $m\in\N$:
\begin{equation}\label{eq:domination}\begin{gathered}
|\ta_j|\geq \|\tw_j\| \text{ for any } j\in[m] \ \text{ and }\ 
\frac{1}{m}\textstyle\sum_{j=1}^m \ta_j^2 \leq 1.
\end{gathered}
\end{equation}
\looseness=-1
This property is common and allows for a simpler analysis, as it ensures that the signs of the output neurons $a_j(t)$ remain unchanged while training \citep{boursier2022gradient}.

\paragraph{Neuron dynamics.}

In the case of two layer neural networks with square loss and ReLU activation, \cref{eq:gradientflow} can be written for each neuron $i\in[m]$ as
\begin{equation}\label{eq:ODE}
\begin{gathered}
\dot w_i(t) \in a_i(t)\D_n(w_i(t),{\theta}(t)) \\
\dot a_i(t) = {w}_i(t)^{\top}D_n({w}_i(t),{\theta}(t))\rangle,
\end{gathered}
\end{equation}
where the vector $D_n({w}_i(t),{\theta}(t))$ and set $\D_n({w}_i(t),{\theta}(t))$ are defined as follows, with $\partial\sigma$ the subdifferential of the ReLU activation $\sigma$:
\begin{gather*}
D_n({w},{\theta}) = \frac{1}{n}\sum_{k=1}^n \iind{{x}_k^{\top}{w}>0}(y_k-h_{{\theta}}({x}_k))x_k,\\
\D_n(w,\theta)\! = \!\left\{ \!\frac{1}{n}\!\sum_{k=1}^n \!\eta_k(y_k-h_{\theta(t)}(x_k))x_k  \big| \eta_k \!\in\! \partial\sigma( {x}_k^{\top} {w})\!\right\}\!.
\end{gather*}
These derivations directly follow from the subdifferential of the training loss. 
In particular, $D_n({w},{\theta})$ corresponds to a specific vector (subgradient) in the subdifferential $\D_n({w},{\theta})$. Also observe that the set $\D_n({w},\theta)$ depends on ${w}$ only through its activations $A_n({w})$, defined as
\begin{equation*}
A_n: \begin{array}{l} \R^d \to \{-1,0,1\}^{n} \\ {w} \mapsto \sign({w}^{\top}{x}_k)_{k\in[n]}\end{array}.
\end{equation*}
\looseness=-1
Furthermore, $\D_n({w},{\theta})$  only depends on $\theta$  via the prediction function $h_{{\theta}}$. %, evaluated on the training inputs.
This observation is crucial to the early alignment phenomenon. %Notably, two neurons with the same activations follow similar dynamics.

\paragraph{Early alignment.}

In the small initialization regime described by \cref{eq:init1}, numerous works highlight an early alignment phase in the initial stage of training~\citep{maennel2018gradient,atanasov2022neural,boursier2024early,kumar2024directional,tsoy2024simplicity}. During this phase, the neurons exhibit minimal changes in norm, while undergoing significant changes in direction. This phenomenon is due to a discrepancy in the derivatives of the neurons' norms (which scale with $\lambda$) and of their directions (which scale in $\Theta(1)$). 
Specifically,  for a sufficiently small initialization scale $\lambda$, the neurons align towards the critical directions of the following function $G_n$ defined as
\begin{equation}\label{eq:G}
G_n: w \mapsto w^{\top} D_n(w,\mathbf{0}).
\end{equation}
\looseness=-1
$G_n$ is continuous, piecewise linear and can be interpreted as the correlation between the gradient information around the origin (given by $D_n(w,\mathbf{0})$) and the neuron $w$. 
The network neurons thus align with the critical directions on the sphere of $G_n$ during the early training dynamics. These critical directions are called \textit{extremal vectors}, defined as follows.
\begin{defin}\label{def:extremal}
A vector $D\in \R^d$
%For any $u\in\{-1,0,1\}^n$, the vector $D\in\D_u$ 
is \textbf{extremal} with respect to $G_n$ if there exists $w\in\bS_{d-1}$ such that both hold
\begin{enumerate}[parsep=0pt,itemsep=0pt,topsep=0pt,partopsep=0pt]
\item $D\in\D_n(w, \mathbf{0})$;
\item $D= \mathbf{0} \text{ or } A_n(D) \in \{A_n(w), -A_n(w)\}$.
\end{enumerate}
\end{defin}
This definition directly follows from the KKT conditions of the maximization (or minimization) problem, constrained on the sphere, of the function $G_n$.

\paragraph{Implications of early alignment.}
\looseness=-1%
By the end of the early alignment, most if not all neurons are nearly aligned with some extremal vector $D$. \citet{maennel2018gradient,boursier2024early} argue that only a few extremal vectors exist in typical learning models. We further explore this claim in \cref{sec:aligngeom}. 
As a consequence, only a few directions are represented by the network's weights at the end of the early dynamics, even though the neurons  cover all possible directions at initialization. 
\citet{boursier2024early} even show that this \textit{quantization of directions} can prevent the network from interpolating the training set at convergence despite the overparametrization of the network.

\looseness=-1
Although this \textit{failure of interpolation} has been considered a drawback by \citet{boursier2024early}, we show in  \cref{sec:noconvergence} that it can also lead to a beneficial phenomenon of simplicity bias. 
Specifically, \cref{sec:noconvergence} illustrates on a simple linear example that for a large number of training samples, the model does not converge to interpolation. Instead, it converges towards the ordinary least square (OLS) estimator of the data.
As a consequence, the model fits the true signal of the data, while effectively ignoring label noise. 
Before studying this example, we must first  understand how  extremal vectors behave as the number of training samples increases.
\vspace{-0.5em}
\section{Geometry of alignment with many samples}\label{sec:aligngeom}

\looseness=-1
We here aim to describe the geometry of the function $G_n$, with a specific focus on the extremal vectors, as the number of training samples $n$ becomes large. 
These vectors are key in driving the early alignment phase of the training, making them essential to understanding the initial dynamics of the parameters. 
%
%Our approach involves first analyzing the concentration of gradient information~$D_n$ of the train loss and then refining the analysis to focus on the extremal vectors.
%precise description of the gradient landscape around the zero-parameter saddle point.  %While previous works have demonstrated a concentration of the gradient of the empirical loss towards the gradient of the true loss \citep{}, they typically rely on smoothness arguments. However, such arguments do not hold in our case due to the ReLU activation of the network.

This section provides a general result on the concentration of gradient information~$D_n$ of the train loss and support that the early alignment behavior in the infinite-data setting does not differ significantly from that in the large but finite $n$ case. While the tail bound version of \cref{thm:Dconcentration} is central to our analysis in \cref{sec:noconvergence}, the results of \Cref{sec:aligngeom} are not only stated in a general form for broader applicability, but also constitute standalone contributions that may be useful in future work.

Despite non-smoothness of the loss (due to ReLU activations), we can leverage the piecewise constant structure of the vector function 
$D_n(w)$, along with typical Rademacher complexity arguments, to derive uniform concentration bounds on the random function $w\mapsto D_n(w)$. 
\begin{restatable}{thm}{Dconc}\label{thm:Dconcentration}
If the marginal law of $x_1$ is continuous with respect to the Lebesgue measure, then for any $n\in\N$,
\begin{multline*}
\E_{\bX,\by}\big[\sup_{w\in\bS_{d-1}}\sup_{D_n\in\D_n(w,\mathbf{0})}\|D_n - D(w)\|_2\big]=\\ \mathcal{O}\Big(\sqrt{\frac{d\log n}{n}\E[\|y_1 x_1\|^2_2]}\Big),
\end{multline*}
%\vspace{-0.5em}
where for any $w\in\bS_{d-1}$, $D(w)=\mathbb{E}[\iind{w^{\top}x_1>0}y_1 x_1]$.
\end{restatable}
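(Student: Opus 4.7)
The plan is to bound this vector-valued supremum by combining a standard symmetrization with a discretization argument that exploits the piecewise-constant structure of $\D_n(\cdot,\mathbf{0})$ in $w\in\bS_{d-1}$. Since $\theta=\mathbf{0}$ gives $h_\theta\equiv 0$, one has $\D_n(w,\mathbf{0})=\{\tfrac{1}{n}\sum_k \eta_k y_k x_k:\eta_k\in\partial\sigma(x_k^\top w)\}$, and $D(w)$ is the mean of $\iind{w^\top x_1>0}y_1 x_1$; the continuity hypothesis on $x_1$ forces the boundary event $x_1^\top w=0$ to have zero probability, so $D(w)$ also equals $\E[\eta_1 y_1 x_1]$ for any measurable subgradient selection. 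A ghost-sample symmetrization then reduces the problem to bounding
\[\E_{\epsilon,X,y}\sup_{w,\eta}\Big\|\tfrac{1}{n}\textstyle\sum_k \epsilon_k\eta_k y_k x_k\Big\|_2\]
where $\epsilon_k$ are i.i.d.\ Rademacher and $\eta_k\in\partial\sigma(x_k^\top w)$.

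\textbf{Discretization and vector Hoeffding.} Next, I would freeze $(X,y)$ and observe that, as $(w,\eta)$ varies, the inner sum takes only finitely many values, indexed by subsets $S\subseteq[n]$. Cover's sign-arrangement lemma bounds the number of open sign patterns of $n$ hyperplanes through the origin in $\R^d$ by $\mathcal{O}(n^{d-1})$, and under the continuity assumption the $x_k$'s are almost surely in general position, so at most $d-1$ indices can simultaneously satisfy $x_k^\top w=0$ for any given $w$, adding at most $2^{d-1}$ extremal subgradient choices per cell. Hence the supremum is attained over a set of cardinality $N\leq \mathcal{O}((2n)^{d})$, i.e.\ $\log N\lesssim d\log n$. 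For each label $S$, the vector $Z(S):=\tfrac{1}{n}\sum_{k\in S}\epsilon_k y_k x_k$ is conditionally sub-Gaussian: for every unit $u\in\bS_{d-1}$, $\langle u,Z(S)\rangle$ is a Rademacher sum with sub-Gaussian parameter bounded by $\sigma_n:=\tfrac{1}{n}\sqrt{\sum_k\|y_k x_k\|_2^2}$. A scalar Hoeffding bound in each direction combined with a $\tfrac{1}{2}$-net of $\bS_{d-1}$ of size $\mathcal{O}(5^d)$ gives $\Pr_\epsilon(\|Z(S)\|_2\geq C\sigma_n(\sqrt d+t))\leq e^{-t^2}$, and a union bound over the $N$ labels then yields
\[\E_\epsilon\sup_S\|Z(S)\|_2\lesssim\sigma_n\sqrt{d+\log N}\lesssim\sigma_n\sqrt{d\log n}.\]

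\textbf{Conclusion and main obstacle.} Finally, taking expectation over $(X,y)$ and applying Jensen's inequality to the concave square root gives $\E_{X,y}[\sigma_n]\sqrt{d\log n}\leq\sqrt{(d\log n/n)\cdot\E\|y_1 x_1\|_2^2}$, matching the announced bound. The main difficulty I anticipate is the joint combinatorial bookkeeping of the $(w,\eta)$-sup: one has to verify that enlarging the half-space indicator class to include all Clarke subgradient selections does not blow up the label cardinality beyond a polynomial-in-$n$ factor. This is precisely where the continuity assumption on $x_1$ enters, via the general-position argument that controls the number of simultaneous boundary indices by $d-1$. A secondary technicality is the extra $\sqrt d$ loss in the vector Hoeffding step arising from netting the sphere, which merges cleanly with the union bound since $\sqrt d\leq\sqrt{d\log n}$.
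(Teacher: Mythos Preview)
Your proposal is correct and follows the same overall skeleton as the paper: symmetrization, reduction to a finite sup via Cover's hyperplane-arrangement count, a sub-Gaussian tail bound plus union bound, and finally Jensen on the random scale $\sigma_n$. Two technical choices differ. First, for the vector concentration of a single Rademacher average, the paper invokes the Hanson--Wright inequality directly on $\|Z\boldsymbol\epsilon\|_2$ (concentration around $\|Z\|_{\mathrm F}$), whereas you use scalar Hoeffding in each direction combined with a $\tfrac12$-net of $\bS_{d-1}$; this costs you an extra $\sqrt d$ that is harmlessly absorbed into $\sqrt{d\log n}$. Second, and more interestingly, to pass from the specific subgradient $D_n(w)$ to the full Clarke set $\D_n(w,\mathbf 0)$ you enlarge the label set via a general-position argument (at most $d-1$ boundary indices, hence at most $2^{d-1}$ extremal $\eta$-choices per stratum). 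The paper sidesteps this entirely: it first proves the bound only for the single-valued map $w\mapsto D_n(w)$, and then argues in a separate lemma that $\sup_{D_n\in\D_n(w)}\|D_n-D(w)\|\le\sup_{w'}\|D_n(w')-D(w')\|$, using that $\D_n(w)$ is the convex hull of the values $D_n(w')$ for nearby $w'$ together with the continuity of $D$ (itself a consequence of the absolute-continuity hypothesis). The paper's route is marginally cleaner---no extra $2^{d-1}$ factor and no explicit appeal to general position---while your route keeps the whole argument inside the symmetrized process; both reach the announced rate.
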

\cref{thm:Dconcentration} indicates that as $n$ grows large, the sets $\D_n(w,\mathbf{0})$ converge to the corresponding vectors for the true loss, given by $D(w)$, at a rate $\sqrt{\frac{d\log n}{n}}$. 
Moreover this rate holds uniformly across all possible directions of $\R^d$ in expectation. 
A probability tail bound version of \cref{thm:Dconcentration}, which bounds this deviation with high probability, can also be derived (see \cref{thm:Dtailbound} in \cref{app:tailbound}). A complete proof of \cref{thm:Dconcentration} is provided in \cref{sec:Dconcproof}.

\looseness=-1
When $n\to\infty$, the alignment dynamics are thus driven by vectors $D_n$ which are close to their expected value $D(w)$. Furthermore, when $n\to\infty$, the activations of a weight $A_n(w)$ exactly determine the direction of this weight, as every possible direction is then covered by the training inputs $x_k$.  Specifically, for an infinite dataset indexed by $\N$, whose support covers all directions of $\R^d$, and defining the infinite activation function $A$ as
\begin{gather*}
A: \begin{array}{l} \bS_{d-1} \to \{-1,0,1\}^{\N} \\w \mapsto \sign(w^{\top}x_k)_{k\in\N}\end{array};
\end{gather*}
then $A$ is injective. In this infinite data limit, the functions $G_n$ converge to the function $G:w\mapsto w^\top D(w)$, which is differentiable in this limit, and a vector $D\in \R^d$ is \textbf{extremal} with respect to $G$ if there exists $w\in\bS_{d-1}$ such that both 
\begin{equation}\label{eq:extremeinf}
\text{1. } D=D(w)\qquad
\text{2. } D= \mathbf{0} \text{ or } \frac{D}{\|D\|_2} \in \{ w, -w\}.
\end{equation}
%Under mild data assumption and a linear teacher, only two vectors satisfy the above assumptions (see \cref{sec:noconvergence}).
When $n$ becomes large, the extremal vectors of the data then concentrate toward the %few 
vectors satisfying \cref{eq:extremeinf}. This is precisely quantified by \cref{prop:Dconcentration} below.
\begin{restatable}{prop}{Dcoro}\label{prop:Dconcentration}
Assume the marginal law of $x_1$ is continuous w.r.t. the Lebesgue measure and that $\E[\|x_1 y_1\|^4]<\infty$. 
Then for any $\varepsilon>0$, there is $\nstar(\varepsilon)=\bigO[\varepsilon,\mu]{d \log d}$ such that for any $n\geq\nstar(\varepsilon)$, with probability at least $1-\bigO[\mu]{\frac{1}{n}}$:
for any extremal vector $D_n$ of the finite data $(\bX,\by)\in\R^{n\times(d+1)}$, there exists a vector $D^{\star} \in\R^d$ satisfying \cref{eq:extremeinf}, such that 
\begin{equation*}
\|D_n-D^{\star}\|_2 \leq \varepsilon.
\end{equation*}
\end{restatable}
\cref{prop:Dconcentration} states that for large $n$, the extremal vectors concentrate towards the vectors satisfying \cref{eq:extremeinf}. Note that \cref{prop:Dconcentration} is not a direct corollary of \cref{thm:Dconcentration}, but its proof relies on the tail bound version of \cref{thm:Dconcentration} and continuity arguments. A complete proof is given in \cref{sec:proofDconcentration}. 

\paragraph{Early alignment towards a few directions.}
\looseness=-1
Besides laying the ground for \cref{thm:noconvergence}, \cref{prop:Dconcentration} aims at describing the geometry of the early alignment when the number of training samples grows large. In particular, \cref{prop:Dconcentration} shows that all extremal vectors concentrate towards the directions satisfying \cref{eq:extremeinf}. Although such a description remains abstract, we believe it is satisfied by only a few directions for many data distributions. As an example, for symmetric data distributions, it is respectively satisfied by a single or two directions, when considering a one neuron or linear teacher. 
More generally, we conjecture it should be satisfied by a small number of directions as soon as the labels are given by a small teacher network. Proving such a result is yet left for future work. 

The early alignment phenomenon has been described in many works, to show that after the early training dynamics, only a few directions (given by the extremal vectors) are represented by the neurons \citep{bui2021inductive,lyu2021gradient,boursier2022gradient,chistikov2023learning,min2024early,boursier2024early,tsoy2024simplicity}. However, these works all rely on specific data examples, where extremal vectors can be easily expressed for a finite number of samples. \cref{prop:Dconcentration} aims at providing a more general result, showing that for large~$n$, it is sufficient to consider the directions satisfying \cref{eq:extremeinf}, which is easier to characterize from a statistical perspective. 
We thus believe that \cref{prop:Dconcentration} advances our understanding of how sparse is the network representation (in directions) at the end of early alignment.

%Moreover, since neurons align with the extremal vectors during the early alignment phase, this  imply that  neurons align towards a few specific directions during this phase,  given by the vectors satisfying \cref{eq:extremeinf}.

\cref{prop:Dconcentration} implies that for large values of $n$ ($\gtrsim d$), the early alignment phase results in the formation of a small number of neuron clusters, effectively making the neural network equivalent to a small-width network. Empirically, these clusters appear to be mostly %\footnote{A cluster can sometimes be split into several clusters while training. However, we only observed this in a limited fashion, such that only a very small number of clusters were added to the weights' representation while learning.} 
preserved throughout training. The neural network then remains equivalent to a small-width network  along its entire training  trajectory.

\looseness=-1 In contrast, when the number of data is limited ($n\lesssim d$), this guarantee no longer holds and a large number of extremal vectors may exist. For example in the case of orthogonal data (which only holds for $n\leq d$), there are $\Theta(2^n)$ extremal vectors \citep{boursier2022gradient}. 
In such cases, there would still be a large number of neuron clusters at the end of the early alignment phase, maintaining a large \textit{effective width} of the network. Studying how this effective width is maintained until the end of training in the orthogonal case remains an 
open problem. We conjecture that for a mild overparametrization ($n \lesssim m\ll 2^n$),\footnote{\citet{boursier2022gradient} proved an effective width of $2$ at the end of training when $m\gtrsim 2^n$.} we would still have a relatively large effective width (increasing with $n$) at the end of training.
%\vspace{-0.5em}
\section{Optimization threshold and simplicity bias}\label{sec:noconvergence}
%\vspace{-0.5em}
\looseness=-1
The goal of this section is to illustrate the transition from interpolating the training data to a nearly optimal estimator (with respect to the true loss) that can arise when increasing the size of training data. Toward this end, this section proves  on a toy data example, that for a large enough number of training samples, an overparametrized network will not converge to a global minimum of the training loss, but will instead be close to the minimizer of the true loss. This is done by analyzing the complete training dynamics, whose first phase--the so-called early alignment phase--is controlled using the tail bound version of \cref{thm:Dconcentration}. 
To this end, we consider the specific case of a linear data model:
\begin{equation}\label{eq:linearmodel}
y_k = x_k^{\top}\betastar + \eta_k \quad \text{for any }k\in[n],
\end{equation}
where $\eta_k$ is some noise, drawn i.i.d. from a centered distribution.
We also introduce a specific set of assumptions regarding the data distribution.
\begin{assumption}\label{ass:noconvergence}
The samples $x_k$ and the noise $\eta_k$ are  drawn i.i.d. from distributions $\mu_{X}$, and $\mu_{\eta}$ satisfying, for some $c>0$:
\begin{enumerate}[parsep=0pt,itemsep=5pt,topsep=0pt,partopsep=0pt]
\item $\mu_{X}$ is symmetric, i.e., $x_k$ and $-x_k$ follow the same distribution;
\item $\mu$ is continuous with respect to the Lebesgue measure;
\item $\bP_{x\sim\mu_X}\left( |x^{\top}\betastar|\leq c\frac{\|x\|_2}{\sqrt{d}} \right) = 0$;
\item $\|\E_{x\sim\mu_X}[xx^{\top}]-\bI_d\|_{\op} < \min\left(\frac{c}{2\sqrt{d}\|\betastar\|_2}, \frac{3}{5} \right)$;
\item The random vector $x_k$ is $1$ sub-Gaussian and the noise satisfies $\E[\eta^4]<\infty$.
\end{enumerate}
\end{assumption}
Conditions 1, 2 and 5 in \cref{ass:noconvergence} are relatively mild. However, item 3 is quite restrictive: it is needed to ensure that the volume of the activation cone containing $\betastar$ does not vanish when $n\to\infty$. A similar assumption is considered by \citet{chistikov2023learning,tsoy2024simplicity}, for similar reasons. 
Additionally, Condition 4 ensures that $\E_{x}[xx^{\top}]\betastar$ and $\betastar$ are in the same activation cone. This assumption allows the training dynamics to remain within a single cone after the early alignment phase, significantly simplifying our analysis.
%This allows to restrict the training dynamics in this single cone after the early alignment phase, drastically simplifying our analysis.

As an example, if the samples $x_k$ are distributed i.i.d. as
\begin{gather*}
x_k = \s_k\frac{\betastar}{\|\betastar\|} + \sqrt{d-1}\mathsf{v}_k 
\text{ with } \mathsf{v}_k\sim\cU(\bS_{d-1}\cap \{\beta^{\star}\}^\perp)\\
\text{and }\s_k \sim \cU\left([-1-\varepsilon,-1+\varepsilon]\cup[1-\varepsilon,1+\varepsilon]\right) ,
\end{gather*}
for a small enough $\varepsilon>0$ and $\mu_{\eta}$ a standard Gaussian distribution, then \cref{ass:noconvergence} is satisfied. 
In this section, we also consider the following specific initialization scheme for any $i\in[m]$:
\begin{equation}\label{eq:dominatedinit}
\begin{gathered}
w_i(0)\! \sim \!0.5 \lambda\, m^{-1/2}\, \cU(B(\mathbf{0},1)) \\\text{and }
a_i(0)\!\sim\!\lambda\, m^{-1/2}\, \cU(\{-1,1\}).
\end{gathered}
\end{equation}
\looseness=-1
In addition to the regime considered in \cref{eq:init1,eq:domination}, this initialization introduces  a stronger domination condition, as $|a_i(0)|\geq 2\|w_i(0)\|$. 
This condition reinforces the early alignment phase, ensuring that \textbf{all}  neurons are nearly aligned with extremal vectors by the end of this phase. \cref{ass:noconvergence,eq:dominatedinit} are primarily introduced to enable a tractable analysis and are discussed further in \cref{sec:discussion}. %Together, they ensure that the neurons dynamics will be nearly equivalent to that of two neurons after the early alignment phase. 
%
%Without this assumption and initialisation, the dynamics might be more complex, potentially resulting in a few small neurons at the end of training. 
%%
%However, the overall conclusion remains similar, as empirically demonstrated in \cref{sec:expe}: there is no interpolation at convergence, but rather an estimated model close to the OLS.

\looseness=-1
This set of assumptions allows to study the training dynamics separately on the following partition of the data:
\begin{gather*}
\cS_+ = \{ k\in[n] \mid x_k^{\top}\betastar \geq 0 \} \quad \text{and}\quad
\cS_- = [n]\setminus\cS_+ .
\end{gather*}
Hereafter, we denote by $\bX_{+}\in\R^{d\times|\cS_+|}$ (resp. $\bX_-$), the matrix with columns given by the vectors $x_k$ for $k\in\cS_+$ (resp. $k\in\cS_-$). Similarly, we denote by $\bY_{+}\in\R^{|\cS_+|}$ (resp. $\bY_{-}$) the vector with coordinates given by the labels $y_k$ for $k\in\cS_+$ (resp. $k\in\cS_-$).

\looseness=-1
Studying separately positive ($a_i>0$) and negative ($a_i<0$)  neurons, we prove \cref{thm:noconvergence} below, which states that at convergence for a large enough number of training samples, the sum of the positive (resp. negative) neurons correspond to the OLS estimator on the subset $\cS_+$ (resp. $\cS_-$).
\begin{restatable}{thm}{noconv}\label{thm:noconvergence}
If \cref{ass:noconvergence} holds and the initialization scheme follows \cref{eq:dominatedinit}, then there exists $\lambdastar=\Theta(\frac{1}{d})$ and $\nstar=\Theta(d^3\log d)$ such that for any $\lambda\leq\lambdastar$, any $m\in\N$ and $n\geq\nstar$, with probability $1-\bigO{\frac{d^2}{n}+\frac{1}{2^m}}$, the parameters $\theta(t)$ converge to some $\theta_{\infty}$ such that
\begin{equation*}
h_{\theta_{\infty}}(x) = (\beta_{n,+}^{\top} x)_+ - (-\beta_{n,-}^{\top} x)_+
\end{equation*}
for any $x\in\supp(\mu_X)$, 
where $\supp(\mu_X)$ is the support of the distribution $\mu_X$, $\beta_{n,+} = (\bX_+\bX_+^{\top})^{-1}\bX_{+}\bY_{+}$ and $\beta_{n,-} = (\bX_-\bX_-^{\top})^{-1}\bX_{-}\bY_{-}$ are the OLS estimator respectively on the data in $\cS_+$ and $\cS_-$\,.
\end{restatable}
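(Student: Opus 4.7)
The plan is to decompose the trajectory into an \emph{early alignment phase} during which all neurons concentrate on at most two canonical directions, followed by a \emph{linear phase} in which the predictions become effectively affine on each of $\cS_+$ and $\cS_-$. The linear data model combined with the symmetry of $\mu_X$ yields a remarkably simple expected gradient field: for any $w$ outside a null set, a change of variable $x\to -x$ gives $\E[\iind{w^\top x>0}xx^\top]=\frac{1}{2}\Sigma$ with $\Sigma=\E[xx^\top]$, so that $D(w)=\frac{1}{2}\Sigma\betastar=:D^\star$ \emph{independent of $w$}. Consequently the only vectors satisfying \cref{eq:extremeinf} are $\pm D^\star$, and condition 4 places them at angular distance $\bigO{1/(\sqrt{d}\|\betastar\|_2)}$ of $\pm\betastar/\|\betastar\|_2$.

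For the first phase, I would apply the tail-bound version of \cref{thm:Dconcentration} (\cref{thm:Dtailbound}) to show that, when $n\gtrsim d^3\log d$, the uniform deviation $\sup_{w,D_n}\|D_n-D^\star\|_2$ is with probability $1-\bigO{d^2/n}$ small enough that every extremal vector of the training data lies near $\pm D^\star$, \emph{and} any neuron whose direction is within the resulting angular margin of $\pm\betastar/\|\betastar\|_2$ has activation pattern equal to $\cS_\pm$. Condition 3 is crucial here: the angular gap $\Omega(1/\sqrt{d})$ between $\betastar$ and the sample hyperplanes $\{x_k^\top w=0\}$ must dominate the alignment error, and reconciling the two rates pins down the $d^3\log d$ scaling. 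The strengthened domination $|a_i(0)|\geq 2\|w_i(0)\|$ from \cref{eq:dominatedinit} allows one to apply the sharpest form of the early-alignment analysis of \citet{boursier2024early} uniformly over neurons: \emph{every} neuron aligns with the extremal direction whose sign matches $\sign(a_i)$, and this sign is preserved throughout training. The factor $1-2^{-m}$ in the probability bound accounts for the event that both signs appear among $(a_i(0))_i$, which is needed to produce both clusters.

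At the end of early alignment, the neurons split into a $+$-cluster (directions near $+\betastar/\|\betastar\|_2$, $a_i>0$, active exactly on $\cS_+$) and a $-$-cluster (directions near $-\betastar/\|\betastar\|_2$, $a_i<0$, active exactly on $\cS_-$). On $\cS_+$ the prediction reduces to $\beta_+^\top x$ with $\beta_+=\sum_{a_i>0}a_i w_i$, and the loss decouples as $\tfrac{1}{2n}\|\bX_+^\top\beta_+-\bY_+\|_2^2+\tfrac{1}{2n}\|\bX_-^\top\beta_-'-\bY_-\|_2^2$ with $\beta_-'=\sum_{a_i<0}a_i w_i$. A short computation yields $\dot\beta_+=\tfrac{1}{n}H_+(\bX_+\bY_+-\bX_+\bX_+^\top\beta_+)$ with $H_+=\sum_{a_i>0}(a_i^2\bI_d+w_iw_i^\top)\succ 0$, so the unique equilibrium is $\beta_+=\beta_{n,+}$; invertibility of $\bX_\pm\bX_\pm^\top$ holds with high probability by the sub-Gaussian assumption together with $n\gtrsim d$. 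The squared loss serves as a Lyapunov function, giving convergence of $\beta_+$ to $\beta_{n,+}$ and symmetrically $\beta_-'$ to $\beta_{n,-}$.

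The main technical obstacle is proving that the activation patterns remain frozen throughout the second phase, i.e.\ that no $w_i$ ever crosses a sample hyperplane $\{x_k^\top w=0\}$. I would establish this through an invariant-region argument: condition 3 supplies a strict angular margin between $\betastar$ and every sample hyperplane, and condition 4 together with the concentration of $\beta_{n,+}$ around $\betastar$ ensures that the flow-driving vector $g_+=\tfrac{1}{n}(\bX_+\bY_+-\bX_+\bX_+^\top\beta_+)$ remains oriented close to $\betastar$ along the trajectory $t\mapsto\beta_+(t)$. A continuity/bootstrap argument then shows that every $w_i$ in the $+$-cluster stays strictly inside the angular margin for all $t\geq 0$, so its activations never flip and the linear-regression dynamics above are valid for all time. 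Combining the two clusters yields exactly $h_{\theta_\infty}(x)=(\beta_{n,+}^\top x)_+-(-\beta_{n,-}^\top x)_+$ on $\supp(\mu_X)$.
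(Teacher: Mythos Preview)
Your proposal is correct and matches the paper's approach: compute $D(w)=\tfrac12\Sigma\betastar$ via symmetry, apply the tail bound of \cref{thm:Dconcentration} to localise all extremal vectors near $\pm\tfrac12\Sigma\betastar$, run early alignment under the strengthened domination of \cref{eq:dominatedinit} so that every neuron joins one of two clusters with activation pattern exactly $\cS_\pm$, decouple the $\pm$-clusters, and reduce each to a preconditioned OLS flow $\dot\beta_+=H_+\,\Sigma_{n,+}(\beta_{n,+}-\beta_+)$ while keeping activations frozen via an invariant-region argument driven by conditions~3--4. The paper organises your ``linear phase'' into three sub-phases---a slow-growth step until $\sum_{\cI_+}a_i^2$ reaches a small threshold $\varepsilon_2$, a fast-growth step until $\|\hat\beta_+-\beta_{n,+}\|_{\Sigma_{n,+}}\leq\varepsilon_3$, and a final local step handled by a Polyak--\L ojasiewicz inequality plus a bounded-variation estimate---where the slow-growth step is used to lock in the mutual alignment $\langle\bar w_i,\bar w_j\rangle\approx 1$ that lets the invariant-region control pass from the aggregate $\beta_+$ to each individual $\bar w_i$, and the PL/bounded-variation argument upgrades convergence of $\beta_\pm(t)$ to convergence of the full parameter $\theta(t)$.
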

Precisely, the estimator learnt at convergence for a large enough $n$ behaves $\mu_X$-everywhere as the difference of two ReLU neurons, with nearly opposite directions (thanks to the distribution symmetry), resulting in a nearly linear estimator. 
These directions correspond to the OLS estimator of the data in $\cS_+$ and in $\cS_-$, respectively. 
The complete proof of \cref{thm:noconvergence} is deferred to \cref{sec:noconvergenceproof}. We provide a detailed sketch in \cref{sec:sketch} below and discuss further \cref{thm:noconvergence} in \cref{sec:discussion}. 
\subsection{Sketch of Proof of \cref{thm:noconvergence}}\label{sec:sketch}

The proof of \cref{thm:noconvergence} examines the complete training dynamics of positive neurons ($a_i(0)>0$) and negative ones ($a_i(0)<0$) separately. This decoupling is possible at the end of the early phase, due to \cref{ass:noconvergence}, and is handled thanks to \cref{lemma:autonomous} in the Appendix.

First note that for the given model, there are only two vectors satisfying \cref{eq:extremeinf}, corresponding to $\frac{1}{2}\Sigma\betastar$ and $-\frac{1}{2}\Sigma\betastar$ respectively, for $\Sigma=\E_{x\sim\mu_X}[xx^\top]$. From then and thanks to the third point of \cref{ass:noconvergence}, the results from \cref{sec:aligngeom} imply that, for a large value of $n$ and with high probability, there are only two extremal vectors, both of which are close to the expected ones mentioned above.
By analyzing the early alignment phase similarly to \citet{boursier2024early}, we  show that by the end of this early phase, (i) all neurons have small norms; (ii) positive (resp. negative) neurons are aligned with $\Sigma\betastar$ (resp. $-\Sigma\betastar$). More specifically, at time $\tau$, defined as the end of the early alignment phase, we show that
\begin{equation*}
\forall i\in[m], \ \frac{w_i(\tau)}{a_i(\tau)}^{\top} \Sigma\betastar = \|\Sigma\betastar\| - \mathcal{O}\big(\lambda^{\varepsilon}+\sqrt{\frac{d^2\log n}{n}}\big).
\end{equation*}
From that point onward, all positive neurons are nearly aligned and behave as a single neuron until the end of training. Moreover, they remain in the same activation cone until the end of training. Namely for any $i\in[m]$ and $t\geq\tau$,
\begin{gather*}
a_i(t)\ x_k^{\top} w_i(t) > 0 \quad \text{for any } k\in\cS_+,\\
a_i(t)\ x_k^{\top}  w_i(t)  < 0 \quad \text{for any } k\in\cS_-.
\end{gather*}
We then show that during a second phase, all positive neurons grow until they reach the OLS estimator on the data in $\cS_+$. Mathematically, for some time $\tau_{2,+}>\tau$,
\begin{equation*}
\textstyle\sum_{i, a_i(0)>0}a_i(\tau_{2,+})w_i(\tau_{2,+}) \approx \beta_{n,+}.
\end{equation*}
Similarly, negative neurons end up close to $\beta_{n,-}$ after a different time $\tau_{2,-}$.  Proving this second phase is quite technical and is actually decomposed into a slow growth and fast growth phases, following a similar approach to \citet{lyu2021gradient,tsoy2024simplicity}.

At the end of the second phase, the estimation function is already close to the one described in \cref{thm:noconvergence}. From then, we control the neurons using a local Polyak-Łojasiewicz inequality (see Equation~\eqref{eq:PL}) to show that they remain close to their value at the end of the second phase, and actually converge to a local minimum corresponding to the estimation function $h_{\theta_\infty}$ described in \cref{thm:noconvergence}.
%\vspace{-0.5em}
\subsection{Discussion}\label{sec:discussion}
%\vspace{-0.5em}
\begin{figure*}[tb]
\centering
\begin{tikzpicture}[domain=0:12, xscale=1.2, yscale=0.7]
    % Draw the axes
    \draw[->, thick] (0,0) -- (12,0) node[right] {$n$};
      \draw[->, thick] (0,0) -- (0,3) node[midway, left, xshift=-0pt] {\shortstack{excess\\  risk}};
      
    % Annotations
%      \draw[blue, thick, ->] (0.5,2.4) -- (0.5,0.5) node[midway, left, blue] {\shortstack{$\downarrow$}};
    \node[vert] at (1.25,-0.25) {$n \ll d$};
    \node[blue] at (7.25,-0.25) {$n \gg d$};
        \node[red] at (10.75,-0.25) {$n \gg m$};
 	\node[vert] at (1.5,1.5) {\footnotesize\shortstack{\textbf{tempered}\\ \textbf{overfitting}}};
 	\node[blue] at (7.25,1.5) {\footnotesize\shortstack{\textbf{convergence to}\\ \textbf{spurious statio-}\\\textbf{nary point that}\\ \textbf{nicely generalizes}}};
 	\node[red] at (10.75,1.5) {\footnotesize\shortstack{\textbf{underparametrized}\\\textbf{regime: ERM}\\\textbf{generalizes well}}};

%    \draw[teal, thick, decorate, decoration={calligraphic brace, amplitude=6pt}] (2.5,1.5) -- (2.5,0.2) node[midway, right, xshift=10pt, teal] {\shortstack{converges to\\spurious stationary points\\$\downarrow$ that nicely\\generalize}};
%    \node[teal] at (3,2.2) {\shortstack{optimization\\threshold}};
%    \node[teal] at (3.5,0.8) {$n \gg d$};
%
%    \draw[red, thick, decorate, decoration={calligraphic brace, amplitude=6pt}] (4,0.1) -- (4,0.02) node[midway, right, xshift=10pt, red] {\shortstack{generalization\\of ERM}};
%    \node[red] at (4.5,1.8) {\shortstack{no overparametrization}};
%    \node[red] at (4.5,0.5) {$n \gg m$};

	% Rectangles
	\fill[vert, opacity=0.1] (0,0) rectangle (2.5,2.75);
	\fill[blue, opacity=0.1] (5,0) rectangle (9.5,2.75);
	\fill[red, opacity=0.1] (9.5,0) rectangle (12,2.75);
	
	%Ticks
	\draw[blue, thick, line width=2pt, opacity=0.8] (5, 0) -- (5, 2.75);
	
	%Arrow optimization threshold
	\draw[<-, thick, blue] (5, 1) --(4, 1.5) node[above, xshift=-10pt] {\footnotesize \shortstack{optimization\\threshold}};
	
	    % Draw the curve
    \draw[thick] plot[smooth] coordinates {(0,2.5)  (1,2.45) (2,2.2) (3,1.5) (4,0.5) (5,0.15) (6,0.1) (7,0.05) (8,0.02) (11,0.02)};

\end{tikzpicture}
\caption{\label{fig:regimes}Different regimes of generalization: in green ($n\ll d$), the trained estimator interpolates the data and leads to tempered overfitting; after the optimization threshold in blue ($n\gg d$), we converge to a spurious stationary point of the training loss which generalizes well despite overparametrization, this regime is our main focus; in the underparametrized regime in red ($m\gg n$), the global minima do not interpolate anymore and generalize well.}
\vspace{-1em}
\end{figure*}
\cref{thm:noconvergence} shows that, under a specific linear data model, when the number of training samples exceeds a certain \textit{optimization threshold}, the learned function converges to the OLS estimator—even in highly overparametrized settings where $m \gg n$. This result highlights two key insights:
\begin{itemize}[topsep=0pt,itemsep=0pt,leftmargin=10pt]
\item Despite overparametrization, the network can converge to a suboptimal solution of the training loss when initialized at a small scale.
\item This \textit{training failure} can in fact be beneficial: although suboptimal for the training loss, the resulting estimator is optimal for the test loss.
\end{itemize}

We now discuss further on \cref{thm:noconvergence} and its limitations.

\paragraph{Absence of interpolation.}
For many years, the literature has argued in favor of the fact that, if overparametrized enough, neural networks do converge towards interpolation of the training set, i.e., to a global minimum of the loss \citep{jacot2018neural,du2019gradient,chizat2018global,wojtowytsch2020convergence}. 

Yet, some recent works argued in the opposite direction that convergence towards global minima might not be achieved for regression tasks, even with infinitely overparametrized networks \citep{qiao2024stable,boursier2024early}. Indeed, \cref{thm:noconvergence} still holds as $m\to\infty$: although interpolation of the data is possible from a statistical aspect\footnote{Although the absence of bias term in the parametrization limits the expressivity of the neural network, interpolation is still possible as long as the data $x_i$ are pairwise non-proportional \citep[][Theorem 2]{carvalho2024positivity}.}, interpolation does not occur for optimization reasons. 
In this direction, \citet{qiao2024stable} claim that for large values of $n$ and univariate data, interpolation cannot happen because of the large (i.e., finite) stepsizes  used for gradient descent. Following \citet{boursier2024early}, we here provide a complementary reason, which is due to the early alignment phenomenon and loss of omnidirectionality of the weights (i.e., the fact that the weights represent all directions in $\R^d$). Note that this loss of omnidirectionality is specific to the (leaky) ReLU activation and does not hold for smooth activations \citep[see e.g.][Lemma C.10]{chizat2018global}.
We experimentally confirm in \cref{app:stable} that both visions are complementary, as interpolation still does not happen for arbitrarily small learning rates.

\paragraph{Simplicity bias.}
\looseness=-1
Simplicity bias has been extensively studied in the literature \citep{arpit2017closer,rahaman2019spectral,kalimeris2019sgd,huh2021low}. It is often described as the fact that networks learn features of increasing complexity while learning. In other words, simpler features are first learnt (e.g., a linear estimator), and more complex features might be learnt later. This has been observed in many empirical studies, leading to improved performance in generalization, except from a few nuanced cases \citep{shah2020pitfalls}. 
Yet in all these studies, the network interpolates the training set after being trained for a long enough time. In consequence, simplicity bias has been characterized by a first \textit{feature learning phase}; and is then followed by an \textit{interpolating phase}, where the remaining noise is fitted \citep{kalimeris2019sgd}.

\looseness=-1
We here go further by showing that this last interpolating phase does not even happen in some cases. \cref{thm:noconvergence} indeed claims that after the first feature learning phase, where the network learns a linear estimator, nothing happens in training. The interpolating phase never starts, no matter how long we wait for. While interpolation is often observed for classification problems in practice, it is generally much harder to reach for regression problems \citep{stewart2022regression,yoon2023diffusion,kadkhodaie2023generalization,raventos2024pretraining}. \cref{thm:noconvergence} confirms this tendency by illustrating a regression example where interpolation does not happen at convergence. 
Notably, we here focus on the blue regime in \cref{fig:regimes} and show that while the global minima poorly generalize in this regime, the optimization scheme only converges to a spurious local minimum of the training loss, which has much better generalization properties. This in stark contrast to the underparametrized regime -- $n\gg m$, in red in \cref{fig:regimes} -- where the global minimum has good guarantees, thanks to classical generalization bounds arguments \citep{bartlett2002rademacher}.

Although implicit bias and simplicity bias often refer to the same behavior in the literature, we here distinguish the two terms: implicit bias is generally considered in the regime of interpolation \citep{soudry2018implicit,lyu2019gradient,chizat2020implicit,ji2019implicit}, while simplicity bias still exists in absence of interpolation.

\paragraph{Improved test loss, due to overparametrization threshold.}
\looseness=-1
\cref{thm:noconvergence} states that for a large enough number of training samples, the interpolating phase does not happen during training, and the estimator then resembles the OLS estimator of the training set. In that regime, the excess risk scales as $\bigO{d/n}$ \citep{hsu2011analysis} and thus quickly decreases to~$0$ as the number of training samples grows. 
In contrast when interpolation happens, we either observe a \textit{tempered overfitting},  where the excess risk does not go down to $0$ as the number of samples grows \citep{mallinar2022benign}; or even a \textit{catastrophic overfitting}, where the excess risk instead diverges to infinity as the size of the training set increases \citep{joshi2023noisy}. 

\looseness=-1
The fact that the excess risk goes down to $0$ as $n$ grows in our example of \cref{sec:noconvergence} could not be due to a benign overfitting \citep{belkin2018overfitting,bartlett2020benign}, as benign overfitting occurs when the dimension $d$ also grows to infinity. We here consider a fixed dimension instead, and this reduced risk is then solely due to the \textit{optimization threshold}, i.e., the fact that for a large enough~$n$, the interpolating phase does not happen anymore. 
While some works rely on early stopping before this interpolating phase to guarantee such an improved excess risk \citep{ji2021early,mallinar2022benign,frei2023random}, it can be guaranteed without any early stopping after this optimization threshold. 
A similar threshold has been empirically observed in diffusion and in-context learning \citep{yoon2023diffusion,kadkhodaie2023generalization,raventos2024pretraining}, where the trained model goes from interpolation to generalization as the number of training samples increases.

\begin{figure*}[htbp] 
\centering
\subfigure[Evolution of train loss.]{\includegraphics[width=0.45\linewidth, trim=0.4cm 0.5cm 0.4cm 0.4cm, clip]{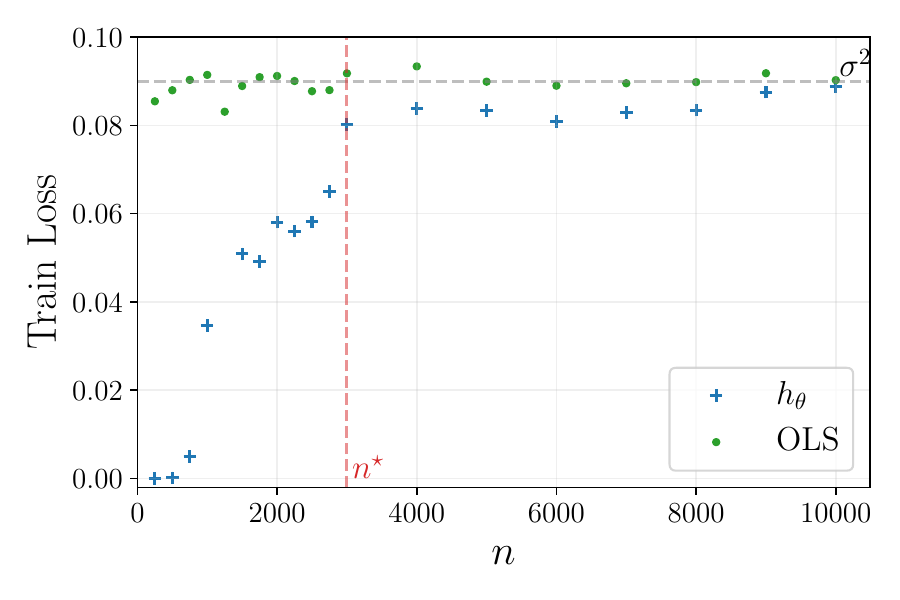}\label{fig:train}}
%\begin{subfigure}{0.45\linewidth}
%\includegraphics[width=\linewidth, trim=0.4cm 0.5cm 0.4cm 0.4cm, clip]{train_loss_n.pdf}
%\caption{\label{fig:train}Evolution of train loss.}
%\end{subfigure}
\hfill% space out the images a bit
\subfigure[Evolution of test loss.]{\includegraphics[width=0.45\linewidth, trim=0.4cm 0.5cm 0.4cm 0.4cm, clip]{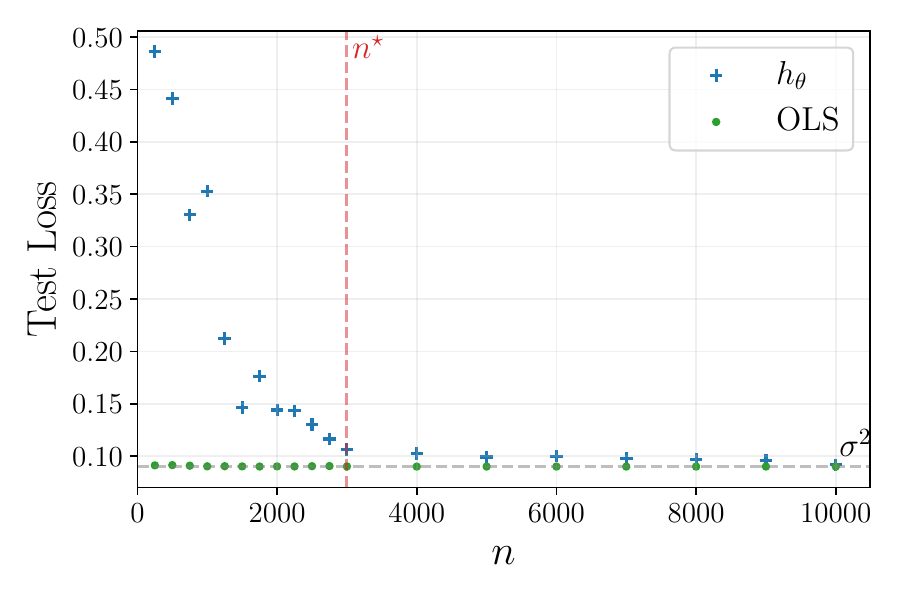}\label{fig:test}}
%\begin{subfigure}{0.45\linewidth}
%      \includegraphics[width=\linewidth, trim=0.4cm 0.5cm 0.4cm 0.4cm, clip]{test_loss_n.pdf}
%\caption{\label{fig:test}Evolution of test loss.}
%\end{subfigure}
  \caption{\label{fig:mainexpe}Evolution of both train and test losses at convergence with respect to the number of training samples. $\sigma^2$ corresponds to the noise variance $\E[\eta^2]$.}% caption for whole figure
  \vspace{-1em}
\end{figure*}
\paragraph{Limitations and generality.}
\looseness=-1
While \cref{thm:noconvergence} considers a very specific setting, it describes a more general behavior. Although condition 3 of \cref{ass:noconvergence} and the initialization scheme of \cref{eq:dominatedinit} are quite artificial, they are merely required to allow a tractable analysis. The experiments of \cref{sec:expe} are indeed run without these conditions and yield results similar to the predictions of \cref{thm:noconvergence} for large enough $n$.

More particularly, condition 3 of \cref{ass:noconvergence} is required to ensure that only two extremal vectors exist. Without this condition, there could be additional extremal vectors, but all concentrated around these two main extremal ones. On the other hand, \cref{eq:dominatedinit} is required to enforce the early alignment phase, so that all neurons are aligned towards extremal vectors at its end. With a more general initialization, some neurons could move arbitrarily slowly in the early alignment dynamics, ending unaligned at the end of early phase. Yet, such neurons would be very rare. 
Relaxing these two assumptions would make the final convergence point slightly more complex than the one in \cref{thm:noconvergence}. Besides the two main ReLU components described in \cref{thm:noconvergence}, a few small components could also be added to the final estimator, without significantly changing the reached excess risk, as observed in \cref{sec:expe}. This is observed in \cref{fig:mainexpe}, where the training loss is only slightly smaller than the training loss of OLS, with a comparable test loss.

From a higher level, \cref{thm:noconvergence} is restricted to a linear teacher and a simple network architecture. It remains hard to assess how well the considered setting reflects the behavior of more complex architectures encountered in practice. 
We believe that the different conclusions of our work remain valid in more complex setups. In particular, additional experiments in \cref{app:expe} run with a more complex teacher, GeLU activations or with Adam optimizer yield similar behaviors: the obtained estimator does not interpolate for a large number of training samples, but instead accurately approximates the minimizer of the test loss. 
Similar behaviors have also been observed on more complex tasks as generative modeling or in-context learning \citep{yoon2023diffusion,kadkhodaie2023generalization,raventos2024pretraining}. Despite overparametrization, the trained model goes from perfect interpolation to generalization, as it fails at interpolating for a large number of training samples. In these works as well, this absence of interpolation does not seem due to an early stopping, but rather to convergence to a local minimum \citep[see e.g.,][Figure~4]{raventos2024pretraining}. 

\looseness=-1 Lastly, \cref{thm:noconvergence} requires a very large number of samples with respect to the dimension, i.e., $n\gtrsim d^3\log d$. Our experiments confirm that the optimization threshold only appears for a very large number of training samples with respect to the dimension. %While a long lasting line of work has focused on the high dimensional case, where benign overfitting happens and the loss landscape yields an easier optimization \citep{}; more recent works focus on very small dimensions (often $d=1$), that allow easier analyses \citep{}. 
However, similar behaviors seem to occur for smaller orders of magnitude for $n$ in more complex learning problems, such as the training of diffusion models \citep{yoon2023diffusion,kadkhodaie2023generalization}. This dependency in $d$ might indeed be different for more complex architectures (e.g., with attention) and is worth investigating for future work.

%\vspace{-0.5em}
\section{Experiments}\label{sec:expe}
\begin{figure*}[t] 
\centering
\subfigure[$n=500$.]{\includegraphics[width=0.43\linewidth, trim=0.4cm 0.5cm 0.4cm 0.4cm, clip]{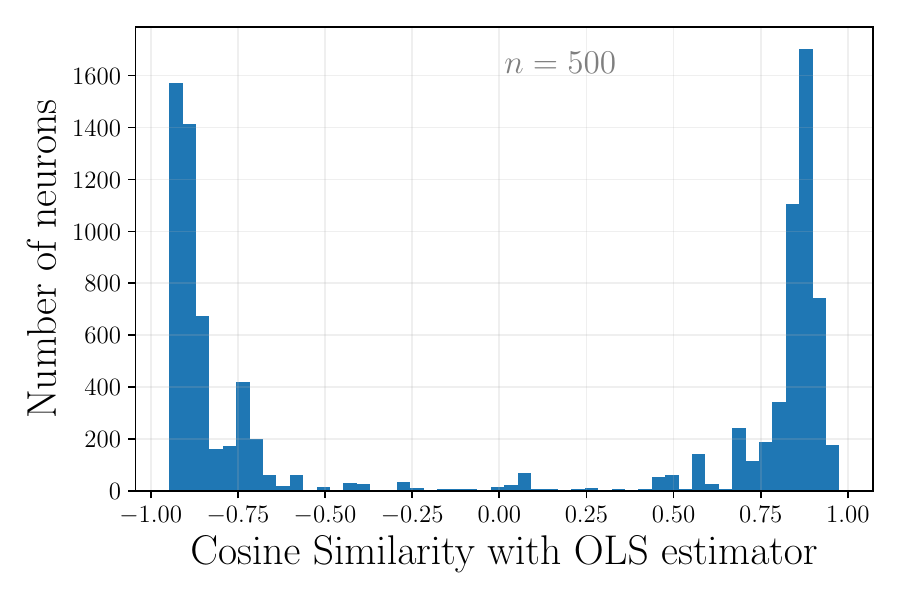}\label{fig:cosine1}}
%\begin{subfigure}{0.45\linewidth}
%\includegraphics[width=\linewidth, trim=0.4cm 0.5cm 0.4cm 0.4cm, clip]{cosine_sim_500.pdf}
%\caption{\label{fig:cosine1}$n=500$.}
%\end{subfigure}
\hfill% space out the images a bit
\subfigure[$n=5\ 000$.]{\includegraphics[width=0.43\linewidth, trim=0.4cm 0.5cm 0.4cm 0.4cm, clip]{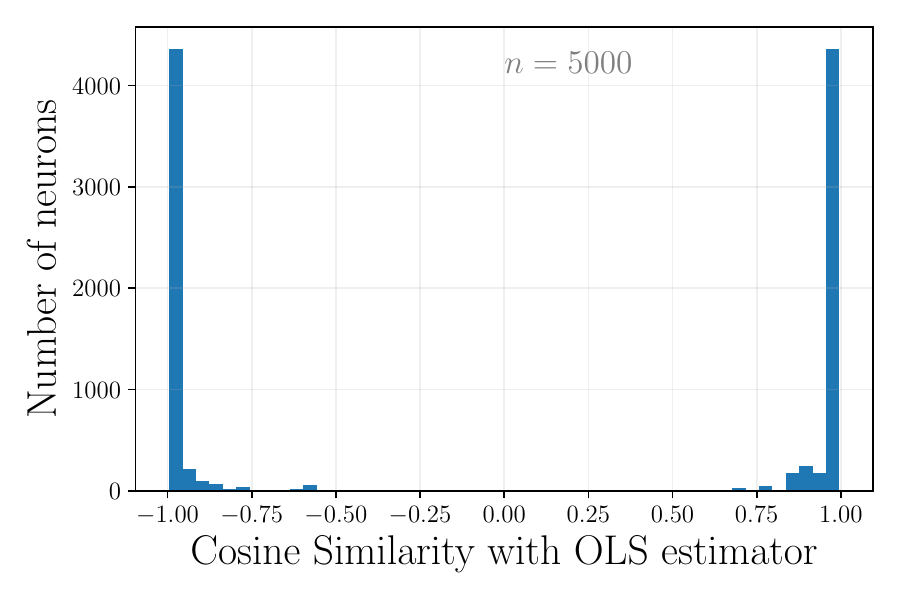}\label{fig:cosine2}}
%\begin{subfigure}{0.45\linewidth}
%      \includegraphics[width=\linewidth, trim=0.4cm 0.5cm 0.4cm 0.4cm, clip]{cosine_sim_5000.pdf}
%\caption{\label{fig:cosine2}$n=5\ 000$.}
%\end{subfigure}
  \caption{\label{fig:cosine}Histogram of the cosine similarities of the neurons with the true OLS estimator $\hbeta$, at the end of training.}% caption for whole figure
\end{figure*}
%\vspace{-0.5em}
This section illustrates our results on experiments on a toy model close to the setting of \cref{sec:noconvergence}. More precisely, we train overparametrized two-layer neural networks ($m=10\ 000$) until convergence, on data from the linear model of \cref{eq:linearmodel}. The network is trained via stochastic gradient descent and the dimension is fixed to $d=5$ to allow reasonable running times. The setup here is more general than \cref{sec:noconvergence}, since i) the data input $x_k$ are drawn from a standard Gaussian distribution (which does not satisfy \cref{ass:noconvergence}); ii) the neurons are initialized as centered Gaussian of variance $10^{-5}/m$ (which does not satisfy \cref{eq:domination,eq:dominatedinit}). We refer to \cref{app:expe} for details on the considered experiments and additional experiments. 

\cref{fig:mainexpe} illustrates the behavior of both train loss and test loss at convergence, when the size of the training set $n$ varies. As predicted by \cref{thm:noconvergence}, when $n$ exceeds some optimization threshold, the estimator at convergence does not interpolate the training set. Instead, it resembles the optimal OLS estimator, which yields a test loss close to the noise level $\E[\eta^2]$. In contrast for smaller training sets, the final estimator interpolates the data at convergence, which yields a much larger test loss than OLS, corresponding to the tempered overfitting regime \citep{mallinar2022benign}.

\looseness=-1
This optimization threshold is here located around $\nstar=3000$, which suggests that the large dependency of this threshold in the dimension (which is here $5$) in \cref{thm:noconvergence} seems necessary--see \cref{app:dim} for experiments with larger dimensions.
%
%\looseness=-1
We still observe a few differences here with the predictions of \cref{thm:noconvergence}, which are due to the two differences in the setups mentioned above. Indeed, even after this optimization threshold, the test loss of the obtained network is slightly larger than the one of OLS, while \cref{thm:noconvergence} predicts they should coincide. This is because in the experimental setup, a few neurons remain disaligned with the extremal ones at the end of the early alignment phase. These neurons will then later in training grow in norm, trying to fit a few data points. However there are only a few of such neurons, whose impact thus becomes limited--see \cref{fig:cosine}. As a consequence, they only manage to slightly improve the train loss and have little impact on the test loss.
%\vspace{-0.5em}

\subsection{Cosine similarity with OLS estimator}\label{app:cosine}
\looseness=-1
To illustrate \cref{thm:noconvergence} and the fact that neurons end up aligned with the OLS estimator beyond the optimization threshold $\nstar$, \cref{fig:cosine} shows histograms of the cosine similarities\footnote{The cosine similarity between two vectors $u,v\in\R^d$ is defined as $\cos(u,v)=\frac{u^{\top}v}{\|u\|\ \|v\|}$.} between all the neurons $w_i$ of the network at the end of training and the true OLS estimator $\hbeta = (\bX\bX^{\top})^{-1}\bX\bY$, for different sample complexities. This experiment follows the same setup as the one of \cref{fig:mainexpe}. 
In particular, \cref{fig:cosine1} shows this histogram for $n=500$, where interpolation of the training data happens (see \cref{fig:train}); and \cref{fig:cosine2} shows this histogram for $n=5\ 000$, where interpolation of the training data does not happen anymore, but the network generalizes well to unseen data.

While a majority of the neurons is already nicely aligned with the true OLS estimator in the $n=500$ case, an important fraction of them are not aligned with this estimator (69\% of them have a cosine similarity smaller than $0.9$ in absolute value). These unaligned neurons contribute to a prediction function that significantly differs from the OLS one.
On the other hand, nearly all neurons are aligned with this true estimator as $n$ grows larger (91\% of them have a cosine similarity larger than $0.9$ in absolute value), confirming the predictions of \cref{thm:noconvergence}. As explained above, there are still a few vectors that are disaligned with the OLS estimator here, but they are only a small fraction and thus have almost no impact on the estimated function.

\section{Conclusion}
%\vspace{-0.5em}
This work illustrates on a simple linear example the phenomenon of non-convergence of the parameters towards a global minimum of the training loss, despite overparametrization. This non-convergence actually yields a simplicity bias on the final estimator, which can lead to an optimal fit of the true data distribution. A similar phenomenon has been observed on more complex and realistic settings \citep{yoon2023diffusion,kadkhodaie2023generalization,raventos2024pretraining}.
 However, a theoretical analysis remains out of reach in these cases. It is still unclear whether the observed non-convergence arises from the early alignment mechanism proposed in our work, from stability issues as suggested by \citet{qiao2024stable}, from other factors, or from a combination of these effects.

Our result is proven via the description of the early alignment phase. Besides the specific data example considered in \cref{sec:noconvergence}, we also provide concentration bounds on the extremal vectors driving this early alignment. We believe these bounds (\cref{thm:Dconcentration}) can be used in subsequent works to better understand this early phase of the training dynamics, and how it yields biases towards simple estimators.

\section*{Acknowledgements}

This work was supported by the Swiss National Science Foundation (grant number 212111) and by an unrestricted gift from Google.

\section*{Impact Statement}

This paper presents work whose goal is to advance the field of 
Machine Learning. There are many potential societal consequences 
of our work, none which we feel must be specifically highlighted here.

%\newpage
\bibliographystyle{plainnat}
\bibliography{main.bib}

\newpage
\appendix
\onecolumn
\addcontentsline{toc}{section}{Appendix} % Add the appendix text to the document TOC
\part{Appendix} % Start the appendix part
%\ptcrule
\parttoc % Insert the appendix TOC

\section{Additional experiments}\label{app:expe}

\subsection{Experimental details}\label{app:expedetails}

In the experiments of \cref{fig:mainexpe}, we initialized two-layer ReLU networks (without bias term) with $m=10\ 000$ neurons, initialized i.i.d. for each component as a Gaussian of variance $\frac{10^{-5}}{\sqrt{m}}$. We then generated training samples as 
\begin{equation*}
y_k = \beta^{\star \,\top} x_k + \eta_k,
\end{equation*}
where $\eta_k$ are drawn i.i.d. as centered Gaussian of variance $\sigma^2=0.09$, $x_k$ are drawn i.i.d. as centered Gaussian variables and $\betastar$ is fixed, without loss of generality, to $\betastar=(1,0,\ldots,0)$. The dimension is fixed to $d=5$. We then train these networks on training datasets of different sizes (each dataset is resampled from scratch).

The neural networks are trained via stochastic gradient descent (SGD), with batch size $32$ and learning rate $0.01$. To ensure that we reached convergence of the parameters, we train the networks for $8\times 10^6$ iterations of SGD, where the training seems stabilized.

All the experiments were run on a personal MacBook Pro, for a total compute time of approximately 100 hours. The code can be found at \url{github.com/eboursier/simplicity_bias}.

\subsection{GeLU activation} \label{app:gelu}

Our theoretical results can be directly extended to any homogeneous activation function, i.e., leaky ReLU activation. Yet, the theory draws different conclusions for differentiable activations functions and claims that for infinitely wide neural networks, the parameters should interpolate the data at convergence \citep{chizat2018global}. This result yet only holds for infinitely wide networks, and it remains unknown how wide a network should be to actually reach such an interpolation in practice. \cref{fig:gelu} below presents experiments similar to \cref{sec:expe}, replacing the ReLU activation by the differentiable GeLU activation \citep{hendrycks2016gaussian}. This activation is standard in modern large language models. Notably, it is used in the GPT2 architecture, which was used in the experiments of \citet{raventos2024pretraining}.

\begin{figure}[htbp] 
\centering
\subfigure[Evolution of train loss.]{\includegraphics[width=0.45\linewidth, trim=0.4cm 0.5cm 0.4cm 0.4cm, clip]{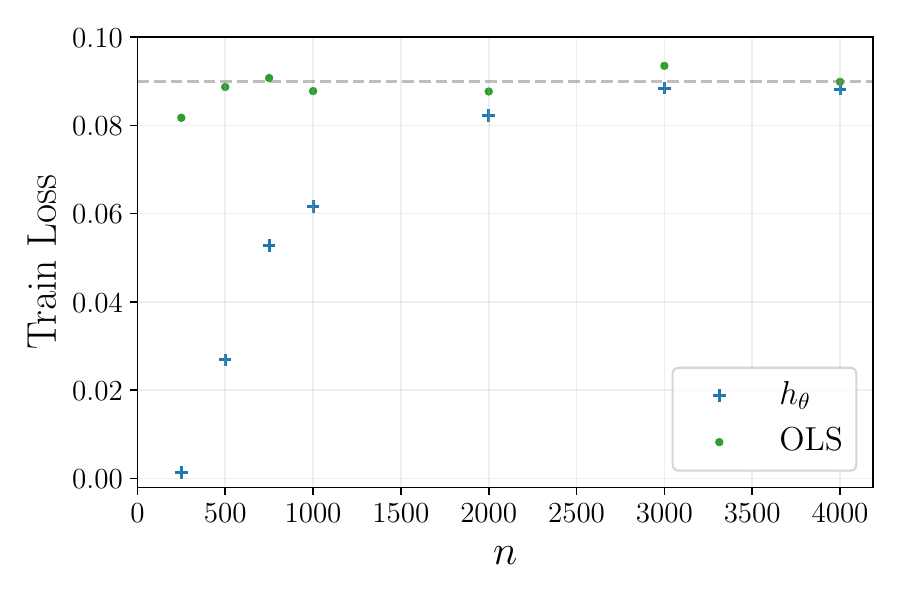}\label{fig:gelu_train}}
%\begin{subfigure}{0.45\linewidth}
%\includegraphics[width=\linewidth, trim=0.4cm 0.5cm 0.4cm 0.4cm, clip]{train_loss_gelu_d_5.pdf}
%\caption{\label{fig:gelu_train}Evolution of train loss.}
%\end{subfigure}
\hfill% space out the images a bit
\subfigure[Evolution of test loss.]{\includegraphics[width=0.45\linewidth, trim=0.4cm 0.5cm 0.4cm 0.4cm, clip]{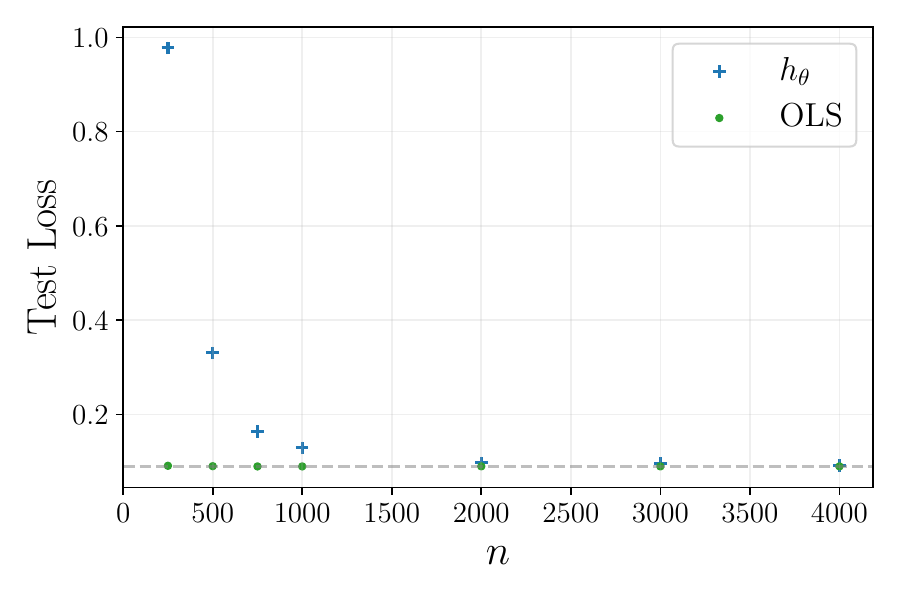}\label{fig:gelu_test}}
%\begin{subfigure}{0.45\linewidth}
%      \includegraphics[width=\linewidth, trim=0.4cm 0.5cm 0.4cm 0.4cm, clip]{test_loss_gelu_d_5.pdf}
%\caption{\label{fig:gelu_test}Evolution of test loss.}
%\end{subfigure}
  \caption{\label{fig:gelu}Evolution of both train and test losses at convergence with respect to the number of training samples, with GeLU activation.}% caption for whole figure
\end{figure}

While infinitely wide GeLU networks should overfit, even very wide networks ($m=10\ 000$) are far from this behavior in practice. In particular, we observe a phenomenon similar to \cref{sec:expe} in \cref{fig:gelu}. Surprisingly, it even seems that interpolation is harder to reach with GeLU activation, as the network is already unable to interpolate for $n=500$ training samples. We believe this is due to the fact that GeLU is close to a linear function around the origin (corresponding to our small initialization regime), making it harder to overfit noisy labels.

\subsection{Momentum based optimizers}\label{app:adam}

Our theoretical results hold for Gradient Flow, which is a first order approximation of typical gradient methods such as Gradient Descent (GD) or Stochastic Gradient Descent (SGD) \citep{li2019stochastic}. Yet, recent large models implementations typically use different, momentum based algorithms, such as Adam \citep{kingma2014adam} or AdamW \citep{loshchilov2017decoupled}. To illustrate the generality of the optimization threshold we proved in a specific theoretical setting, we consider in \cref{fig:geluadam} below the same experiments as in \cref{sec:expe}, with the exception that i) we used GeLU activation functions (as in \cref{app:gelu}) and ii) we minimized the training loss through the Adam optimizer, with pytorch default hyperparameters. 

We focus on Adam rather than AdamW here to follow the experimental setup of \citet{raventos2024pretraining} and because our focus is on implicit regularization, thus avoiding explicit regularization techniques.

\begin{figure}[htbp] 
\centering
\subfigure[Evolution of train loss.]{\includegraphics[width=0.45\linewidth, trim=0.4cm 0.5cm 0.4cm 0.4cm, clip]{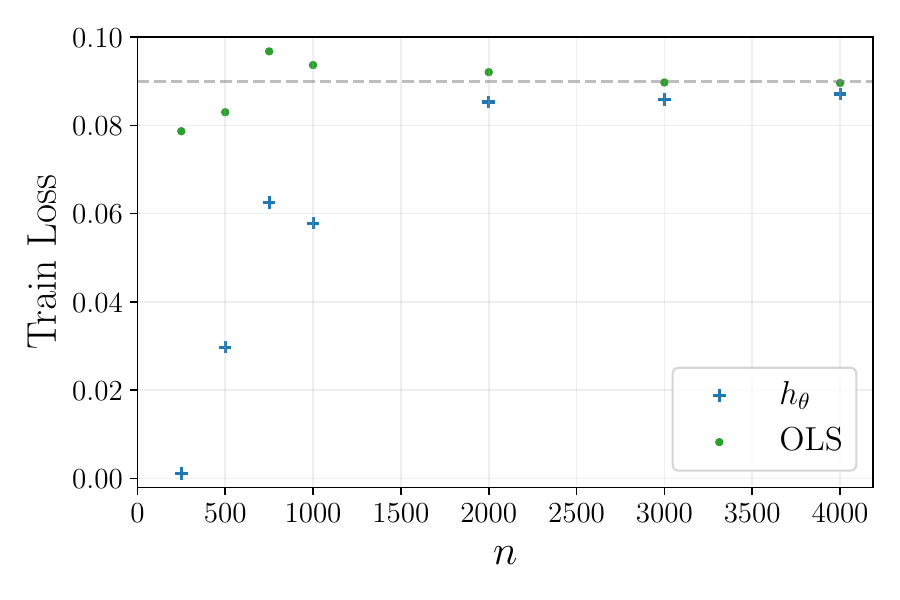}\label{fig:geluadam_train}}
%\begin{subfigure}{0.45\linewidth}
%\includegraphics[width=\linewidth, trim=0.4cm 0.5cm 0.4cm 0.4cm, clip]{train_loss_gelu_adam_d_5.pdf}
%\caption{\label{fig:geluadam_train}Evolution of train loss.}
%\end{subfigure}
\hfill% space out the images a bit
\subfigure[Evolution of test loss.]{\includegraphics[width=0.45\linewidth, trim=0.4cm 0.5cm 0.4cm 0.4cm, clip]{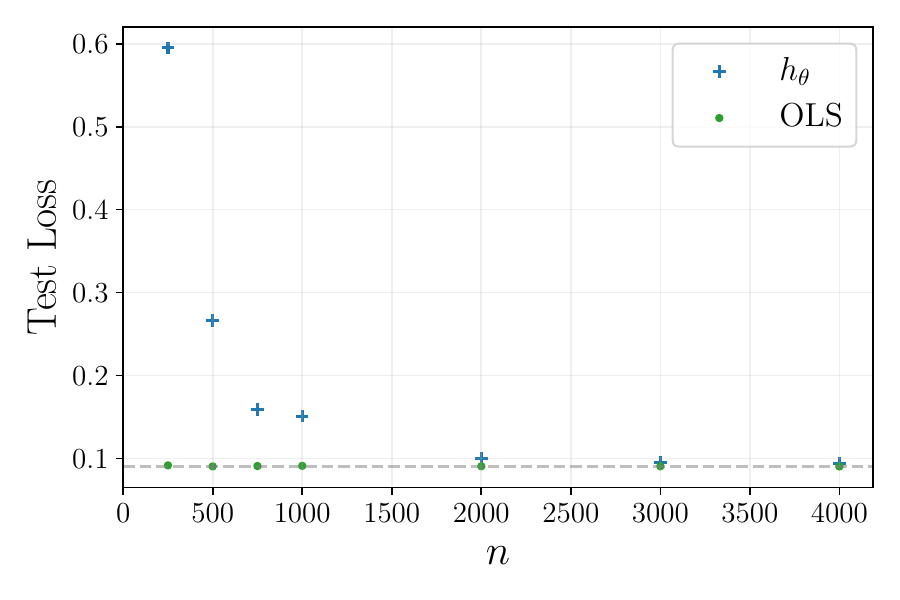}\label{fig:geluadam_test}}
%\begin{subfigure}{0.45\linewidth}
%      \includegraphics[width=\linewidth, trim=0.4cm 0.5cm 0.4cm 0.4cm, clip]{test_loss_gelu_adam_d_5.pdf}
%\caption{\label{fig:geluadam_test}Evolution of test loss.}
%\end{subfigure}
  \caption{\label{fig:geluadam}Evolution of both train and test losses at convergence with respect to the number of training samples, with GeLU activation and Adam optimizer.}% caption for whole figure
\end{figure}

The observed results are very similar to the ones of \cref{fig:gelu}, leading to similar conclusions than \cref{app:gelu} and the fact that considering Adam rather than SGD does not significantly change the final results.

\subsection{Stability of minima}\label{app:stable}

\begin{figure}[htbp]
\centering
\includegraphics[width=0.5\linewidth,  trim=0.4cm 0.5cm 0.4cm 0.4cm, clip]{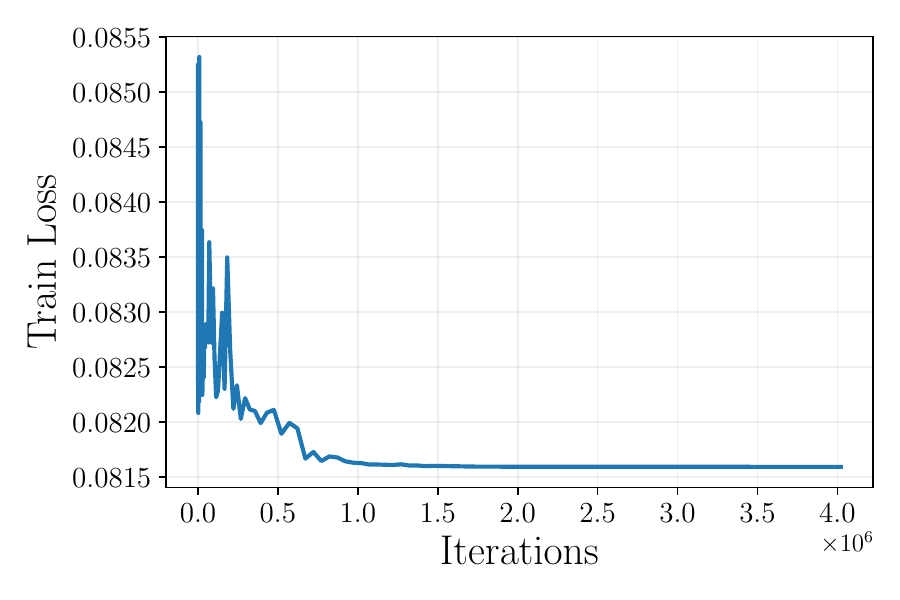}
\caption{\label{fig:stable}Evolution of training loss from warm restart with a decaying learning rate schedule ($n=8000, d=5$).}
\end{figure}

\citet{qiao2024stable} argue that the non-convergence of the estimator towards interpolation is due to the instability of global minima. More precisely they claim that for large stepsizes, gradient descent (GD) cannot stabilize around global minima of the loss for large values of $n$. We present an additional experiment in this section, illustrating that this non-convergence is not due to an instability of the convergence point of (S)GD, but to it being a stationary point of the loss as predicted by our theory.

For that, we consider a neural network initialized from the final point (warm restart) of training for $8\ 000$ samples in the experiment of \cref{fig:mainexpe}.\footnote{Another relevant experiment is to train from scratch (no warm restart) with a smaller learning rate. When running the experiment of \cref{app:expedetails} with a smaller learning rate $0.001$, we observe again that the parameters at convergence correspond to the OLS estimator.} We then continue training this network on the same training dataset, with a decaying learning rate schedule. Precisely, we start with a learning rate of $0.01$ as in the main experiment, and multiply the learning rate by $0.85$ every $50\ 000$ iterations of SGD, so that after $4\times 10^6$ iterations, the final learning rate is of order $10^{-8}$.

We observe on \cref{fig:stable} that the training loss does not change much from the point reached at the end of training with the large learning rate $0.01$. Indeed, the training loss was around $0.082$ at the end of this initial training, which is slightly less than the noise level ($0.09$). While there seems to be some stabilization happening at the beginning of this decaying schedule, the training loss seems to converge to slightly more than $0.0815$, confirming that the absence of interpolation is not due to an instability reason, but rather to a convergence towards a spurious stationary point of the loss.

\subsection{Influence of dimensionality}\label{app:dim}

\cref{thm:noconvergence} predicted an optimization threshold scaling in $\bigO{d^3\log d}$. However, the experiments of \cref{sec:expe} consider a fixed dimension ($d=5$), making it unclear how tight is this theoretical optimization threshold and whether a similar dependency in the dimension is observed in practice. To investigate further this dependency in the dimension, we present in this section experiments in the same setup described in \cref{app:expedetails}, with the sole exception that the dimension is larger, fixed to $d=10$.

\cref{fig:d10} illustrates the evolution of both the train and test losses as the number of training samples increases in this larger dimension setting. In that case, the optimization threshold seems much larger: interpolation stops happening around $n=10\ 000$ samples, and an estimation close to the OLS estimator really starts happening at much larger values of $n$, around $n=80\ 000$. 

\begin{figure}[htbp] 
\centering
\subfigure[Evolution of train loss.]{\includegraphics[width=0.45\linewidth, trim=0.4cm 0.5cm 0.4cm 0.4cm, clip]{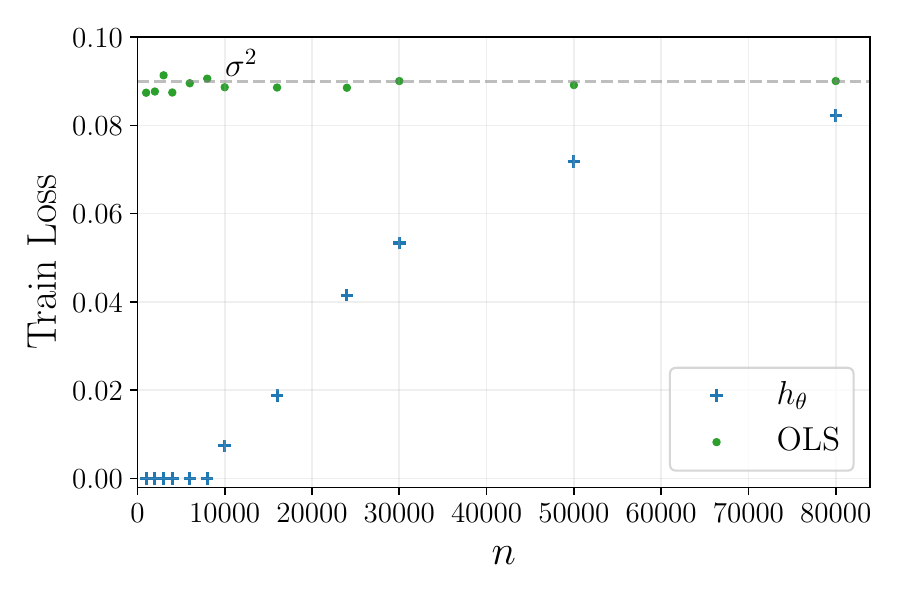}\label{fig:d10_train}}
%\begin{subfigure}{0.45\linewidth}
%\includegraphics[width=\linewidth, trim=0.4cm 0.5cm 0.4cm 0.4cm, clip]{train_loss_n_d_10.pdf}
%\caption{\label{fig:d10_train}Evolution of train loss.}
%\end{subfigure}
\hfill% space out the images a bit
\subfigure[Evolution of test loss.]{\includegraphics[width=0.45\linewidth, trim=0.4cm 0.5cm 0.4cm 0.4cm, clip]{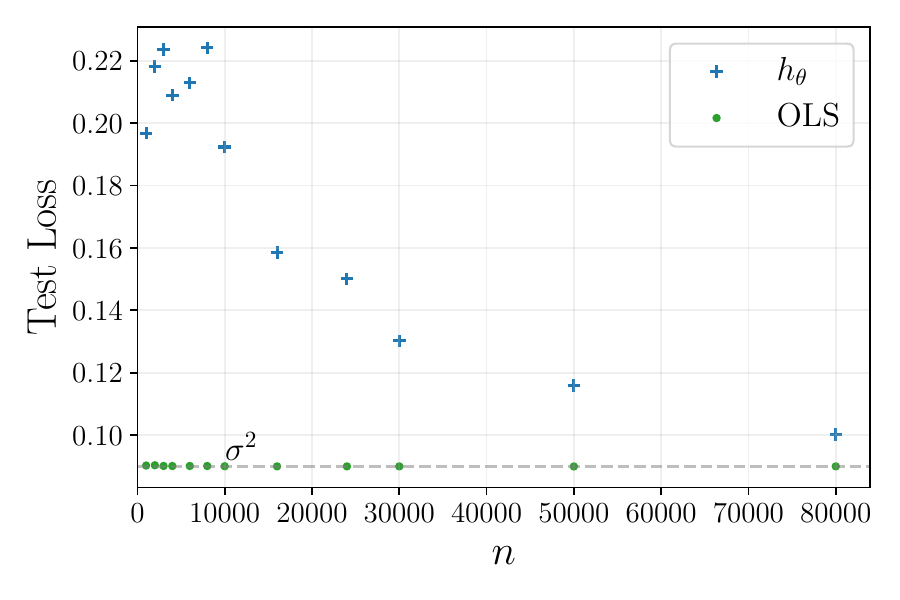}\label{fig:d10_test}}
%\begin{subfigure}{0.45\linewidth}
%      \includegraphics[width=\linewidth, trim=0.4cm 0.5cm 0.4cm 0.4cm, clip]{test_loss_n_d_10.pdf}
%\caption{\label{fig:d10_test}Evolution of test loss.}
%\end{subfigure}
  \caption{\label{fig:d10}Evolution of both train and test losses at convergence with respect to the number of training samples, with dimension $d=10$.}% caption for whole figure
\end{figure}

Comparing with the $d=5$ case, it thus seems that the point at which interpolation stops indeed seems to roughly scale in $d^3$. However, this scaling seems even larger for the point where the estimator corresponds to the OLS one. 
We believe that this discrepancy is due to the differences between our theoretical and experimental setups, and in particular to the fact that multiple intermediate neurons can grow in our experimental setup (see \textit{Limitations and generality} paragraph in \cref{sec:discussion}).

\subsection{5 ReLU teacher network}\label{app:relus}

This section presents an additional experiment with a more complex data model. More precisely, we consider the exact same setup than \cref{sec:expe} (described in \cref{app:expedetails}), with the difference that the labels $y_k$ are given by
\begin{equation*}
y_k = f^{\star}(x_k) + \eta_k,
\end{equation*}
where $f^{\star}$ is a $5$ ReLU network:
\begin{equation*}
f^{\star}(x_k) = \frac{1}{5}\sum_{i=1}^5 (x_k^{\top}\betastar_i)_+.
\end{equation*}
The parameters $\betastar_i$ are drawn i.i.d. at random following a standard Gaussian distribution. We use the exact same $\betastar_i$ across all the runs for different values of $n$. Also, $x_k$ and $\eta_k$ are generated in the same way as described in  in \cref{app:expedetails}.

\cref{fig:reluexpe} also presents the evolution of the train and test losses as the number of training samples varies. We observe a behavior similar to \cref{fig:mainexpe}, where interpolation is reached for small values of $n$, and is not reached anymore after some threshold $\nstar$. While the test loss is far from the optimal noise variance before this threshold, it then becomes close to it afterwards. 

Yet, this transition from interpolation to generalization is slower in the $5$ ReLU teacher case than in the linear one. Indeed, while interpolation does not happen anymore around $n=2000$ in both cases, much more samples (around $\nstar=17000$) are needed to have a simultaneously a training and testing loss close to the noise variance. 
These experiments suggest that the behavior predicted by \cref{thm:noconvergence} for a linear model also applies in more complex models such as the $5$ ReLU teacher, but that the transition from interpolation to generalization can happen more slowly or with more training samples depending on the setting. 

\begin{figure}[htbp] 
\centering
\subfigure[Evolution of train loss.]{\includegraphics[width=0.45\linewidth, trim=0.4cm 0.5cm 0.4cm 0.4cm, clip]{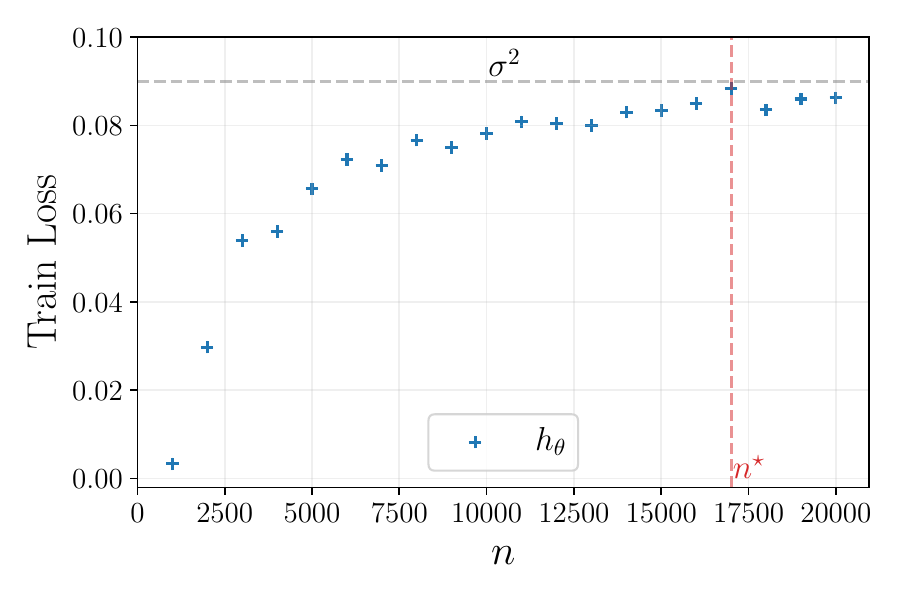}\label{fig:trainrelu}}
%\begin{subfigure}{0.45\linewidth}
%\includegraphics[width=\linewidth, trim=0.4cm 0.5cm 0.4cm 0.4cm, clip]{train_relus.pdf}
%\caption{\label{fig:trainrelu}Evolution of train loss.}
%\end{subfigure}
\hfill% space out the images a bit
\subfigure[Evolution of test loss.]{\includegraphics[width=0.45\linewidth, trim=0.4cm 0.5cm 0.4cm 0.4cm, clip]{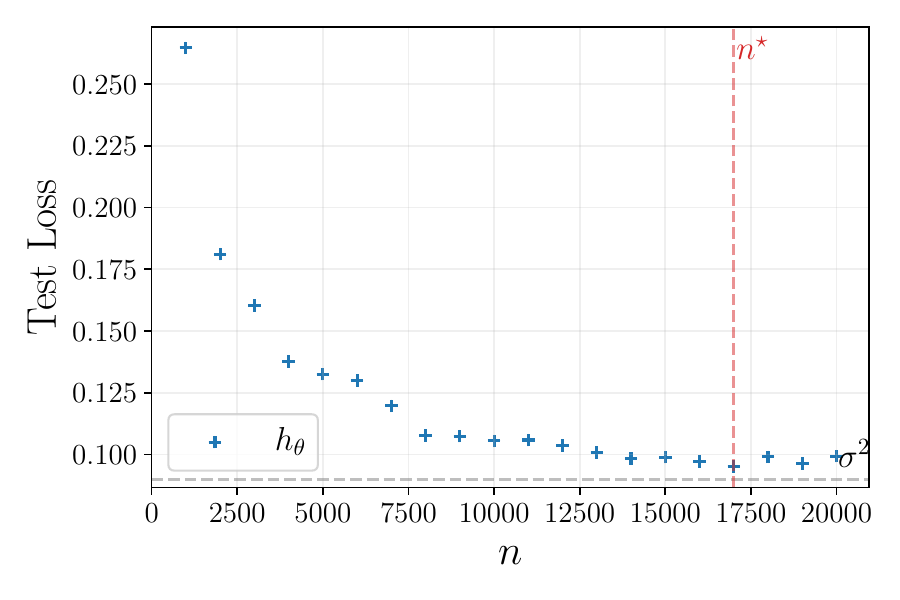}\label{fig:testrelu}}
%\begin{subfigure}{0.45\linewidth}
%      \includegraphics[width=\linewidth, trim=0.4cm 0.5cm 0.4cm 0.4cm, clip]{test_relus.pdf}
%\caption{\label{fig:testrelu}Evolution of test loss.}
%\end{subfigure}
  \caption{\label{fig:reluexpe}Evolution of both train and test losses at convergence with respect to the number of training samples with a $5$ ReLU teacher. $\sigma^2$ corresponds to the noise variance $\E[\eta^2]$.}% caption for whole figure
\end{figure}

The slight difference with \cref{fig:mainexpe} is that this optimization threshold here seems to appear for larger values of $n$. %TODO eventually say more

\section{Additional Discussions}

\paragraph{Double descent.}
Double descent originally refers to the fact that the test loss obtained at convergence does not behave monotonically with the number of model parameters. Recently, different types of double descent have been proposed \citep{nakkiran2021deep}. Notably, \citet{henighan2023superposition} study a data double descent, where the test loss follows a ``double descent'' shape when plotted against the number of training examples. The phenomenon we highlight here is different, as our test loss monotonically decreases with respect to the number of training points, as illustrated in both our experiments and \cref{fig:regimes}. 

However, the toy experiments of \citet{henighan2023superposition}  illustrate a similar phenomenon: for a sufficiently large number of training points, the training loss remains high while the model learns optimal features. It remains unclear though whether this high training loss stems from an underparametrized regime (i.e., the model lacks sufficient capacity to memorize the data) or if optimization fails to reach the empirical risk minimizer in their setup.

\paragraph{Feature learning and NTK regimes.} We distinguish in this work between feature learning and the NTK/lazy regime, as they involve fundamentally different training dynamics \citep[see][for an in-depth discussion]{chizat2019lazy}. Our work specifically focuses on the feature learning regime with small initialization, as indicated by our initialization choice (\cref{eq:init1}), where both the inner and outer layers scale as $\frac{1}{\sqrt{m}}$.

In contrast, in the NTK/lazy regime (corresponding to large initialization scales), theory predicts that interpolation should occur at convergence, which is contrary to our main result. However, empirically demonstrating this interpolation in our toy model (with large $n$) is computationally challenging, as it would require an extremely large number of parameters.

\section{Proof of \cref{thm:Dconcentration}}\label{sec:Dconcproof}

We recall \cref{thm:Dconcentration} below.

\Dconc*

\begin{proof}
We first show a similar result on the following expectation 
\begin{equation}\label{eq:tildeD}
\E_{\bX,\by}\left[\sup_{w\in\bS_{d-1}}\|D_n(w) - D(w)\|_2\right] = \bigO{\sqrt{\frac{d\log n}{n}\E[\|yx\|^2_2]}},
\end{equation}
where we recall $D_n(w) = \frac{1}{n}\sum_{k=1}^n \iind{w^{\top}x_k>0}y_k x_k$. We bound this expectation using typical uniform bound techniques for empirical processes.

A symmetrization argument allows to show, for i.i.d. Rademacher random variables $\varepsilon_k \in\{-1,1\}$ \citep[see][Lemma 2.3.1.]{van1997weak}:
\begin{align}\label{eq:symm1}
\E_{\bX,\by}\left[\sup_{w\in\bS_{d-1}}\|D_n(w) - D(w)\|_2\right] & \leq 2\ \E_{\bX,\by}\left[\E_{\pmb{\varepsilon}}\left[\sup_{w\in\bS_{d-1}}\left\|\frac{1}{n}\sum_{k=1}^n \iind{w^{\top}x_k>0} \varepsilon_k y_k x_k \right\|_2\bigm| \bX,\by\right] \right].
\end{align}

From there, it remains to bound for any value of $\bX,\by$ the conditioned expectation $\E_{\pmb{\varepsilon}}[\cdot \mid \bX,\by]$. We consider in the following a fixed value of $\bX,\by$. Note that the vector $\sum_{k=1}^n \iind{w^{\top}x_k>0} \varepsilon_k y_k x_k$, actually only depends on $w$ in the value of the vector $(\iind{w^{\top}x_k>0})_{k\in[n]}$. Define
\begin{equation}\label{eq:supeq}
\cA(\bX,\by) = \left\{ (\iind{w^{\top}x_k>0})_{k\in[n]} \mid w\in\R^d\right\}.
\end{equation}
We thus have the equality:
\begin{equation*}
\sup_{w\in\bS_{d-1}}\left\|\frac{1}{n}\sum_{k=1}^n \iind{w^{\top}x_k>0} \varepsilon_k y_k x_k \right\|_2 = \sup_{\bf{u}\in\cA(\bX,\by)}\left\|\frac{1}{n}\sum_{k=1}^n u_k \varepsilon_k y_k x_k \right\|_2.
\end{equation*}
Moreover, classical geometric arguments \citep[see e.g.][Theorem 1]{cover1965geometrical} allow to bound $\card{\cA(\bX,\by)}$ for any $\bX,\by$:
\begin{align}
\card{\cA(\bX,\by)} & \leq 2 \sum_{k=0}^{d-1} \binom{n-1}{k}\notag\\
& =\bigO{n^d}. \label{eq:nsectors}
\end{align}

From there, we will bound individually for each $\bf{u}\in\cA(\bX,\by)$ the norm of $\frac{1}{n}\sum_{k=1}^n u_k \varepsilon_k y_k x_k$ and use a union bound argument.

Let $\bf{u}\in\cA(\bX,\by)$. Define $Z\in\R^{d\times n}$ the matrix whose column $k$ is given by $Z^{(k)} = \frac{1}{n}u_k y_k x_k$. Then note that $\frac{1}{n}\sum_{k=1}^n u_k \varepsilon_k y_k x_k = Z \pmb{\varepsilon}$. Hanson-Wright inequality then allows to bound the following probability \citep[see][Theorem 2.1]{rudelson2013hanson} for some universal constant $c>0$ and any $t\geq 0$:
\begin{align*}
\bP_{\pmb{\varepsilon}}\left( \big| \|Z\pmb{\varepsilon}\|_2 - \|Z\|_{\Fro}   \big| > t \bigm| \bX,\by\right) & \leq 2 e^{-\frac{ct^2}{\|Z\|_{\op}^2}},
\end{align*}
where $\|Z\|_{\Fro}$ and $\|Z\|_{\op}$ respectively denote the Frobenius and operator norm of $Z$. In particular, noting that $\|Z\|_{\op}\leq \|Z\|_{\Fro}$, this last equation implies that for any $t>0$
\begin{equation}\label{eq:HW1}
\bP_{\pmb{\varepsilon}}\left( \|Z\pmb{\varepsilon}\|_2 > (1+t)\|Z\|_{\Fro}  \bigm| \bX,\by\right)  \leq 2 e^{-c t^2}.
\end{equation}
Moreover, note that
\begin{align*}
\|Z\|_{\Fro} & = \sqrt{\sum_{k=1}^n \|Z^{(k)}\|_2^2}\\
& = \sqrt{\sum_{k=1}^n \frac{1}{n^2}\| u_k y_k x_k\|_2^2}
\leq \sqrt{\frac{1}{n}C(Z)},
\end{align*}
where $C(Z) = \frac{1}{n}\sum_{k=1}^n \|y_k x_k\|_2^2$ does not depend on $\bf{u}$.

Rewriting \cref{eq:HW1} with this last inequality, and with $\delta = 2e^{-ct^2}$, we finally have for each $\bf{u}\in\cA(\bX,\by)$:
\begin{equation*}
\bP_{\pmb{\varepsilon}}\left( \|Z\pmb{\varepsilon}\|_2 > (1+\sqrt{\frac{1}{c}\ln(2/\delta)})\sqrt{\frac{C(Z)}{n}}  \bigm| \bX,\by\right)  \leq \delta.
\end{equation*}
Considering a union bound over all the $\bf{u}\in\cA(\bX,\by)$, we have for some universal constant $c'>0$, thanks to \cref{eq:nsectors}:
\begin{equation}\label{eq:HWunion}
\bP_{\pmb{\varepsilon}}\left( \exists \mathbf{u}\in \cA(\bX,\by), \|Z\pmb{\varepsilon}\|_2 > \left(1+\sqrt{c'(\log(2/\delta)+d\log(n)+1)}\right)\sqrt{\frac{C(Z)}{n}}  \bigm| \bX,\by\right) \leq \delta.
\end{equation}
Moreover, conditioned on $\bX,\by$,  $\|Z\pmb{\varepsilon}\|_2$ is almost surely bounded by $\sqrt{n}\|Z\|_{\op}$, and so by $\sqrt{C(Z)}$. A direct bound on the expectation can then be derived using \cref{eq:HWunion} with $\delta = n^{-d}$:
\begin{align*}
\E_{\pmb{\varepsilon}}\left[ \sup_{\bf{u}\in\cA(\bX,\by)}\left\|\frac{1}{n}\sum_{k=1}^n u_k \varepsilon_k y_k x_k \right\|_2\right] & =\bigO{\sqrt{\frac{d \log n}{n}+n^{-d}}}\sqrt{C(Z)}. 
\end{align*}
Wrapping up with \cref{eq:supeq} and \cref{eq:symm1} then allows to derive \cref{eq:tildeD},
\begin{align*}
\E_{\bX,\by}\left[\sup_{w\in\bS_{d-1}}\|D_n(w) - D(w)\|_2\right] & = \bigO{\sqrt{\frac{d \log n}{n} }}\E_{\bX,\by}\left[\sqrt{C(Z)}\right]\\
& \leq \bigO{\sqrt{\frac{d \log n}{n} }}\sqrt{\E_{\bX,\by}\left[C(Z)\right]}\\
& = \bigO{\sqrt{\frac{d \log n}{n} }}\sqrt{\E\left[\|yx\|_2^2\right]}.
\end{align*}
\cref{lemma:supeqsectors} below then allows to conclude.
\begin{restatable}{lem}{Supeqsectors}\label{lemma:supeqsectors}
If the marginal law of $x$ is continuous with respect to the Lebesgue measure, then almost surely:
\begin{equation*}
\sup_{w\in\bS_{d-1}}\sup_{D_n\in\D_n(w)}\|D_n - D(w)\|_2 \leq \sup_{w\in\bS_{d-1}}\|D_n(w) - D(w)\|_2,
\end{equation*}
where $D_n(w) = \frac{1}{n}\sum_{k=1}^n \iind{w^{\top}x_k>0}y_k x_k$.
\end{restatable}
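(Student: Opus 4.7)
The plan is to show that every element of $\D_n(w)$ can be realized as $D_n(w^{(m)})$ for some sequence $w^{(m)}$ converging to $w$ on the sphere, and then conclude using continuity of $w \mapsto D(w)$.

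First, I would reduce to the extreme points of $\D_n(w)$. The set $\D_n(w)$ is the image of the box $\prod_{k=1}^n \partial\sigma(x_k^\top w) \subseteq [0,1]^n$ under an affine map, and is therefore a compact convex polytope; since $D_n \mapsto \|D_n - D(w)\|_2$ is convex, its supremum over $\D_n(w)$ is attained at a vertex, for which $\eta_k \in \{0,1\}$ for every $k$ in the boundary set $K(w) := \{k : x_k^\top w = 0\}$ (for $k \notin K(w)$, $\eta_k$ is automatically singleton-valued).

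Second, I would exploit the continuity of the law of $x_1$ to put the samples in general position. For any fixed subset $S \subseteq [n]$ with $|S| \leq d$, the configurations in which $\{x_k\}_{k \in S}$ is linearly dependent form a Lebesgue-null algebraic subvariety of $(\R^d)^{|S|}$, so this occurs with probability zero. A union bound over the finitely many such $S$ yields, almost surely, simultaneous linear independence of $\{x_k\}_{k \in S}$ for every subset of size at most $d$. In particular $|K(w)| \leq d-1$ and $\{x_k\}_{k \in K(w)}$ is linearly independent for every $w \in \bS_{d-1}$, so for any target vertex $(\eta_k)_{k \in K(w)} \in \{0,1\}^{K(w)}$ there exists $v \in \R^d$ satisfying $v^\top x_k = 2\eta_k - 1$ for all $k \in K(w)$.

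Third, I set $w^{(m)} := (w + v/m)/\|w + v/m\|$. For $k \notin K(w)$, continuity of the inner product yields $\sign((w^{(m)})^\top x_k) = \sign(w^\top x_k)$ for $m$ large, while for $k \in K(w)$ we have $(w+v/m)^\top x_k = (2\eta_k-1)/m$, so $\iind{(w^{(m)})^\top x_k > 0} = \eta_k$. Hence $D_n(w^{(m)}) = D_n$ exactly for all large $m$. Continuity of $D$ follows from dominated convergence combined with $\bP(w_0^\top x_1 = 0) = 0$ for every $w_0 \neq 0$ (with envelope $\|y_1 x_1\|$, which is integrable under the hypotheses of \cref{thm:Dconcentration}), giving $D(w^{(m)}) \to D(w)$. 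Thus
\begin{equation*}
\|D_n - D(w)\|_2 = \lim_{m \to \infty} \|D_n(w^{(m)}) - D(w^{(m)})\|_2 \leq \sup_{w' \in \bS_{d-1}} \|D_n(w') - D(w')\|_2,
\end{equation*}
and taking the supremum over $w$ and over $D_n \in \D_n(w)$ finishes the proof. The main obstacle is the general position step: linear independence of $\{x_k\}_{k \in K(w)}$ must hold uniformly in $w$, which is what forces the Lebesgue-continuity hypothesis on $\mu_X$; the key observation making this tractable is that $K(w)$ takes only finitely many possible values, so a single union bound over subsets of $[n]$ suffices.
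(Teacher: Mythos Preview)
Your proof is correct. Both your argument and the paper's hinge on the same two ingredients: continuity of $D$ (from the absolutely continuous marginal of $x$) and the fact that any $D_n\in\D_n(w)$ can be realized via values $D_n(w')$ for $w'$ close to $w$ on the sphere. Where you differ is in how the second point is obtained. The paper invokes the Clarke-subdifferential characterization $\D_n(w)=\liminf_{\varepsilon\to 0}\convex\{D_n(w'):w'\in\cS,\ \|w-w'\|\leq\varepsilon\}$ to write any $D_n$ as a convex combination $\sum_i\eta_iD_n(w_i)$ with $w_i\to w$, and then passes the norm through by the triangle inequality and continuity of $D$. You instead first reduce to extreme points of $\D_n(w)$ by convexity of the norm, and then explicitly construct a single perturbation direction $v$ (via general position: $|K(w)|\leq d-1$ and $\{x_k\}_{k\in K(w)}$ linearly independent a.s.) so that $D_n(w^{(m)})$ equals the target vertex exactly. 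Your route is more self-contained and constructive---it does not rely on subdifferential calculus---at the cost of making the general-position argument explicit; the paper's route is shorter but the footnoted claim that $\D_n(w)$ coincides with the convex hull of nearby values tacitly requires the same general-position fact (that every sign pattern on $K(w)$ is achievable by some nearby $w'$), so in terms of actual content the two arguments are close cousins.
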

\end{proof}
\subsection{Proof of \cref{lemma:supeqsectors}}
%\Supeqsectors*
%\begin{proof}
First observe that if the marginal law of $x$ is continuous, then $D$ is continuous with respect to $w$.

Consider any $w\in\bS_{d-1}$. We recall that the set $\D_n(w)$ is defined as
\begin{equation*}
\D_n(w) = \Big\{ -\frac{1}{n}\sum_{k=1}^n \eta_k y_k x_k \ \Big|\ \forall k\in[n], \eta_k \begin{cases} \in [0,1] \text{ if }\langle w_j^t,x_k\rangle =0 \\ =1 \text{  if }\langle w_j^t,x_k\rangle>0 \\
=0 \text{ otherwise}
\end{cases}\Big\}.
\end{equation*}
If all the values $w^{\top}x_k$ are non-zero, then $\D_n(w)$ is the singleton given by $D_n(w)$ and thus 
\begin{equation*}
\sup_{D_n\in\D_n(w)}\|D_n - D(w)\|_2 = \|D_n(w) - D(w)\|_2.
\end{equation*}

Otherwise, if $w^{\top}x_k=0$ for at least one $k$, observe that\footnote{This observation directly follows from the definition of the Clarke subdifferential.}
\begin{equation*}
\D_n(w) = \liminf_{\substack{\varepsilon\to 0\\\varepsilon>0}}\convex(\{D_n(w') \mid w' \in \cS \text{ and } \|w-w'\|_2\leq \varepsilon\}),
\end{equation*}
where
\begin{equation*}
\cS = \{w' \in \bS_{d-1} \mid w'^{\top}x_k\neq 0 \text{ for all }k\}.
\end{equation*}
In other words, for any $D_n\in\D_n(w)$, $w$ can be approached arbitrarily closed by vectors $w_i\in\cS$ such that for some convex combination $\pmb\eta$,
\begin{equation*}
D_n = \sum_{i}\eta_i D_n(w_i).
\end{equation*}
From then, it comes that
\begin{align*}
\|D_n - D(w)\| & \leq \sum_i \eta_i \|D_n(w_i)-D(w)\|\\
&\leq \sum_i \eta_i (\|D_n(w_i)-D(w_i)\|+\|D(w_i)-D(w)\|)\\
& \leq \sup_{w'\in\cS}\|D_n(w')-D(w')\| + \sum_i \eta_i \|D(w_i)-D(w)\|.
\end{align*}
Since $D$ is continuous and the $w_i$ can be chosen arbitrarily close to $w$, the right sum can be chosen arbitrarily close to $0$.

In particular, we have shown that for any $D_n\in\D_n(w)$,
\begin{equation*}
\|D_n - D(w)\| \leq \sup_{w'\in\cS}\|D_n(w')-D(w')\|.
\end{equation*}
This concludes the proof of \cref{lemma:supeqsectors}.\qed
%\end{proof}

\section{Probability tail bound version of \cref{thm:Dconcentration}}\label{app:tailbound}

While \cref{thm:Dconcentration} bounds the maximal deviation of $D_n - D(w)$ in expectation, a high probability tail bound is also possible, as given by \cref{thm:Dtailbound} below.

\begin{thm}\label{thm:Dtailbound}
If the marginal law of $x$ is continuous with respect to the Lebesgue measure, then for any $n\in\N$ and $M\geq \E\left[\left\|yx\right\|^2\right]$,
\begin{multline*}
\bP_{\bX,\by}\left[\sup_{w\in\bS_{d-1}}\sup_{D_n\in\D_n(w)}\|D_n - D(w)\|_2>4\left(1+\sqrt{c'(\log(2/\delta)+d\log(n)+1)}\right)\sqrt{\frac{M}{n}}\right]\\ 
\leq\frac{4}{3}\delta + \frac{4}{3}\bP_{\bX,\by}\left[\frac{1}{n}\sum_{k=1}^n \|y_k x_k\|^2 > M \right].
\end{multline*}
\end{thm}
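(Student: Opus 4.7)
The strategy is to lift the expectation-level argument behind Theorem \ref{thm:Dconcentration} to a tail bound, keeping the same structural steps and handling the randomness of $C(Z) = \frac{1}{n}\sum_k \|y_k x_k\|_2^2$ by conditioning on $E_M = \{C(Z) \leq M\}$. First, Lemma \ref{lemma:supeqsectors} reduces the double supremum in the statement to $\sup_w \|D_n(w) - D(w)\|_2$, where $D_n(w) = \frac{1}{n}\sum_k \iind{w^\top x_k > 0} y_k x_k$, so it is enough to prove the tail bound for this single supremum.

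Next comes the symmetrization step, which produces the constants $4$ and $\tfrac{4}{3}$ that appear in the statement. Introducing an independent data copy $(\bX', \by')$ and a Rademacher sequence $(\varepsilon_k)_k$, a Gin\'e--Zinn style conditioning argument shows that if the per-point tail satisfies $\sup_w \bP[\|D'_n(w) - D(w)\|_2 > a] \leq \tfrac{1}{4}$, then
\begin{equation*}
\bP\Big[\sup_w \|D_n(w) - D(w)\|_2 > 4a\Big] \leq \tfrac{4}{3}\,\bP\Big[\sup_w \Big\|\tfrac{1}{n}\textstyle\sum_{k=1}^n \varepsilon_k \iind{w^\top x_k > 0}\, y_k x_k\Big\|_2 > a\Big].
\end{equation*}
The per-point bound follows from Chebyshev, using $\E[\|D'_n(w) - D(w)\|_2^2] \leq \E[\|yx\|_2^2]/n \leq M/n$ together with the fact that the threshold $a$ chosen below satisfies $a \geq 2\sqrt{M/n}$.

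Third, I condition on $(\bX, \by)$ and restrict to the event $E_M$. On $E_M$, every activation pattern $\mathbf{u} \in \cA(\bX, \by)$ gives a matrix $Z$ with columns $\frac{1}{n} u_k y_k x_k$ satisfying $\|Z\|_{\Fro} \leq \sqrt{M/n}$. Applying Hanson--Wright exactly as in the proof of Theorem \ref{thm:Dconcentration}, together with the union bound over the $\mathcal{O}(n^d)$ elements of $\cA(\bX, \by)$, yields on $E_M$
\begin{equation*}
\bP_{\pmb\varepsilon}\Big[\sup_{\mathbf{u} \in \cA(\bX, \by)} \|Z \pmb\varepsilon\|_2 > a \,\Big|\, \bX, \by\Big] \leq \delta, \quad a = \big(1 + \sqrt{c'(\log(2/\delta) + d\log n + 1)}\big)\sqrt{M/n}.
\end{equation*}
Integrating over $(\bX, \by)$ and using the decomposition $\bP[\sup \|Z\pmb\varepsilon\|_2 > a] \leq \delta + \bP[\bar E_M]$, then multiplying by the symmetrization factor $\tfrac{4}{3}$, delivers the claimed inequality, since $4a$ matches the threshold in the statement.

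The main obstacle is the second step: the expectation-level proof relied on a transparent factor-two Rademacher symmetrization, whereas the tail version requires the Gin\'e--Zinn conditioning lemma, and the exact constants $4$ in front of the threshold and $\tfrac{4}{3}$ in front of the probability are dictated by the choice $\beta = \tfrac{1}{4}$ for the per-point tail level. Verifying this uniform tail bound is the delicate ingredient, but follows from the second-moment computation above.
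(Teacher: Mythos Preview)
Your proposal is correct and follows essentially the same route as the paper: probabilistic symmetrization (the paper invokes van der Vaart--Wellner, Lemma~2.3.7, which is exactly the Gin\'e--Zinn conditioning argument you describe), then the Hanson--Wright plus union-bound step reused verbatim from the proof of Theorem~\ref{thm:Dconcentration} conditionally on $\{C(Z)\le M\}$, Chebyshev to verify the per-point tail condition (equivalently $\beta_n(t)\ge 3/4$), and Lemma~\ref{lemma:supeqsectors} to pass from $D_n(w)$ to the full subdifferential $\D_n(w)$. The only cosmetic difference is that you invoke Lemma~\ref{lemma:supeqsectors} at the start rather than at the end.
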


\begin{proof}
The proof follows the same lines as the proof of \cref{thm:Dconcentration} in \cref{sec:Dconcproof}. In particular, we first want to bound in probability the term $\sup_{w\in\bS_{d-1}}\|D_n(w) - D(w)\|_2$. To this end, a probabilistic symmetrization argument \citep[][Lemma 2.3.7.]{van1997weak} yields
for any $t>0$
\begin{equation}\label{eq:probsymm}
\beta_n(y) \bP_{\bX,\by}\left[\sup_{w\in\bS_{d-1}}\|D_n(w) - D(w)\|_2>t\right]\leq 2\bP_{\bX,\by}\left[ \bP_{\pmb{\varepsilon}}\left[ \sup_{w\in\bS_{d-1}} \left\|\frac{1}{n} \sum_{k=1}^n \varepsilon_k \iind{w^{\top}x_k>0} y_k x_k\right\|  > \frac{t}{4} \bigm| \bX,\by\right] \right],
\end{equation}
where $\beta_n(t)=1-\frac{4n}{t^2}\sup_{w,w'\in\bS_{d-1}} \var(\iind{w^{\top}x>0}yw'^{\top}x)$. In particular here, $\beta_n(t)\geq 1 -\frac{4n}{t^2}\E\left[\left\|yx\right\|^2\right]$. Moreover, we already showed \cref{eq:HWunion} in the proof of \cref{thm:Dconcentration}, which states
\begin{equation*}\label{eq:HWunion2}
\bP_{\pmb{\varepsilon}}\left( \sup_{w\in\bS_{d-1}}\left\|\frac{1}{n}\sum_{k=1}^n \varepsilon_k \iind{w^{\top}x_k>0} y_k x_k\right\| > \left(1+\sqrt{c'(\log(2/\delta)+d\log(n)+1)}\right)\sqrt{\frac{C(\bX,\by)}{n}}  \bigm| \bX,\by\right) \leq \delta,
\end{equation*}
where $C(\bX,\by)=\frac{1}{n}\sum_{k=1}^n \|y_k x_k\|^2$.

\cref{eq:probsymm} then rewrites for any $M>0$:
\begin{multline*}
\bP_{\bX,\by}\left[\sup_{w\in\bS_{d-1}}\|D_n(w) - D(w)\|_2>4\left(1+\sqrt{c'(\log(2/\delta)+d\log(n)+1)}\right)\sqrt{\frac{M}{n}}\right]\\ \leq \beta_n(t)^{-1} \left(\delta + \bP_{\bX,\by}\left[\frac{1}{n}\sum_{k=1}^n \|y_k x_k\|^2 > M \right]\right),
\end{multline*}
with
\begin{gather*}
t = 4\left(1+\sqrt{c'(\log(2/\delta)+d\log(n)+1)}\right)\sqrt{\frac{M}{n}}.
\end{gather*}
Note that for any $M\geq \E\left[\left\|yx\right\|^2\right]$, $\beta_n(t)\geq \frac{3}{4}$, which implies 
\begin{multline*}
\bP_{\bX,\by}\left[\sup_{w\in\bS_{d-1}}\|D_n(w) - D(w)\|_2>4\left(1+\sqrt{c'(\log(2/\delta)+d\log(n)+1)}\right)\sqrt{\frac{M}{n}}\right]\\ 
\leq\frac{4}{3}\delta + \frac{4}{3}\bP_{\bX,\by}\left[\frac{1}{n}\sum_{k=1}^n \|y_k x_k\|^2 > M \right].
\end{multline*}
\cref{thm:Dtailbound} then follows, thanks to \cref{lemma:supeqsectors}.
\end{proof}
\cref{coro:Dtailbound} below provides a simpler tail bound, directly applying \cref{lemma:supeqsectors} with Chebyshev's inequality to bound $\bP_{\bX,\by}\left[\frac{1}{n}\sum_{k=1}^n \|y_k x_k\|^2 > M \right]$. Stronger tail bounds can be provided with specific conditions on the random variables $x_k$ and $y_k$, but the one of \cref{coro:Dtailbound} is enough for our use in \cref{sec:noconvergence}.

\begin{coro}\label{coro:Dtailbound}
Assume the marginal law of $x$ is continuous with respect to the Lebesgue measure. Moreover, assume $\|xy\|$ admits a fourth moment. Then
\begin{multline*}
\bP_{\bX,\by}\left[\sup_{w\in\bS_{d-1}}\sup_{D_n\in\D_n(w)}\|D_n - D(w)\|_2>4\left(1+\sqrt{c'(\log(2/\delta)+d\log(n)+1)}\right)\sqrt{\frac{2\E[\|y x\|^2]}{n}}\right]\\ 
\leq\frac{4}{3}\delta + \frac{4}{3} \frac{\E[\|y x\|^4]}{n\E[\|y x\|^2]^2}.
\end{multline*}
\end{coro}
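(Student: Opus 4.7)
The plan is to deduce the corollary as a direct consequence of \cref{thm:Dtailbound} by choosing the threshold $M$ in the theorem appropriately and controlling the resulting deviation term via Chebyshev's inequality. Specifically, I would set $M = 2\,\E[\|yx\|^2]$, which clearly satisfies the hypothesis $M \geq \E[\|yx\|^2]$ required by \cref{thm:Dtailbound}. Substituting this value of $M$ into the theorem immediately yields the $\sqrt{2\E[\|yx\|^2]/n}$ factor that appears inside the probability of the corollary's statement.

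It then remains to bound the tail term $\bP_{\bX,\by}\!\left[\tfrac{1}{n}\sum_{k=1}^n \|y_k x_k\|^2 > M\right]$ that appears on the right-hand side of \cref{thm:Dtailbound}. Writing $S_n = \tfrac{1}{n}\sum_{k=1}^n \|y_k x_k\|^2$, we have $\E[S_n] = \E[\|yx\|^2]$, and by independence of the samples $\var(S_n) = \tfrac{1}{n}\var(\|yx\|^2) \leq \tfrac{1}{n}\E[\|yx\|^4]$, where the fourth-moment assumption guarantees that the variance is finite. Chebyshev's inequality applied with the deviation threshold $M - \E[S_n] = \E[\|yx\|^2]$ then gives
\begin{equation*}
\bP\!\left[S_n > 2\E[\|yx\|^2]\right] \leq \frac{\var(S_n)}{\E[\|yx\|^2]^2} \leq \frac{\E[\|yx\|^4]}{n\,\E[\|yx\|^2]^2}.
\end{equation*}

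Plugging this bound back into the conclusion of \cref{thm:Dtailbound} yields exactly the right-hand side $\tfrac{4}{3}\delta + \tfrac{4}{3}\tfrac{\E[\|yx\|^4]}{n\,\E[\|yx\|^2]^2}$ stated in the corollary. There is no real obstacle here: the entire argument is a specialization of \cref{thm:Dtailbound} combined with a one-line second-moment tail bound, and the continuity assumption on the marginal of $x$ is only needed so that \cref{lemma:supeqsectors} (already invoked inside the proof of \cref{thm:Dtailbound}) applies and allows one to pass from $D_n(w)$ to arbitrary $D_n \in \D_n(w,\mathbf{0})$.
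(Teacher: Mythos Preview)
Your proof is correct and matches the paper's own argument exactly: the paper's proof is the single sentence ``This is a direct consequence of \cref{thm:Dtailbound}, using Chebyshev's inequality to bound $\bP_{\bX,\by}\left[\frac{1}{n}\sum_{k=1}^n \|y_k x_k\|^2 > M \right]$,'' and you have simply spelled out the choice $M=2\E[\|yx\|^2]$ and the Chebyshev step in full detail.
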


\begin{proof}
This is a direct consequence of \cref{thm:Dtailbound}, using Chebyshev's inequality to bound $\bP_{\bX,\by}\left[\frac{1}{n}\sum_{k=1}^n \|y_k x_k\|^2 > M \right]$.
\end{proof}

\section{Proof of \cref{prop:Dconcentration}}\label{sec:proofDconcentration}

In the following proof, we define the following subsets of the unit sphere in dimension $d$ for any $\delta>0$:
\begin{gather*}
\cH = \{w\in\bS_{d-1} \mid D(w) \text{ satisfies \cref{eq:extremeinf}}\},\\
\cH(\delta) = D^{-1}\left(\bigcup_{w\in\cH}B(D(w),\delta)\right)\cap \bS_{d-1},\\
\Delta(\delta) = \min(1,\inf_{w\in\bS_{d-1}\setminus \cH(\delta)} \min\left(\|\frac{D(w)}{\|D(w)\|}-w\|, \|\frac{D(w)}{\|D(w)\|}+w\|, \|D(w)\|\right)).
\end{gather*}
Here, $B(D(w),\delta)$ denotes the open ball of radius $\delta$, centered in $D(w)$.

\begin{proof}
Since the marginal distribution of $X$ is continuous, the function $D:w\mapsto D(w)$ is continuous. In particular for any $\delta>0$, the infimum defining $\Delta(\delta)$ is reached, so that $\Delta(\delta)>0$ by definition of~$\cH$. In the following, we let $\delta=\frac{\varepsilon}{2}$. Thanks to \cref{coro:Dtailbound}, with probability at least $1-\bigO[\mu]{\frac{1}{n}}$, 
\begin{equation*}
\sup_{w\in\bS_{d-1}}\sup_{D_n\in\D_n(w)}\|D_n - D(w)\|_2 = \bigO[\mu]{\sqrt{\frac{d\log n}{n}}}.
\end{equation*}
In particular, we can choose $\nstar(\varepsilon) = \bigO[\mu]{ \frac{d\log\left(\frac{d}{\min(\Delta(\frac{\varepsilon}{2})^4,\varepsilon^2)}\right)}\min(\Delta(\frac{\varepsilon}{2})^2,\varepsilon)}$ large enough so that for any $n\geq\nstar(\varepsilon)$, with probability at least $1-\bigO[\mu]{\frac{1}{n}}$:
\begin{equation}\label{eq:concDnprop}
\sup_{w\in\bS_{d-1}}\sup_{D_n\in\D_n(w)}\|D_n - D(w)\|_2 \leq \frac{1}{2}\min\left(\Delta(\frac{\varepsilon}{2})^2, \varepsilon\right).
\end{equation}
We assume in the following of the proof that \cref{eq:concDnprop} holds.

Consider an extremal vector $D_n$ of the finite data $(\bX,\by)$. By definition, there is some $w\in\bS_{d-1}$ such that $D\in\cD_n(w)$ and either
\begin{enumerate}
\item $D_n = \mathbf{0}$,
\item or $A_n(D_n)=A_n(w)$,
\item or $A_n(D_n)=-A_n(w)$.
\end{enumerate}
In the first case, \cref{eq:concDnprop} yields that $\|D(w)\|_2<\Delta(\frac{\varepsilon}{2})$. Necessarily, by definition of $\Delta(\frac{\varepsilon}{2})$, $w\in\cH(\frac{\varepsilon}{2})$. This means by definition of $\cH(\frac{\varepsilon}{2})$ that there exists $D^{\star} \in\R^d$ satisfying \cref{eq:extremeinf}, such that
\begin{equation*}
\|D(w)-D^{\star}\|_2 \leq \frac{\varepsilon}{2}.
\end{equation*}
In particular, using \cref{eq:concDnprop} again yields $\|D_n-D^{\star}\|_2 \leq\varepsilon$.

\medskip

In the second case ($A_n(D_n)=A_n(w)$), we can assume $D_n\neq 0$. In that case, as $\frac{D_n}{\|D_n\|}$ have the same activations, $\cD_n(w)=\cD_n(\frac{D_n}{\|D_n\|})$, i.e., we can assume without loss of generality that $w=\frac{D_n}{\|D_n\|}$ here. Similarly to the first case, if $w\in\cH(\frac{\varepsilon}{2})$, then there exists $D^{\star} \in\R^d$ satisfying \cref{eq:extremeinf}, such that $\|D_n-D^{\star}\|_2 \leq\varepsilon$.

Let us show by contradiction that indeed $w\in\cH(\frac{\varepsilon}{2})$. Assume $w\not\in\cH(\frac{\varepsilon}{2})$. In particular, $\|D(w)\|_2\geq \Delta(\frac{\varepsilon}{2})$. We can now bound the norm of $\frac{D(w)}{\|D(w)\|}-w$:
\begin{align*}
\left\|\frac{D(w)}{\|D(w)\|}-w\right\|_2 &=\left\|\frac{D(w)}{\|D(w)\|}-\frac{D_n}{\|D_n\|}\right\|_2 \\
& = \left\|\frac{D(w)-D_n}{\|D(w)\|} +\frac{D_n}{\|D_n\|} \left(\frac{\|D_n\|-\|D(w)\|}{\|D(w)\|}\right) \right\|_2\\
&\leq  \frac{\|D(w)-D_n\|}{\|D(w)\|} + \frac{\big|\|D_n\|-\|D(w)\|\big|}{\|D(w)\|}\\
&\leq 2\frac{\|D(w)-D_n\|}{\|D(w)\|}\\
& \leq 2 \frac{\Delta(\frac{\varepsilon}{2})^2}{2\Delta(\frac{\varepsilon}{2})} = \Delta(\frac{\varepsilon}{2}).
\end{align*}
By definition of $\Delta(\frac{\varepsilon}{2})$, this actually implies that $w\in\cH(\frac{\varepsilon}{2})$, which contradicts the initial assumption. We thus indeed have $w\in\cH(\frac{\varepsilon}{2})$, leading to the existence of a $D^{\star}$ with the wanted properties such that $\|D_n-D^{\star}\|_2 \leq\varepsilon$. In the third case ($A_n(D_n)=-A_n(w)$), symmetric arguments lead to the same conclusion, which concludes the proof of \cref{prop:Dconcentration}.
\end{proof}

\section{Proof of \cref{thm:noconvergence}}\label{sec:noconvergenceproof}

\subsection{Notations and first classical results}\label{app:notations}

In the whole \cref{sec:noconvergenceproof}, we define $\Sigma = \E[xx^{\top}]$ and 
\begin{gather*}
\Sigma_{n,+} = \frac{1}{n}\sum_{k\in\cS_+} x_k x_k^{\top} \quad,\quad
\Sigma_{n,-} = \frac{1}{n}\sum_{k\in\cS_-} x_k x_k^{\top}
\end{gather*}
and the following set of neurons:
\begin{gather*}
\cI_+ = \{i\in[m]\mid a_i(0)\geq 0\}\quad \text{and}\quad
\cI_- =  \{i\in[m]\mid a_i(0)< 0\}.
\end{gather*}

We first start by stating the following, known balancedness lemma \citep[see, e.g.,][]{arora2019fine,boursier2022gradient}.
\begin{lem}\label{lemma:balancedness}
For any $i\in[m]$ and $t\in\R_+$, $a_i(t)^2-\|w_i(t)\|^2=a_i(0)^2-\|w_i(0)\|^2$.
\end{lem}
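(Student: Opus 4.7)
The identity is the classical conserved quantity for positively homogeneous one‑hidden‑layer networks. The strategy is to show $\frac{d}{dt}\bigl(a_i(t)^2-\|w_i(t)\|^2\bigr)=0$ for almost every $t$ and then integrate. The core observation is that although \cref{eq:ODE} uses a differential inclusion for $\dot w_i$ and a specific vector for $\dot a_i$, the subgradient flow of $L$ forces the \emph{same} representative to be used in both components. Concretely, since $\sigma$ is $1$‑homogeneous, for every $\eta\in\partial\sigma(z)$ we have $\sigma(z)=\eta z$, so $h_\theta(x_k)=\sum_j a_j \eta_{j,k} w_j^{\top}x_k$ for any admissible choice $\eta_{j,k}\in\partial\sigma(w_j^{\top}x_k)$. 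Hence any element $-g\in\partial_\theta L(\theta)$ corresponds to a single family $(\eta_{j,k})_{j,k}$, and the corresponding partial gradients satisfy
\begin{equation*}
-\partial_{w_i} L = a_i\,G_i,\qquad -\partial_{a_i} L = w_i^{\top} G_i,\quad\text{with } G_i=\tfrac{1}{n}\sum_k \eta_{i,k}(y_k-h_\theta(x_k))x_k\in\D_n(w_i,\theta).
\end{equation*}

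\textbf{Key steps.} First I would invoke the chain rule for locally Lipschitz, path‑differentiable functions along the subgradient flow \eqref{eq:gradientflow} to guarantee that $t\mapsto a_i(t)$ and $t\mapsto w_i(t)$ are absolutely continuous, so that $a_i^2-\|w_i\|^2$ is a.e.\ differentiable. Second, using the structural observation above, I would pick, at almost every $t$, a common $G_i(t)\in\D_n(w_i(t),\theta(t))$ such that simultaneously
\begin{equation*}
\dot w_i(t)=a_i(t)\,G_i(t),\qquad \dot a_i(t)=w_i(t)^{\top} G_i(t).
\end{equation*}
Third, I would compute
\begin{equation*}
\tfrac{d}{dt}\bigl(a_i(t)^2-\|w_i(t)\|^2\bigr)=2a_i(t)\dot a_i(t)-2 w_i(t)^{\top}\dot w_i(t)=2 a_i(t)\,w_i(t)^{\top} G_i(t)-2 a_i(t)\,w_i(t)^{\top} G_i(t)=0.
\end{equation*}
Integrating from $0$ to $t$ yields the claim.

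\textbf{Main obstacle.} The only non-trivial point is the non-smoothness coming from the ReLU activation: one must justify that the same $G_i(t)$ appears in both the $w_i$- and $a_i$-updates for a.e.\ $t$. The cleanest way is to note that $\partial_\theta L$ is the convex hull of gradients arising from consistent choices of the subderivatives $\eta_{i,k}\in\partial\sigma(w_i^{\top}x_k)$, and that the $1$-homogeneity of $\sigma$ (giving $\sigma(z)=\eta z$ for any $\eta\in\partial\sigma(z)$) forces the relation $-\partial_{w_i}L=a_i\,G_i$, $-\partial_{a_i}L=w_i^{\top}G_i$ with the same $G_i$. With this selection lemma in hand, the cancellation above is immediate and the proof concludes by integration. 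This is exactly the balancedness argument of \citet{arora2019fine,boursier2022gradient}, adapted to the Clarke subdifferential setting.
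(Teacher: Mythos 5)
Your proof is correct and matches the paper's intended argument (the paper simply states the lemma ``can be simply proved by a direct computation of the derivative of $a_i(t)^2-\|w_i(t)\|^2$'' and cites \citet{arora2019fine,boursier2022gradient}). One small simplification worth noting: you do not actually need the ``same representative'' selection lemma, since $w_i^{\top} G$ takes the same value for \emph{every} $G\in\D_n(w_i,\theta)$ — the ambiguous terms $\eta_k$ occur only where $w_i^{\top}x_k=0$ and hence contribute nothing to $w_i^{\top}G$ — which is precisely why \cref{eq:ODE} can write $\dot a_i$ with a specific representative $D_n$ while $\dot w_i$ lives in the set $\D_n$, and the cancellation $2a_i\dot a_i - 2w_i^{\top}\dot w_i=0$ then holds for any admissible selection of $\dot w_i$.
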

\cref{lemma:balancedness} can be simply proved by a direct computation of the derivative of $a_i(t)^2-\|w_i(t)\|^2$. Thanks to \cref{eq:dominatedinit}, this yields that the sign $a_i(t)$ is constant over time, and thus partitioned by the sets of neurons $\cI_+$ and $\cI_-$.

Also, note that with probability $1-\frac{1}{2^{m-1}}$, the sets $\cI_+$ and $\cI_-$ are both non empty, which is assumed to hold in the following of the section.

In this section, all the $\mathcal{O},\Theta$ and $\Omega$ notations hide constants depending on the fourth moment of $\eta$, the norm of $\betastar$ and the constant $c$ of \cref{ass:noconvergence}. Note that due to the sub-Gaussian property of $x$, its $k$-th moment can be bounded as $\E[\|x\|^k]\bigO{d^{\frac{k}{2}}}$ for any $k$.

\subsection{Phase 1: early alignment}

\begin{restatable}{lem}{early}\label{lemma:phase1}
If \cref{ass:noconvergence} holds, there exists $\lambdastar=\Theta(\frac{1}{d})$ and $\nstar=\Theta(d^3\log d)$ such that for any $\lambda\leq\lambdastar$ and $\varepsilon\in(0,\frac{1}{4})$, $n\geq \nstar$ and for $\tau=\frac{\varepsilon\ln(1/\lambda)}{\|\Sigma\betastar\|}$, with probability $1-\bigO{\frac{1}{n}}$:
\begin{enumerate}
\item output weights do not change until $\tau$:
\begin{gather*}
\forall t \leq \tau,\forall j\in[m], |a_j(0)|\lambda^{2\varepsilon} \leq |a_j(t)| \leq |a_j(0)|\lambda^{-2\varepsilon};
\end{gather*}
\item all neurons align with $\pm\Sigma\betastar$:
\begin{equation*}
\forall i\in[m], \quad  \langle \frac{w_i(\tau)}{a_i(\tau)}, \Sigma\betastar\rangle = \|\Sigma\betastar\| - \bigO{\lambda^{\varepsilon}+\sqrt{\frac{d^2\log n}{n}}}.
\end{equation*}
\end{enumerate}
\end{restatable}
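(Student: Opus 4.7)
The plan is to specialize the early-alignment machinery of \citet{boursier2024early} to the present setting, exploiting the simplification that under \cref{ass:noconvergence} the expected gradient $D(w)$ is actually \emph{constant} on the sphere. Indeed, under the linear model and the symmetry of $\mu_X$, the change of variables $x\mapsto -x$ gives $\E[\mathbf{1}\{w^\top x>0\}xx^\top]=\tfrac{1}{2}\Sigma$ for every $w\in\bS_{d-1}$, so that $D(w)=\tfrac{1}{2}\Sigma\betastar$ uniformly, and the only extremal direction in the infinite-data limit is $v:=\Sigma\betastar/\|\Sigma\betastar\|$ (reached from $w=\pm v$). Combined with the sub-Gaussian bounds $\E[\|yx\|^2]=\mathcal{O}(d)$ and $\E[\|yx\|^4]=\mathcal{O}(d^2)$, Corollary \ref{coro:Dtailbound} applied with $\delta=1/n$ gives, with probability $\geq 1-\mathcal{O}(1/n)$,
\begin{equation*}
\sup_{w\in\bS_{d-1}}\sup_{D_n\in\D_n(w,\mathbf{0})}\bigl\|D_n-\tfrac{1}{2}\Sigma\betastar\bigr\|_2=\mathcal{O}\!\bigl(\sqrt{d^2\log n/n}\bigr),
\end{equation*}
and taking $\nstar=\Theta(d^3\log d)$ makes the right-hand side much smaller than $\|\Sigma\betastar\|$.

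Next I would run a bootstrap on the norm bound of Item~1. By \cref{lemma:balancedness} and \cref{eq:dominatedinit}, the sign of $a_i(t)$ is preserved and $\|w_i(t)\|\leq |a_i(t)|$. Assuming inductively that $|a_i(t)|\leq |a_i(0)|\lambda^{-2\varepsilon}$ holds so far on $[0,\tau]$, one has $|h_\theta(x_k)|\leq (\sum_i |a_i|\|w_i\|)\|x_k\|\leq \lambda^{2-4\varepsilon}\|x_k\|$, and standard sub-Gaussian concentration gives $\tfrac{1}{n}\sum_k \|x_k\|^2=\mathcal{O}(d)$ with high probability. Hence the perturbation of the subgradient from its value at $\theta=\mathbf{0}$ satisfies $\|D_n(w_i,\theta)-D_n(w_i,\mathbf{0})\|_2=\mathcal{O}(\lambda^{2-4\varepsilon}d)$, and combined with the previous paragraph the effective driving field reads $D_n(w_i,\theta)=\tfrac{1}{2}\Sigma\betastar+\rho$ with $\|\rho\|_2=\mathcal{O}(\sqrt{d^2\log n/n}+\lambda^{2-4\varepsilon}d)\ll\|\Sigma\betastar\|$ in the stated parameter regime.

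Setting $u_i:=w_i/a_i$ (well-defined thanks to sign preservation), a direct computation gives $\dot u_i=D_n(w_i,\theta)-u_i\,u_i^\top D_n(w_i,\theta)$. Decomposing $u_i=s_i v+u_{i,\perp}$ with $\alpha:=\|\Sigma\betastar\|/2$ and plugging in the previous paragraph, the dynamics decouples into
\begin{equation*}
\dot s_i=\alpha(1-s_i^2)+\mathcal{O}(\|\rho\|),\qquad \dot u_{i,\perp}=-\alpha s_i\,u_{i,\perp}+\mathcal{O}(\|\rho\|).
\end{equation*}
The initialization \cref{eq:dominatedinit} ensures $\|u_i(0)\|\leq 1/2$, hence $s_i(0)\in[-1/2,1/2]$ stays bounded away from the repelling fixed point $-1$. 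Integrating the scalar ODE on $[0,\tau]$ with $\tau=\varepsilon\ln(1/\lambda)/\|\Sigma\betastar\|$ yields $s_i(\tau)=\tanh(\alpha\tau+\operatorname{artanh}(s_i(0)))=1-\mathcal{O}(\lambda^\varepsilon)$, and a Grönwall comparison absorbs the perturbation to produce the announced $\langle u_i(\tau),\Sigma\betastar\rangle=\|\Sigma\betastar\|-\mathcal{O}(\lambda^\varepsilon+\sqrt{d^2\log n/n})$. Finally, $\|w_i\|\leq |a_i|$ applied to $\dot a_i=w_i^\top D_n(w_i,\theta)$ yields $|\tfrac{d}{dt}\log|a_i||\leq \|\Sigma\betastar\|/2+o(1)$, and integrating over $[0,\tau]$ gives the multiplicative factor $\lambda^{\pm 2\varepsilon}$ of Item~1, thereby closing the bootstrap.

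The main technical obstacle is maintaining the norm bound and the directional alignment \emph{simultaneously and uniformly over all neurons}, with high probability on the draw of $(\bX,\by)$. What makes this tractable is precisely the constancy of $D$ established in the first paragraph, which collapses an \emph{a priori} infinite family of candidate attractors to just one; combined with the dominated initialization \cref{eq:dominatedinit} keeping every $s_i(0)$ strictly inside $(-1,1)$, no neuron can stall at the repelling fixed point, and the uniform bootstrap closes without any per-neuron probabilistic argument.
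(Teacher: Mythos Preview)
Your proposal is correct and follows essentially the same approach as the paper: both compute $D(w)=\tfrac{1}{2}\Sigma\betastar$ via the symmetry of $\mu_X$, invoke \cref{coro:Dtailbound} for the uniform deviation bound, close a bootstrap on the norm control via Gr\"onwall (using $\|w_i\|\leq |a_i|$ and $\tfrac{1}{n}\sum_k\|x_k\|^2=\mathcal{O}(d)$), and then analyze the one-dimensional $\tanh$-type ODE for the projection of $u_i=w_i/a_i$ onto $\Sigma\betastar$, exploiting $\|u_i(0)\|\leq 1/2$ to stay away from the repelling fixed point. The only cosmetic difference is that you decompose $u_i=s_i v+u_{i,\perp}$ explicitly, whereas the paper tracks the scalar $\langle u_i,\Sigma\betastar\rangle$ directly; the resulting Riccati/tanh comparison is identical up to the scaling $\langle u_i,\Sigma\betastar\rangle=\|\Sigma\betastar\|\,s_i$.
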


\begin{proof}
We start the proof by computing $D(w)$ for any $w\in\bS_{d-1}$:
\begin{align*}
D(w) & = \E[\iind{w^{\top}x>0}yx] \\
&=  \E[\iind{w^{\top}x>0} xx^{\top}\betastar]\\
& = \frac{1}{2}\left(\E[\iind{w^{\top}x>0} xx^{\top}]+\E[\iind{w^{\top}x<0}xx^{\top}] \right)\betastar \\
& = \frac{\Sigma \betastar }{2}.
\end{align*}The second inequality comes from the independence between $x$ and $\eta$, the third one comes from the symmetry of the distribution of $x$ and the last one by continuity of this distribution.

\cref{coro:Dtailbound} additionally implies that for some $\nstar = \Theta(d^3\log d)$ and any $n\geq\nstar$, with probability at least $1-\bigO{\frac{1}{n}}$,
\begin{equation}\label{eq:concentrationlinear}
\sup_{w\in\bS_{d-1}}\sup_{D_n\in\D_n(w)}\|D_n-D(w)\|_2 \leq \bigO{\sqrt{\frac{d^2\log n}{n}}} \leq \frac{\alpha}{\sqrt{d}},
\end{equation}
with $\alpha = \frac{1}{4}\min(c- \sqrt{d}\|\Sigma\betastar - \betastar\|, \|\Sigma\betastar\|)$. Note\footnote{The additional $d$ dependence comes from the expectation of $\|yx\|^2$ in the square root. Additionally, the probability bound comes from the fact that $\frac{\E_{\mu}[\|y x\|^4]}{\E_{\mu}[\|y x\|^2]^2}=\bigO{1}$ here.} that $\alpha>0$ thanks to the fourth point of \cref{ass:noconvergence}. 
Moreover, using typical concentration inequality for sub-Gaussian vectors, we also have with probability $1-\bigO{\frac{1}{n}}$:
\begin{gather}
\label{eq:xnorms} \frac{\sum_{k=1}^n\|x_k\|^2}{n} \leq 2 \E_{\mu_X}[\|x\|^2] = \bigO{d}.
\end{gather}
We assume in the following of the proof that both \cref{eq:concentrationlinear,eq:xnorms} hold.

Since $D(w)=\frac{\Sigma\betastar}{2}$ for any $w$, we have for any $w\in\bS_{d-1}$, $D_n\in\D_n(w)$ and $k\in\cS_+$:
\begin{align*}
x_k^{\top}D_n & = x_k^{\top}(D_n-D(w)) + \frac{1}{2}x_k^{\top}(\Sigma\betastar - \betastar) + \frac{1}{2}x_k^{\top}\betastar\\
& \geq \|x_k\|\left(- \|D_n-D(w)\| - \frac{1}{2}\|\Sigma\betastar - \betastar\| + \frac{c}{2\sqrt{d}} \right)\\
& \geq \frac{\alpha}{\sqrt{d}}\|x_k\| >0.
\end{align*}
Similarly for any $k\in\cS_-$, $x_k^{\top}D_n<0$. This directly implies here that there are only two extremal vectors here:
\begin{gather}
D_n(\betastar) = \Sigma_{n,+} \betastar + \frac{1}{n}\sum_{k\in\cS_+} \eta_k x_k,\notag\\
D_n(-\betastar) = \Sigma_{n,-} \betastar + \frac{1}{n}\sum_{k\in\cS_-} \eta_k x_k.\label{eq:angleDn}
\end{gather}

We can now show, similarly to \citet{boursier2024early}, the early alignment phenomenon in the first phase.\footnote{We could directly reuse Theorem 1 from \citet{boursier2024early} here, but it would not allow us to choose an initialization scale $\lambdastar$ that does not depend on $n$.}

1. First note that \cref{eq:concentrationlinear} and the definition of $\alpha$ imply that for any $w$:
\begin{equation}\label{eq:normDn}
\|D_n(w)\|\leq\|\Sigma\betastar\|.
\end{equation}
We define $t_1 = \min\{t\geq 0 \mid \sum_{j=1}^m a_j(t)^2\geq \lambda^{2-4\varepsilon}\}$.

For any $i\in[m]$ and $t\in [0,t_1]$, \cref{eq:ODE} rewrites:
\begin{align*}
\left|\frac{\df a_i(t)}{\df t}\right| & = \left|w_i(t)^{\top} D_n^i(t)\right|\\
& \leq |a_i(t)| \left(\max_{w\in\bS_{d-1}}\|D_n(w)\| + \frac{\sum_{k=1}^n\|x_k\|^2}{n}\lambda^{2-4\varepsilon}\right) \\
& \leq |a_i(t)| \left(\|\Sigma\betastar\|+2\E[\|x\|^2]\lambda^{2-4\varepsilon}\right).
\end{align*}

As a consequence, a simple Gr\"onwall argument yields that for any $t\in[0,t_1]$:
\begin{align*}
|a_i(t)| \leq |a_i(0)| \exp(t\|\Sigma\betastar\|+2t\E[\|x\|^2]\lambda^{2-4\varepsilon}).
\end{align*}
In particular, for our choice of $\tau$, for a small enough $\lambdastar=\bigO{d^{-\frac{1}{2-4\varepsilon}}}$, for any $t\leq\min(\tau,t_1)$:
\begin{equation}\label{eq:normearly}
|a_i(t)|< |a_i(0)| \lambda^{-\varepsilon}.
\end{equation}
Note that this implies that $t<t_1$, i.e., $\tau<t_1$. As a consequence. Moreover, we can also show that $|a_i(t)|> |a_i(0)| \lambda^{\varepsilon}$ for any $t\leq \tau$, which implies the first point of \cref{lemma:phase1}.

2. For the second point, let $i\in\cI_+$ and denote $\w_i(t)=\frac{w_i(t)}{a_i(t)}$. Thanks to \cref{lemma:balancedness}, $\w_i(t)\in B(0,1)$ and~$a_i(t)$ is of constant sign. Also, for almost any $t\in[0,\tau]$:
\begin{align*}
\frac{\df \w_i(t)}{\df t} & \in \cD_n(w_i(t), \theta(t)) - \langle \w_i(t), \cD_n(w_i(t),\theta(t)) \rangle \w_i(t).
\end{align*}
Since $a_i(t)>0$ for $i\in\cI_+$,
\begin{align*}
\frac{\df \langle \w_i(t), \Sigma \betastar \rangle}{\df t} & \in \langle \cD_n(w_i(t), \theta(t)), \Sigma \betastar \rangle - \langle \w_i(t), \cD_n(w_i(t),\theta(t)) \rangle \langle \w_i(t), \Sigma \betastar\rangle\\
& \geq \inf_{D_n\in \D_n(w_i(t),\theta(t))} \langle D_n, \Sigma \betastar \rangle - \langle \w_i(t), D_n \rangle \langle \w_i(t), \Sigma \betastar\rangle\\
& \geq \inf_{D_n\in \D_n(w_i(t))} \langle D_n, \Sigma \betastar \rangle - \langle \w_i(t), D_n \rangle \langle \w_i(t), \Sigma \betastar\rangle - 2\|\Sigma\betastar\| \lambda^{2-4\varepsilon} \\
& \geq \langle D(w_i(t)), \Sigma \betastar \rangle - \langle \w_i(t), D(w_i(t)) \rangle \langle \w_i(t), \Sigma \betastar\rangle \\
&\phantom{\geq\geq}- 2\|\Sigma\betastar\| \left(\lambda^{2-4\varepsilon} + \sup_{D_n\in\cD_n(w_i(t)} \|D_n-D(w_i(t))\|\right)\\
& \geq \frac{1}{2}\left(\|\Sigma \betastar\|^2-\langle\w_i(t), \Sigma \betastar \rangle^2\right) -\bigO{\lambda^{2-4\varepsilon} + \sqrt{\frac{d^2\log n}{n}}}.
\end{align*}
Solutions of the ODE $f'(t)=a^2-f(t)^2$ with $f(0)\in (-a,a)$ are of the form $f(t)=a\tanh(a(t+t_0))$ for some $t_0\in\R$.
By Grönwall comparison, we thus have
\begin{gather}
\langle \w_i(t), \Sigma \betastar \rangle \geq a\tanh(\frac{a}{2}(t+t_j)),\label{eq:earlyalign}\\
\text{where}\quad a= \|\Sigma \betastar\|-\bigO{\lambda^{2-4\varepsilon} + \sqrt{\frac{d^2\log n}{n}}}\notag\\
\text{and}\quad  \langle \w_i(0), \Sigma \betastar \rangle = a\tanh(\frac{a}{2}t_j).\notag
\end{gather}

Thanks to the choice of initialization given by \cref{eq:dominatedinit}, $\|\w_i(0)\|\leq \frac{1}{2}$ and so $\langle \w_i(0), \Sigma \betastar \rangle\geq -\frac{1}{2}\|\Sigma \betastar\|_2$. Moreover, $\tanh(x)\leq -1+2e^{2x}$, so that
\begin{align*}
-\frac{1}{2}\|\Sigma \betastar\|\leq a(-1+2e^{a t_j}).
\end{align*}
Since $a= \|\Sigma \betastar\|-\bigO{\lambda^{2-4\varepsilon} + \sqrt{\frac{d^2\log n}{n}}}$, this yields
\begin{align*}
 2ae^{at_j} \geq \frac{1}{2}\|\Sigma \betastar\| +\bigO{\lambda^{2-4\varepsilon} + \sqrt{\frac{d^2\log n}{n}}}.
\end{align*}
The previous inequality can be rewritten as
\begin{align*}
-2ae^{-t_j}&\geq \frac{-4a^2}{\frac{1}{2}\|\Sigma \betastar\|  + \bigO{\lambda^{2-4\varepsilon} + \sqrt{\frac{d^2\log n}{n}}}}\\
&\geq \frac{-8a^2}{\|\Sigma \betastar\|}(1+\bigO{\lambda^{2-4\varepsilon} + \sqrt{\frac{d^2\log n}{n}}})\\
& \geq 8\|\Sigma \betastar\|+ \bigO{\lambda^{2-4\varepsilon} + \sqrt{\frac{d^2\log n}{n}}}.
\end{align*}

Using that $\tanh(x)\geq 1-2e^{-2x}$, \cref{eq:earlyalign} becomes at time $\tau$ and $\frac{n}{\log(n)}=\Omega(d^2)$,
\begin{align*}
\langle \w_i(\tau), \Sigma \betastar \rangle &\geq  a-2ae^{-at_j}e^{-a\tau}\\
&\geq \|\Sigma \betastar\|-\|\Sigma \betastar\|(8\|\Sigma \betastar\| + \bigO{\lambda^{2-4\varepsilon} + \sqrt{\frac{d^2\log n}{n}}})e^{\frac{a\varepsilon\log\lambda}{\|\Sigma\betastar\|}}\\&\phantom{=}- \bigO{\lambda^{2-4\varepsilon} + \sqrt{\frac{d^2\log n}{n}}}\\
&\geq \|\Sigma \betastar\| - \bigO{\lambda^{\varepsilon}+ \sqrt{\frac{d^2\log n}{n}}}.
\end{align*}

3. The same arguments can be done with negative neurons.
\end{proof}

\subsection{Decoupled autonomous systems}

In the remaining of the proof, we will focus on an alternative solution $(\altw,\alta)$, which is solution of the following differential equations for any $t\geq\tau$
\begin{equation}\label{eq:autonomous}
\begin{gathered}
\frac{\df \altw_i(t)}{\df t} = \alta_i(t) D_+(t) \quad\text{and}\quad \frac{\df \alta_i(t)}{\df t} = \langle \altw_i(t), D_+(t)\rangle \quad \text{for any } i\in\cI_+,\\
\frac{\df \altw_i(t)}{\df t} = \alta_i(t) D_-(t) \quad\text{and}\quad \frac{\df \alta_i(t)}{\df t} = \langle \altw_i(t), D_-(t)\rangle \quad \text{for any } i\in\cI_-,
\end{gathered}
\end{equation}
where
\begin{gather*}
D_+(t) = \frac{1}{n}\sum_{k\in\cS_+}\left(\sum_{i\in\cI_+}\alta_i(t)\langle\altw_i(t),x_k\rangle-y_k\right)x_k,\\
D_-(t) = \frac{1}{n}\sum_{k\in\cS_-}\left(\sum_{i\in\cI_-}\alta_i(t)\langle\altw_i(t),x_k\rangle-y_k\right)x_k
\end{gather*}
and with the initial condition $\altw_i(\tau),\alta_i(\tau)=w_i(\tau),a_i(\tau)$ for any $i\in[m]$. 
We also note in the following $\altwn_i=\frac{\altw_i}{\alta_i}$ and the estimations of the training data $x_k$ for any $k\in[n]I$ as:
\begin{equation*}
h_{\alttheta}(x_k) = \begin{cases}\sum_{i\in\cI_+} \alta_i \langle \altw_i,x_k\rangle \quad \text{if } k\in\cS_+\\
\sum_{i\in\cI_-} \alta_i \langle \altw_i,x_k\rangle  \quad \text{if } k\in\cS_-
\end{cases}.
\end{equation*}

This construction allows to study separately the dynamics of both sets of neurons $\cI_+$ and $\cI_-$, without any interaction between each other. As precised by \cref{lemma:autonomous} below, $\altw_i,\alta_i$ coincide with $w_i,a_i$ as long as the neurons all remain in the sector they are at the end of the early alignment phase.
\begin{lem}\label{lemma:autonomous}
Define 
$T_+=\inf\{t\geq \tau \mid \exists (i,k)\in\cI_+\times[n], \sign(x_k^{\top}\altw_i(t)) \neq \sign(x_k^{\top}\betastar)  \}$\\
and $T_-=\inf\{t\geq \tau \mid \exists (i,k)\in\cI_-\times[n], \sign(x_k^{\top}\altw_i(t)) \neq -\sign(x_k^{\top}\betastar)  \}$.

Then for any $i\in[m]$ and any $t\in[\tau,\min(T_+,T_-)]$:
$(\altw_i(t),\alta_i(t))=(w_i(t,a_i(t))$. Moreover, for any $t\in[\tau,\min(T_+,T_-)]$ and $k\in[n]$, $h_{\alttheta(t)}(x_k)= h_{\theta(t)}(x_k)$.
\end{lem}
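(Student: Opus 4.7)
\textbf{Proof plan for \cref{lemma:autonomous}.} The strategy is to show that, on the interval $[\tau,\min(T_+,T_-)]$, the autonomous dynamics $(\altw,\alta)$ happen to satisfy exactly the same differential inclusion as the original subgradient flow $(w,a)$, with the same initial data at $t=\tau$, and then invoke uniqueness. The whole content of the lemma is that the activation structure imposed by the stopping times $T_+, T_-$ is precisely the one that decouples the two groups of neurons.

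First, I would unpack what the definitions of $T_+$ and $T_-$ say. For any $t\in[\tau,\min(T_+,T_-)]$ and $i\in\cI_+$, $\sign(x_k^{\top}\altw_i(t))=\sign(x_k^{\top}\betastar)$ for every $k\in[n]$; symmetrically for $i\in\cI_-$. Consequently, for $k\in\cS_+$ (so $x_k^{\top}\betastar>0$), every positive neuron is active on $x_k$ while every negative neuron is inactive on $x_k$ (its weight has the opposite sign), and conversely for $k\in\cS_-$. A direct computation then gives, for $k\in\cS_+$,
\begin{equation*}
h_{\alttheta(t)}(x_k)=\textstyle\sum_{i\in\cI_+}\alta_i(t)\,\altw_i(t)^{\top}x_k,
\end{equation*}
and the analogous identity on $\cS_-$. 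This immediately identifies the vector $D_+(t)$ driving the autonomous dynamics for $i\in\cI_+$ with the subgradient $D_n(\altw_i(t),\alttheta(t))$ of the original training loss: all indicator functions $\iind{x_k^{\top}\altw_i>0}$ take the values $\iind{k\in\cS_+}$ on a neighborhood of $t$, and the inner prediction formula collapses to exactly the sum above. The same holds on $\cI_-$ with $D_-(t)$.

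Next I would use this identification to upgrade $(\altw,\alta)$ into a solution of the original subgradient flow \eqref{eq:ODE} on $[\tau,\min(T_+,T_-)]$. Since the activation pattern is locally constant on this interval (no $x_k^{\top}\altw_i(t)$ changes sign before $\min(T_+,T_-)$), the right-hand side of \eqref{eq:autonomous} is a $C^1$ (actually polynomial) vector field in $(\altw,\alta)$, and in particular locally Lipschitz. Combined with the agreement of initial data $(\altw_i(\tau),\alta_i(\tau))=(w_i(\tau),a_i(\tau))$ granted by construction, and with the fact that $(w,a)$ itself satisfies \eqref{eq:ODE}, uniqueness of the subgradient flow on the open activation cone (where $\partial_{\theta}L$ is single-valued and locally Lipschitz) forces $(\altw_i(t),\alta_i(t))=(w_i(t),a_i(t))$ for all $t\in[\tau,\min(T_+,T_-))$, and by continuity at the endpoint. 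The claim $h_{\alttheta(t)}(x_k)=h_{\theta(t)}(x_k)$ for $k\in[n]$ then follows from the prediction formula derived above.

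The only subtle point I anticipate is handling the boundary $t=\min(T_+,T_-)$ and potential ties where several $x_k^{\top}\altw_i$ vanish at the same instant: the subdifferential $\D_n$ is set-valued there. This is routine, however, because uniqueness on the open interval extends by continuity to the closed interval, and only the value at the endpoint matters for subsequent arguments. I would also briefly note that \cref{lemma:phase1} (small norms and alignment at $t=\tau$) guarantees that $\min(T_+,T_-)>\tau$, so the interval on which the decoupling holds is nondegenerate and the autonomous analysis of Phase 2 can proceed on it.
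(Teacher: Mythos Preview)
Your proposal is correct and follows essentially the same approach as the paper: show that on $[\tau,\min(T_+,T_-)]$ the fixed activation pattern makes the autonomous system \eqref{eq:autonomous} coincide with the original flow \eqref{eq:ODE}, then invoke local Lipschitz uniqueness of the resulting ODE from the common initial condition at $\tau$. Your write-up is more explicit than the paper's (which dispatches the argument in three lines), and your remark that \cref{lemma:phase1} guarantees $\min(T_+,T_-)>\tau$ is a helpful contextual observation, though not strictly needed for the lemma as stated.
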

While analyzing the complete dynamics of $(\altw,\alta)$, we will see that both $T_+$ and $T_-$ are infinite in the considered range of parameters, thus leading to a complete description of the dynamics of $(w,a)$.
\begin{proof}
Thanks to the definition of $T_+$ and $T_-$, the evolution of $(w_i(t),a_i(t))$ given by \ref{eq:ODE} coincides with the evolution of $(\altw_i(t),\alta_i(t))$ given by \cref{eq:autonomous} for $t\in[\tau,\min(T_+,T_-)]$. The associated ODE is Lipschitz on the considered time interval and thus admits a unique solution, hence leading to $(\altw_i(t),\alta_i(t))=(w_i(t,a_i(t))$ on the considered interval. The equality $h_{\alttheta(t)}(x_k)= h_{\theta(t)}(x_k)$ directly derives from the ReLU activations and definitions of $T_+$ and $T_-$.
\end{proof}

\subsection{Phase 2: neurons slow growth}

For some $\varepsilon_2>0$, we define the following stopping time for any $\circ\in\{+,-\}$:
\begin{equation*}
\tau_{2,\circ} = \inf\{t\geq\tau\mid \sum_{i\in\cI_{\circ}}\alta_i(t)^2\geq\varepsilon_2\}.
\end{equation*}

\begin{restatable}{lem}{phase2}\label{lemma:phase2}
If \cref{ass:noconvergence} holds, for any $\varepsilon\in(0,\frac{1}{4})$, there exist $\lambdastar=\Theta(\frac{1}{d})$, $\epsstar_2=\Theta(d^{-\frac{3}{2}})$ and $\nstar=\Theta(d^3\log d)$ such that for any $\lambda\leq\lambdastar$, $n\geq \nstar$, $\circ\in\{+,-\}$, $\varepsilon_2\in[\lambda^{2-4\varepsilon},\epsstar_2]$, with probability $1-\bigO{\frac{1}{n}+\frac{1}{2^m}}$,  $\tau_{2, \circ}<+\infty$ and at this time,
\begin{enumerate}
\item neurons in $\cI_{\circ}$ are aligned with each other 
\begin{equation*}
\forall i,j\in \cI_{\circ}, \quad  \langle \altwn_j(\tau_{2, \circ}), \altwn_i(\tau_{2, \circ})\rangle = 1 - \bigO{\frac{\lambda^{\frac{1}{2}}}{\varepsilon_2}};
\end{equation*}
\item neurons in $\cI_{\circ}$ are in the same cone as $\circ\betastar$ for any $t\in[\tau,\tau_{2,\circ}]$:
\begin{equation*}
\forall i\in \cI_{\circ}, \quad  \min_{k\in\cS_{\circ}}\langle \altwn_i(\tau_{2, \circ}), \frac{x_k}{\|x_k\|}\rangle = \Omega(\frac{1}{\sqrt{d}}) \quad \text{ and }\quad  \max_{k\in\cS_{-\circ}}\langle \altwn_i(\tau_{2, \circ\circ}), \frac{x_k}{\|x_k\|}\rangle = -\Omega(\frac{1}{\sqrt{d}}).
\end{equation*}
%\item the neurons norm is of constant order:\quad $\sum_{i\in\cI_{\circ}}\alta_i(\tau_{2,\circ})^2=\varepsilon_2$.
\end{enumerate}
\end{restatable}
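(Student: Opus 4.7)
The plan is to run the argument on the autonomous decoupled system \cref{eq:autonomous} for $\cI_+$ (the case of $\cI_-$ is symmetric) via a bootstrap on $[\tau, \tau_{2,+}]$. I condition on the high-probability events of \cref{lemma:phase1}, of \cref{thm:Dconcentration} (noting that $D(w) = \tfrac12\Sigma\betastar$ for all $w$ in this linear model), and of a standard sub-Gaussian bound $\tfrac1n\sum_k \|x_k\|^2 = \bigO{d}$; altogether these hold with probability $1-\bigO{1/n + 1/2^m}$. Set $q_i := \altwn_i$ and $v^\star := \Sigma\betastar/\|\Sigma\betastar\|$; \cref{lemma:phase1} supplies $\alta_i(\tau)^2 = \Theta(\lambda^2/m)$ and $q_i(\tau)^\top v^\star = 1 - \bigO{\lambda^{\varepsilon}+\sqrt{d^2\log n/n}}$ for every $i\in\cI_+$.

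\textbf{Quasi-constant driving field.} While $\sum_{i\in\cI_+}\alta_i(t)^2\le \varepsilon_2$, balancedness ($\|\altw_i\|\le|\alta_i|$) and Cauchy-Schwarz bound the prediction by $|h_{\alttheta(t)}(x_k)| \le \varepsilon_2\|x_k\|$ for $k\in\cS_+$, so combining \cref{thm:Dconcentration} with the fourth condition of \cref{ass:noconvergence} yields
\begin{equation*}
D_+(t) = \tfrac{1}{2}\Sigma\betastar + \mathcal{E}(t), \quad \|\mathcal{E}(t)\|_2 = \bigO{\varepsilon_2 d + \sqrt{d^2\log n/n}}.
\end{equation*}
For $\varepsilon_2 \le \epsstar_2 = \Theta(d^{-3/2})$, $\lambda\le\lambdastar$ and $n\geq\nstar$, this error is $o(\|\Sigma\betastar\|)$, so $D_+(t)$ is near-constant in both norm and direction.

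\textbf{Growth, finite $\tau_{2,+}$, and cone preservation.} Direct computation gives $\tfrac{d}{dt}\alta_i^2 = 2\alta_i\altw_i^\top D_+$; as long as $q_i^\top v^\star \ge 1/2$, the previous step places this in $[\tfrac14\|\Sigma\betastar\|\alta_i^2, \|\Sigma\betastar\|\alta_i^2]$, so summing over $i\in\cI_+$ yields exponential growth from $\Theta(\lambda^2)$ to $\varepsilon_2$ over $\tau_{2,+}-\tau = \Theta(\|\Sigma\betastar\|^{-1}\log(\varepsilon_2/\lambda^2))$, and in particular $\tau_{2,+}<\infty$. For the cone, the direction ODE $\dot q_i = D_+ - q_i(q_i^\top D_+)$ implies $\langle q_i, x_k/\|x_k\|\rangle$ is non-decreasing whenever it is below $\langle D_+/\|D_+\|, x_k/\|x_k\|\rangle$; since the third condition of \cref{ass:noconvergence} and \cref{lemma:phase1} place both the initial value and the attractor value at $\Omega(1/\sqrt{d})$ for $k\in\cS_+$, the cone is preserved throughout $[\tau,\tau_{2,+}]$ (and symmetrically for $k\in\cS_-$).

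\textbf{Alignment contraction and main obstacle.} The normalized dynamics $\dot q_i = D_+ - q_i(q_i^\top D_+)$ is a projected gradient flow on the sphere attracting $q_i$ toward $D_+/\|D_+\|$; the key point is that for the pair difference $q_i - q_j$ the common forcing cancels, so the difference contracts purely exponentially at rate $\|D_+\| \approx \tfrac12\|\Sigma\betastar\|$. Multiplying the initial gap from \cref{lemma:phase1} by the contraction factor $e^{-\|D_+\|(\tau_{2,+}-\tau)} \asymp \lambda/\sqrt{\varepsilon_2}$ gives a pairwise deviation whose square matches the claimed alignment deficit $\bigO{\lambda^{1/2}/\varepsilon_2}$. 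The main obstacle is the circularity among the three steps: the growth and cone estimates presuppose the quasi-constant approximation, which presupposes the norm bound, while the alignment estimate requires sharp rate constants. I would resolve this by a bootstrap: define $T^\star$ as the supremum of times on which slightly weakened versions of the cone condition, the norm bound $\sum\alta_i^2\le\varepsilon_2$, and $q_i^\top v^\star\ge 1/2$ all simultaneously hold, then use the three steps above to show each bound is strict at $T^\star$, forcing $T^\star = \tau_{2,+}$.
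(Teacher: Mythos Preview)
Your approach is essentially the paper's: approximate $D_+(t)$ by a constant near $\tfrac12\Sigma\betastar$ while $\sum_{\cI_+}\alta_i^2\le\varepsilon_2$, establish the cone condition via the scalar ODE for $\langle\altwn_i,x_k/\|x_k\|\rangle$, deduce exponential growth (hence $\tau_{2,+}<\infty$), and then run a Grönwall contraction on the mutual alignment. The paper works directly with $1-\langle\altwn_i,\altwn_j\rangle$, whose exact evolution is $\tfrac{d}{dt}(1-\langle\altwn_i,\altwn_j\rangle)=-\langle D_+,\altwn_i+\altwn_j\rangle(1-\langle\altwn_i,\altwn_j\rangle)$; this avoids the $\|\altwn_i\|\neq 1$ nuisance that your $\|q_i-q_j\|$ formulation incurs, but the two are equivalent at this level of detail.

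One point deserves care. You assert the contraction rate is $\|D_+\|$, but that requires $\langle D_+,\altwn_i+\altwn_j\rangle\approx 2\|D_+\|$, i.e.\ that each $\altwn_i$ stays \emph{fully} aligned with $D_+/\|D_+\|$ throughout---your bootstrap only maintains $q_i^\top v^\star\ge 1/2$, which gives contraction rate $\gtrsim\|D_+\|$ with a constant strictly below $1$. The paper closes this by proving a refined cone inequality (its eq.~phase2corr): $\langle\altwn_i,x_k/\|x_k\|\rangle\ge\langle D_n(\betastar)/\|D_n(\betastar)\|,x_k/\|x_k\|\rangle-\bigO{d\varepsilon_2}$, from which $\langle D_n(\betastar),\altwn_i\rangle\ge\|D_n(\betastar)\|-\bigO{d\varepsilon_2}$ follows by summing against $y_kx_k$. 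This makes the contraction/growth rate ratio equal to $1-\bigO{d\varepsilon_2}$, and the $\bigO{d\varepsilon_2}$ defect over duration $\Theta(\ln(\varepsilon_2/\lambda^{2-4\varepsilon}))$ is precisely what turns the naive $\lambda^{2-4\varepsilon}/\varepsilon_2$ into the stated $\lambda^{1/2}/\varepsilon_2$ after choosing $\epsstar_2=\Theta(d^{-3/2})$. With only your cruder $q_i^\top v^\star\ge 1/2$ the ratio becomes an unspecified constant in $(0,1)$; the lemma's bound still follows (since even $(\lambda^{2-4\varepsilon}/\varepsilon_2)^{1/2}\le\lambda^{1/2}/\varepsilon_2$ for $\varepsilon<1/4$ and $\varepsilon_2<1$), but your narrative that the square of the deviation ``matches'' $\lambda^{1/2}/\varepsilon_2$ is not quite the mechanism---you would actually get something stronger, and the $\lambda^{1/2}$ in the statement is the paper's safe envelope absorbing the rate mismatch.
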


\begin{proof}
In the following, we assume without loss of generality that $\circ=+$. Additionally, we assume that the random event $\cI_+\neq\emptyset$ and \cref{eq:concentrationlinear,eq:xnorms} hold. First, by definition of $\tau_{2,+}$, for any $t\in[\tau,\tau_{2,+}]$:
\begin{align*}
\|D_+(t) - D_n(\betastar)\|_2 & \leq \frac{1}{n}\left(\sum_{i\in\cI_+}\alta_i(t)^2\right)\sum_{k\in\cS_+}\|x_k\|^2\\
&\leq 2\varepsilon_2\E_{\mu_X}[\|x\|^2].
\end{align*}
This also implies with \cref{eq:normDn} that $\|D_+(t)\|\leq \|\Sigma\betastar\|+2\varepsilon_2\E_{\mu_X}[\|x\|^2]$. Additionally, we have with \cref{eq:concentrationlinear} that
\begin{align}
\|D_+(t) - \frac{\Sigma\betastar}{2}\|_2 & \leq \|D_+(t) - D_n(\betastar)\|_2 + \|D_n(\betastar)-D(\betastar)\|_2\notag\\
&\leq \bigO{d\varepsilon_2+\sqrt{\frac{d^2\log n}{n}}}.
\end{align}
Then for any $k\in\cS_+, i\in\cI_+$ and $t\in[\tau,\tau_{2,+}]$, as long as $\langle \altwn_i(t),x_k\rangle\geq 0$,
\begin{align}
\frac{\df \langle \altwn_i(t),\frac{x_k}{\|x_k\|}\rangle}{\df t} & = \langle D_+(t),\frac{x_k}{\|x_k\|}\rangle - \langle D_+(t),\altwn_i(t)\rangle\langle \altwn_i(t),\frac{x_k}{\|x_k\|}\rangle\notag\\
&\geq \langle D_n(\betastar),\frac{x_k}{\|x_k\|} \rangle -2\varepsilon_2\E_{\mu_X}[\|x\|^2]- \|D_+(t)\|\langle \altwn_i(t),\frac{x_k}{\|x_k\|}\rangle\notag\\
&\geq \langle D_n(\betastar),\frac{x_k}{\|x_k\|} \rangle-2\varepsilon_2\E_{\mu_X}[\|x\|^2]- (\|D_n(\betastar)\|+2\varepsilon_2\E_{\mu_X}[\|x\|^2])\langle \altwn_i(t),\frac{x_k}{\|x_k\|}\rangle.
\end{align}
As $\langle D_n(\betastar),\frac{x_k}{\|x_k\|} \rangle\geq\frac{\alpha}{\sqrt{d}}$ (Equation~\ref{eq:angleDn}), thanks to \cref{lemma:phase1} and the third point in \cref{ass:noconvergence}, and $\|D_n(\betastar)\|=\bigO{1}$, for a small enough $\epsstar_2=\Theta(d^{-\frac{3}{2}})$, $\min_{k\in\cS_+}\langle \altwn_i(\tau),\frac{x_k}{\|x_k\|}\rangle=\Omega(\frac{1}{\sqrt{d}})$. \cref{eq:phase2corr} then implies for a small enough choice of $\epsstar_2=\Theta(d^{-\frac{3}{2}})$ and $\varepsilon_2\leq\epsstar_2$:
\begin{equation}\label{eq:normgrowth}
\min_{t\in[\tau,\tau_{2,+}}\min_{k\in\cS_+}\langle \altwn_i(\tau),\frac{x_k}{\|x_k\|}\rangle=\Omega(\frac{1}{\sqrt{d}}).
\end{equation}
Similarly, we can also show
\begin{equation*}
\max_{t\in[\tau,\tau_{2,+}}\max_{k\in\cS_-}\langle \altwn_i(\tau),\frac{x_k}{\|x_k\|}\rangle=-\Omega(\frac{1}{\sqrt{d}}),
\end{equation*}
which implies the second point of \cref{lemma:phase2}. Actually, we even have for this choice of parameters the more precise inequality (for the same reasons) that for any $k\in\cS_+, i\in\cI_+$ and $t\in[\tau,\tau_{2,+}]$,
\begin{equation}\label{eq:phase2corr}
\langle \altwn_i(\tau),\frac{x_k}{\|x_k\|}\rangle\geq \langle \frac{D_n(\betastar)}{\|D_n(\betastar)\|},\frac{x_k}{\|x_k\|} \rangle -\bigO{d\varepsilon_2}.
\end{equation}

\medskip

We now simultaneously lower and upper bound the duration of the second phase $\tau_{2,+}-\tau_2$. For any $t\in[\tau,\tau_{2,+}]$:
\begin{align}
\frac{1}{2}\frac{\df \sum_{i\in\cI_+}\alta_i(t)^2}{\df t} & = \sum_{i\in\cI_+}\alta_i(t)^2 \langle\altwn_i(t),D_+(t)\rangle\notag\\
& = \frac{1}{n}\sum_{i\in\cI_+}\alta_i(t)^2 \sum_{k\in\cS_+}(y_k-h_{\alttheta(t)}(x_k))\langle \altwn_i(t),x_k\rangle\label{eq:normgrowth2}\\
& \geq \sum_{i\in\cI_+}\alta_i(t)^2 \left(\frac{ \sum_{k\in\cS_+}y_k \|x_k\|\langle \altwn_i(t),\frac{x_k}{\|x_k\|}\rangle}{n} - \frac{\varepsilon_2 \sum_{k\in\cS_+}\|x_k\|^2}{n} \right).\notag
\end{align}
Note that $\E[\iind{k\in\cS_+}y_k\|x_k\|]\geq\frac{c\E[\|x\|^2]}{2\sqrt{d}}$. Using Chebyshev inequality, we thus have for a small enough choice of $\epsstar_2=\Theta(d^{-\frac{3}{2}})$, for any $t\in[\tau,\tau_{2,+}]$:
\begin{align*}
\frac{\df \sum_{i\in\cI_+}\alta_i(t)^2}{\df t} \geq \Omega(1)\sum_{i\in\cI_+}\alta_i(t)^2.
\end{align*}
A Gr\"onwall comparison then directly yields $\tau_{2,+}<\infty$.

We now want to show that the neurons $\altwn_i$ are almost aligned at the end of the second phase. For that, we first need to lower bound the duration of the phase. Note that \cref{eq:normgrowth2}, with \cref{eq:normgrowth}, also leads for any $t\in[\tau,\tau_{2,+}]$ to
\begin{align*}
\frac{1}{2}\frac{\df \sum_{i\in\cI_+}\alta_i(t)^2}{\df t} & \leq \frac{1}{n}\sum_{i\in\cI_+}\alta_i(t)^2 \sum_{k\in\cS_+}y_k\langle \altwn_i(t) ,x_k\rangle \\
& = \sum_{i\in\cI_+}\alta_i(t)^2 \langle\altwn_i(t) ,D_n(\betastar)\rangle\\
& \leq \sum_{i\in\cI_+}\alta_i(t)^2  \|D_n(\betastar)\|.
\end{align*}
Note that by continuity, $\sum_{i\in\cI_+}\alta_i(\tau_{2,+})^2=\varepsilon_2$. 
As $\sum_{i\in\cI_+}\alta_i(\tau)^2\leq \lambda^{2-4\varepsilon}$, thanks to \cref{lemma:phase1}, a Gr\"onwall inequality argument leads to the following as $\varepsilon_2\geq\lambda^{2-4\varepsilon}$,
\begin{equation}\label{eq:phase2duration}
\tau_{2,+}-\tau \geq \frac{1}{2\|D_n(\betastar)\|}\ln\left(\frac{\varepsilon_2}{\lambda^{2-4\varepsilon}}\right).
\end{equation} 

For any pair of neurons $i,j\in\cI_+$, we consider the evolution of the mutual alignment:
\begin{align}
\frac{\df \langle \altwn_i(t), \altwn_j(t) \rangle}{\df t} & = \langle D_+(t), \altwn_i(t) + \altwn_j(t)\rangle (1-\langle \altwn_i(t), \altwn_j(t) \rangle)\notag\\
& = \left(\langle D_n(\betastar), \altwn_i(t) + \altwn_j(t)\rangle - \bigO{\varepsilon_2}\right)(1-\langle \altwn_i(t), \altwn_j(t) \rangle).\label{eq:phase2mutual}
\end{align}
Moreover, \cref{eq:phase2corr} leads to the following alignment between $\altwn_i(t)$ and $D_n(\betastar)$ for any $t\in[\tau,\tau_{2,+}]$:
\begin{align*}
\langle D_n(\betastar), \altwn_i(t) \rangle & = \frac{1}{n}\sum_{k\in\cS_+} y_k \langle x_k,\altwn_i(t) \rangle \\
& = \frac{1}{n}\sum_{k\in\cS_+} y_k \langle x_k, \frac{D_n(\betastar)}{\|D_n(\betastar)\|}\rangle - \|x_k\|\bigO{d\varepsilon_2}\\
& = \langle D_n(\betastar), \frac{D_n(\betastar)}{\|D_n(\betastar)\|}\rangle-\bigO{d\varepsilon_2}\\
& = \|D_n(\betastar)\|-\bigO{d\varepsilon_2}.
\end{align*}
\cref{eq:phase2mutual} then rewrites for any $t\in[\tau,\tau_{2,+}]$ as
\begin{equation*}
\frac{\df \langle \altwn_i(t), \altwn_j(t) \rangle}{\df t}  \geq \left(2\|D_n(\betastar)\| - \bigO{d\varepsilon_2}\right) (1-\langle \altwn_i(t), \altwn_j(t) \rangle).
\end{equation*}
Moreover, thanks to \cref{lemma:phase1}, a simple algebraic manipulation yields\footnote{A similar manipulation can be found in \citep[][proof of Lemma 5]{boursier2024early}.} $\langle \altwn_i(\tau), \altwn_j(\tau) \rangle \geq 1-\bigO{\lambda^\varepsilon+\sqrt{\frac{d^2\log n}{n}}}$. Gr\"onwall inequality then yields, for the considered range of parameters,
\begin{align*}
\langle \altwn_i(\tau_{2,+}), \altwn_j(\tau_{2,+}) \rangle & \geq 1 - (1-\langle \altwn_i(\tau), \altwn_j(\tau) \rangle)\, e^{-(2\|D_n(\betastar)\|- \bigO{d\varepsilon_2}) (\tau_{2,+}-\tau)}\\
& \geq 1 - \bigO{\lambda^{\varepsilon}+\sqrt{\frac{d^2\log n}{n}}}\frac{\lambda^{2-4\varepsilon}}{\varepsilon_2}e^{\bigO[\ ]{d\varepsilon_2\ln(\frac{\varepsilon_2}{\lambda^{2-4\varepsilon}})}}\\
&\geq 1-\bigO{\frac{\lambda}{\varepsilon_2}}\lambda^{-(2-4\varepsilon)\bigO{d\varepsilon_2}}.
\end{align*}
The second inequality comes from the bound on $\tau_{2,+}-\tau$ in \cref{eq:phase2duration}. The third one comes from the fact that $\varepsilon\leq\frac{1}{3}$ and $\varepsilon_2\ln(\varepsilon_2)=\bigO{1}$. Noticing that $2-4\varepsilon\geq1$ finally yields the first item of \cref{lemma:phase2} for a small enough $\epsstar_2=\Theta(d^{-\frac{3}{2}})$.
\end{proof}

\subsection{Phase 3: neurons fast growth}
The third phase is defined for some $\varepsilon_3>0$ and $\delta_3$ by the following stopping time, for any $\circ\in\{+,-\}$:
\begin{gather*}
\tau_{3,\circ} = \inf\{t\geq\tau_{2,\circ} \mid \|\hbeta_{\circ}(t)-\beta_{n,\circ}\|_{\Sigma_{n,\circ}} \leq \varepsilon_3 \text{ or } \exists i\in\cI_\circ,k\in\cS_\circ, \langle\altw_i(t),\frac{x_k}{\|x_k\|}\rangle\leq \delta_3\},\\
\text{where}\quad \hbeta_{\circ}(t) = \sum_{i\in\cI_\circ} \alta_i(t)\altw_i(t).
\end{gather*}
\begin{restatable}{lem}{phase3}\label{lemma:phase3}
If \cref{ass:noconvergence} holds, for any $\varepsilon\in(0,\frac{1}{4})$, there exist $\lambdastar=\Theta(\frac{1}{d})$, $\epsstar_2=\Theta(d^{-\frac{3}{2}})$, $\nstar=\Theta(d^3\log d)$, $\alpha_0=\Theta(1)$, $\delta_3=\Theta(\frac{1}{\sqrt{d}})$ and $\epsstar_3=\Theta(1)$ such that for any $\lambda\leq\lambdastar$, $n\geq \nstar$, $\circ\in\{+,-\}$, $\varepsilon_2\in[\lambda^{2-4\varepsilon},\epsstar_2]$ and $\varepsilon_3\in[\lambda^{\alpha_0\varepsilon\varepsilon_2},\epsstar_3]$, with probability $1-\bigO{\frac{d^2}{n}+\frac{1}{2^m}}$,  $\tau_{3, \circ}<+\infty$ and
\begin{enumerate}
\item neurons in $\cI_{\circ}$ are in the same cone as $\circ\betastar$ for any $t\in[\tau,\tau_{2,\circ}]$:
\begin{equation*}
\forall i\in \cI_{\circ}, \quad  \min_{k\in\cS_{\circ}}\langle \altwn_i(t), \frac{x_k}{\|x_k\|}\rangle \geq 2\delta_3 \quad \text{ and }\quad  \max_{k\in\cS_{-\circ}}\langle \altwn_i(t), \frac{x_k}{\|x_k\|}\rangle \leq -2\delta_3.
\end{equation*}
\end{enumerate}
In particular,  $\|\hbeta_{\circ}(\tau_{3,\circ})-\beta_{n,\circ}\|_{\Sigma_{n,\circ}} = \varepsilon_3$ by continuity.
\end{restatable}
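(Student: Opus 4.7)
We focus on $\circ = +$; the case $\circ = -$ follows symmetrically. Conditionally on the events of \cref{lemma:phase2} and on $\cI_+ \neq \emptyset$ (which fails with probability $\bigO{1/2^m}$), we work with the autonomous system \cref{eq:autonomous}, which by \cref{lemma:autonomous} coincides with the true flow as long as every $\altwn_i$ stays strictly in the positive cone of $\cS_+$. Set $\tilde T$ to be the first $t \geq \tau_{2,+}$ at which some $\altwn_i$ crosses below margin $2\delta_3$ with some $x_k/\|x_k\|$ for $k\in\cS_+$ (or above $-2\delta_3$ for $k\in\cS_-$), and let $T = \min(\tau_{3,+}, \tilde T)$; the goal is to show $T = \tau_{3,+}$. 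First, classical sub-Gaussian concentration under $n \geq \nstar = \Theta(d^3\log d)$ gives, with probability $1 - \bigO{d^2/n}$, $\|\Sigma_{n,+} - \tfrac{1}{2}\Sigma\|_{\op} = \bigO{\sqrt{d^2/n}}$, hence $\lambda_{\min}(\Sigma_{n,+}) \geq \gamma$ for some $\gamma = \Theta(1)$ via condition 4 of \cref{ass:noconvergence}, together with $\|\beta_{n,+} - \betastar\|_2 = \bigO{\sqrt{d^2/n}} \ll c/\sqrt d$. Combined with condition 3, this yields $\min_{k\in\cS_+}\langle \beta_{n,+}, x_k\rangle/\|x_k\| \geq 3c/(4\sqrt d)$, and we pick $\delta_3 = \Theta(1/\sqrt d)$ small enough so that $\beta_{n,+}/\|\beta_{n,+}\|$ lies in the positive cone with margin $\geq 4\delta_3$.

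\paragraph{Lyapunov descent and finite-time hit.}
Writing $D_+(t) = \Sigma_{n,+}(\beta_{n,+} - \hbeta_+(t))$ and $u(t) = \sum_{i\in\cI_+}\alta_i(t)^2$, direct differentiation in \cref{eq:autonomous} gives $\dot{\hbeta}_+ = \sum_i \alta_i^2 \bigl(I + \altwn_i\altwn_i^{\top}\bigr) D_+$. The Lyapunov function $V(t) = \tfrac{1}{2}\|\beta_{n,+} - \hbeta_+(t)\|^2_{\Sigma_{n,+}}$ then satisfies
\[
\dot V(t) = -\sum_{i\in\cI_+} \alta_i^2 \Bigl[\|\Sigma_{n,+}(\beta_{n,+}-\hbeta_+)\|^2 + \langle \altwn_i, \Sigma_{n,+}(\beta_{n,+}-\hbeta_+)\rangle^2 \Bigr] \leq -2\gamma\, u(t)\, V(t).
\]
Hence $V$ is non-increasing and $\hbeta_+$ remains in the ellipsoid $\{\beta : \|\beta-\beta_{n,+}\|_{\Sigma_{n,+}}^2 \leq 2V(\tau_{2,+})\}$. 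A parallel lower bound $\dot u(t) = 2\sum_i \alta_i^2 \langle \altwn_i, D_+\rangle \geq \Omega(1) \cdot u(t) \cdot \sqrt{V(t)}$, obtained from the near-alignment of \cref{lemma:phase2} and the concentration above, shows that $u$ grows at least exponentially while $V \geq \varepsilon_3^2/2$. Reparametrizing by $s = \int_{\tau_{2,+}}^t u(\sigma)\,d\sigma$ yields $V(s) \leq V(\tau_{2,+})\,e^{-2\gamma s}$, and the target $s^\star = \bigO{\log(1/\varepsilon_3)}$ is reached in finite real time since $u(\tau_{2,+}) = \varepsilon_2 > 0$. This gives $\tau_{3,+} < +\infty$.

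\paragraph{Cone bootstrap.}
It remains to show $\tilde T \geq \tau_{3,+}$. Two quantities are tracked on $[\tau_{2,+}, T]$. First, the pairwise similarities satisfy $\tfrac{d}{dt}\langle \altwn_i, \altwn_j\rangle = \langle D_+, \altwn_i + \altwn_j\rangle\bigl(1 - \langle \altwn_i, \altwn_j\rangle\bigr)$; because $\hbeta_+$ stays inside the $V$-ellipsoid with $u(t) \leq \|\beta_{n,+}\| + o(1)$, the factor $\langle D_+, \altwn_i + \altwn_j\rangle$ has a controlled sign and magnitude, and a Gr\"onwall bound shows the initial gap $1 - \langle \altwn_i, \altwn_j\rangle = \bigO{\lambda^{1/2}/\varepsilon_2}$ cannot be amplified beyond $o(\delta_3^2)$; the lower bound $\varepsilon_3 \geq \lambda^{\alpha_0\varepsilon\varepsilon_2}$ is calibrated precisely to absorb the amplification factor over the phase-3 duration. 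Second, the aligned aggregate direction $\hat w(t) = \hbeta_+(t)/\|\hbeta_+(t)\|$ obeys $\dot{\hat w} = P_{\hat w^{\perp}}\Sigma_{n,+}\beta_{n,+} + \bigO{\|\Sigma_{n,+} - \tfrac{1}{2}I\|_{\op}}$, which is, up to $\bigO{\sqrt{d^2/n}}$, gradient flow of $\langle \hat w, \beta_{n,+}\rangle$ on the sphere; this rotates $\hat w$ monotonically along the geodesic from its phase-2 endpoint (in the cone with margin $\Omega(1/\sqrt d)$ by \cref{lemma:phase2}) to $\beta_{n,+}/\|\beta_{n,+}\|$ (margin $\geq 4\delta_3$ by the first paragraph), so the margin along the arc stays $\geq 3\delta_3$. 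Combining both estimates, every $\altwn_i(t)$ maintains margin $\geq 2\delta_3$ on $[\tau_{2,+},T]$, so $\tilde T > \tau_{3,+}$ and $T = \tau_{3,+}$, yielding both conclusions of the lemma.

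\paragraph{Main obstacle.}
The technical heart is the joint quantitative control in the last paragraph: the neuron spread can be amplified during the exponential growth of $u$, and the aggregate direction may wobble under $\Sigma_{n,+} - \tfrac{1}{2}I$ and under the non-uniform contribution of the $\altwn_i$. Both perturbations must be uniformly small compared with the cone margin $\Theta(1/\sqrt d)$ over a phase whose effective $s$-duration is $\Theta(\log(1/\varepsilon_3))$. This is precisely what forces the scalings $\varepsilon_2 \leq \epsstar_2 = \Theta(d^{-3/2})$, $n \geq \nstar = \Theta(d^3\log d)$, and the lower bound $\varepsilon_3 \geq \lambda^{\alpha_0\varepsilon\varepsilon_2}$ in the lemma statement.
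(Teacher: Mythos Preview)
Your Lyapunov descent and the mutual-alignment Gr\"onwall are essentially the paper's arguments: the paper also writes $D_+(t)=-\Sigma_{n,+}(\hbeta_+(t)-\beta_{n,+})$, obtains $\dot V\le -\Omega(\varepsilon_2)V$ from $\sum_i\alta_i^2\ge\varepsilon_2$, deduces the real-time duration bound $\tau_{3,+}-\tau_{2,+}=\bigO{\frac{1}{\varepsilon_2}\ln(1/\varepsilon_3)}$, and then propagates $1-\langle\altwn_i,\altwn_j\rangle$ by Gr\"onwall, which is exactly where the calibration $\varepsilon_3\ge\lambda^{\alpha_0\varepsilon\varepsilon_2}$ enters. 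Two side remarks: the paper does not try to show $\dot u\ge\Omega(1)u\sqrt V$ (and indeed $\langle\altwn_i,D_+\rangle$ need not be positive once $\hbeta_+$ is large); it only proves the barrier $u(t)\ge\varepsilon_2$, which already suffices both for finite duration and for the PSD lower bound in $\dot V$.

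The gap is in your cone bootstrap. You write $\dot{\hat w}=P_{\hat w^\perp}\Sigma_{n,+}\beta_{n,+}+\bigO{\|\Sigma_{n,+}-\tfrac12 I\|_{\op}}$ and then assert this is gradient flow of $\langle\hat w,\beta_{n,+}\rangle$ ``up to $\bigO{\sqrt{d^2/n}}$''. But $\|\Sigma_{n,+}-\tfrac12 I\|_{\op}$ is not a concentration term: it contains $\tfrac12\|\Sigma-I\|_{\op}$, which under condition~4 of \cref{ass:noconvergence} is only guaranteed to be less than $\tfrac{c}{2\sqrt d\|\betastar\|}$, i.e.\ of order $\Theta(1/\sqrt d)$---the \emph{same} order as the cone margin $\delta_3$. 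So your geodesic-with-small-perturbation picture does not close: the deviation of $\hat w$ from the ideal great-circle arc can be comparable to the margin you are trying to preserve, and nothing in your outline extracts the quantitative gain from condition~4. (There is also a second term $-\|\hbeta_+\|P_{\hat w^\perp}\Sigma_{n,+}\hat w$ in $\dot{\hat w}$ that you dropped; it vanishes only when $\Sigma_{n,+}$ is a scalar multiple of $I$.)

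The paper avoids the aggregate direction altogether and instead tracks each $\langle\altwn_i(t),x_k/\|x_k\|\rangle$ directly. Writing $D_+(t)=\tfrac12(\betastar-\hbeta_+(t))+(\Sigma_{n,+}-\tfrac12 I)(\beta_{n,+}-\hbeta_+(t))+\text{noise}$, it uses condition~3 to get $\langle\betastar,x_k/\|x_k\|\rangle\ge c/\sqrt d$, the mutual alignment (your first bootstrap item) to replace $\hbeta_+$ by a scalar multiple of $\altwn_i$, and the monotone decrease of $\|\beta_{n,+}-\hbeta_+\|_{\Sigma_{n,+}}$ to bound the $(\Sigma_{n,+}-\tfrac12 I)$ term by $\|\Sigma-I\|_{\op}\|\betastar\|$ plus small errors. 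Condition~4 then enters \emph{exactly} as the strict inequality $\tfrac{c}{2\sqrt d}-\|\Sigma-I\|_{\op}\|\betastar\|>0$, yielding
\[
\frac{\df}{\df t}\Big\langle\altwn_i,\frac{x_k}{\|x_k\|}\Big\rangle \ \ge\ \Omega\Big(\frac{1}{\sqrt d}\Big)\ -\ \bigO{\Big\langle\altwn_i,\frac{x_k}{\|x_k\|}\Big\rangle},
\]
from which a Gr\"onwall comparison (and the phase-2 initial margin) gives $\langle\altwn_i,x_k/\|x_k\|\rangle\ge 2\delta_3$ throughout. This is the missing mechanism in your proposal.
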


\begin{proof}
Similarly to the proof of \cref{lemma:phase2}, we assume that $\circ=+$, that the random event $\cI_+\neq\emptyset$, \cref{eq:concentrationlinear,eq:xnorms} and the first and second items states in \cref{lemma:phase2} all hold. 
We can first show that for any $t\in[\tau_{2,+},\tau_{3,+}]$,
\begin{equation*}
\sum_{i\in\cI_+} \alta_i(t)^2\geq \varepsilon_2.
\end{equation*}
Indeed, recall that the output weights $\alta_i$ evolve for any $t\in[\tau_{2,+},\tau_{3,+}]$ as
\begin{align}
\frac{\df \alta_i(t)}{\df t} & =\langle\altw_i(t),D_+(t)\rangle\notag \\
& = \langle\altw_i(t),D_n(\betastar)\rangle - \frac{1}{n}\sum_{k\in\cS_+} h_{\alttheta(t)}(x_k)\langle \altw_i(t),x_k\rangle\notag\\
& \geq \alta_i(t)\left(\frac{1}{n}\sum_{k\in\cS_+}\langle\altwn_i(t),x_k\rangle \langle\betastar,x_k\rangle + \frac{1}{n}\sum_{k\in\cS_+}\langle\altwn_i(t),\eta_k x_k\rangle - \bigO{d\sum_{i\in\cI_+}\alta_i(t)^2} \right).\label{eq:phase3rate0}
\end{align}
The last inequality comes from the fact that $\frac{\sum_{k\in\cS_+}\|x_k\|^2}{n}=\bigO{d}$. From then, note that $\frac{1}{n}\sum_{k\in\cS_+}\langle\altwn_i(t),x_k\rangle \langle\betastar,x_k\rangle\geq \Omega(\delta_3\sqrt{d})$ during this phase. Moreover, using Chebyshev inequality, we can show for any $z>0$ that with probability at least $1-\bigO{\frac{d}{z^2 n}}$
\begin{align}\label{eq:concresiduals}
\frac{1}{n}\left\|\sum_{k\in\cS_+}\eta_k x_k \right\|_2 \leq z.
\end{align}
Taking a small enough $z=\Theta(\delta_3)$, \cref{eq:concresiduals} holds with probability $1-\bigO{\frac{d}{\delta_3^2 n}}$ and, along \cref{eq:phase3rate0}, this implies that for any $t\in[\tau_{2,+},\tau_{3,+}]$ and $i\in\cI_+$:
\begin{align*}
\frac{\df \alta_i(t)}{\df t} \geq \alta_i(t) \left(\Omega(\delta_3) -\bigO{d\sum_{i\in\cI_+}\alta_i(t)^2}\right).
\end{align*}
In particular, there exists $r=\Theta(\frac{\delta_3}{d})$ such that if $\sum_{i\in\cI_+}\alta_i(t)^2\leq r$, all the $\alta_i(t)$ are increasing.
Moreover thanks to \cref{lemma:phase2}, $\sum_{i\in\cI_+}\alta_i(\tau_{2,+})^2= \varepsilon_2$. As $\delta_3=\Theta(\frac{1}{\sqrt{d}})$, we can choose $\epsstar_2=\Theta(d^{-\frac{3}{2}})$ small enough so that during the third phase,
\begin{equation}\label{eq:phase3rate}
\sum_{i\in\cI_+}\alta_i(t)^2 \geq \varepsilon_2.
\end{equation}
Now note that by definition of $\beta_{n,+}$,
\begin{align}
D_+(t)& = -\frac{1}{n}\sum_{k\in\cS_+} x_k x_k^{\top}\hbeta_+(t) - x_k y_k \notag\\
& = -\Sigma_{n,+}(\hbeta_+(t)-\beta_{n,+})\label{eq:D+}
\end{align}

As a consequence, $\hbeta_+(t)$ evolves as follows:
\begin{align*}
\frac{\df \hbeta_+(t)}{\df t} & = \sum_{i\in\cI_+} \left(\alta_i(t)^2 \bI_d + \altw_i(t)\altw_i(t)^{\top}\right) D_+(t) \\
& = -\left( \sum_{i\in\cI_+} \alta_i(t)^2 \bI_d + \sum_{i\in\cI_+} \altw_i(t)\altw_i(t)^{\top}\right) \Sigma_{n,+}(\hbeta_+(t)-\beta_{n,+})
\end{align*}
In particular, this implies:
\begin{align}
\frac{1}{2}\frac{\df \|\hbeta_{+}(t)-\beta_{n,+}\|_{\Sigma_{n,+}}^2}{\df t} & = \left\langle \frac{\df \hbeta_+(t)}{\df t}, \Sigma_{n,+}(\hbeta_+(t)-\beta_{n,+}) \right\rangle\\
& = - (\hbeta_+(t)-\beta_{n,+})^{\top}  \Sigma_{n,+}\left( \sum_{i\in\cI_+} \alta_i(t)^2 \bI_d + \sum_{i\in\cI_+} \altw_i(t)\altw_i(t)^{\top}\right) \Sigma_{n,+}(\hbeta_+(t)-\beta_{n,+}).\label{eq:phase3growthrate2}
\end{align}
The matrix $\Sigma_{n,+}^{\nicefrac{1}{2}}\left( \sum_{\cI_+} \alta_i(t)^2 \bI_d + \sum_{\cI_+} \altw_i(t)\altw_i(t)^{\top}\right) \Sigma_{n,+}^{\nicefrac{1}{2}}$ is symmetric, positive definite. Thanks to \cref{eq:phase3rate}, its smallest eigenvalue is larger than $\varepsilon_2\lambda_{\min}(\Sigma_{n,+})$, where $\lambda_{\min}(\cdot)$ denotes the smallest eigenvalue of a matrix. Using typical concentration inequalities on the empirical covariance \citep[see e.g.][Section 4.7]{vershynin2018high}, with probability $1-\bigO{\frac{1}{n}}$, $\|\Sigma_{n,+}-\frac{\Sigma}{2}\|_{\op}=\bigO{\sqrt{\frac{d+\log n}{n}}}$.
With the fourth point in \cref{ass:noconvergence}, we then have for a large enough $\nstar=\Theta(d^3\log d)$ and with probability $1-\bigO{\frac{1}{n}}$,
\begin{equation}\label{eq:covarianceeigenvalue}
\begin{gathered}
\bigO{1}\geq\lambda_{\max}(\Sigma_{n,+})\geq \lambda_{\min}(\Sigma_{n,+}) \geq \Omega(1)\\
\text{and}\quad\frac{\lambda_{\max}(\Sigma_{n,+})}{\lambda_{\min}(\Sigma_{n,+})} = \frac{\lambda_{\max}(\Sigma)}{\lambda_{\min}(\Sigma)}+\bigO{\sqrt{\frac{d+\log n}{n}}},
\end{gathered}
\end{equation}
where $\lambda_{\max}(\cdot)$ denotes the largest eigenvalue.

Assume \cref{eq:covarianceeigenvalue} holds in the following, so that the smallest eigenvalue of $\Sigma_{n,+}^{\nicefrac{1}{2}}\left( \sum_{\cI_+} \alta_i(t)^2 \bI_d + \sum_{\cI_+} \altw_i(t)\altw_i(t)^{\top}\right)\Sigma_{n,+}^{\nicefrac{1}{2}}$ is larger than a term of order $\varepsilon_2$. As a consequence, \cref{eq:phase3growthrate2} yields
\begin{equation*}
\frac{1}{2}\frac{\df \|\hbeta_{+}(t)-\beta_{n,+}\|_{\Sigma_{n,+}}^2}{\df t} \leq -\Omega(\varepsilon_2)\|\hbeta_{+}(t)-\beta_{n,+}\|_{\Sigma_{n,+}}^2.
\end{equation*}
Since the third phase ends if $\|\hbeta_{+}(t)-\beta_{n,+}\|_{\Sigma_{n,+}}^2$ becomes smaller than $\varepsilon_3^2$, this yields:
\begin{equation}\label{eq:p3duration}
\tau_{3,+}-\tau_{2,+} = \bigO{\frac{1}{\varepsilon_2}\ln(\frac{1}{\varepsilon_3})}.
\end{equation}
Now recall that for any $i,j\in\cI_+$,
\begin{align*}
\frac{\df (1- \langle \altwn_i(t), \altwn_j(t) \rangle)}{\df t} & = -\langle D_+(t), \altwn_i(t) + \altwn_j(t)\rangle (1-\langle \altwn_i(t), \altwn_j(t) \rangle)\\
& \leq 2\|D_+(t)\|_2 (1-\langle \altwn_i(t), \altwn_j(t) \rangle).
\end{align*}
Notice from \cref{eq:D+} and the previous discussion that $\|D_+(t)\|_2 = \bigO{1}$. As a consequence, a simple Gr\"onwall inequality with \cref{eq:p3duration} yields that for any $t\in[\tau_{2,+},\tau_{3,+}]$:
\begin{align*}
\langle \altwn_i(t), \altwn_j(t) \rangle & \geq 1- (1-\langle \altwn_i(\tau_{2,+}), \altwn_j(\tau_{2,+}) \rangle)\exp((t-\tau_{2,+})\bigO{1})\notag\\
& \geq 1 - \bigO{\frac{\lambda^{\frac{1}{2}}}{\varepsilon_2}}\exp\left(\bigO{\frac{1}{\varepsilon_2}\ln(\frac{1}{\varepsilon_3})}\right) \notag\\
& \geq  1 - \bigO{\lambda^{\frac{1}{2}-\varepsilon}}.\label{eq:phase3alignment}
\end{align*}
The second inequality comes from the value of $(1-\langle \altwn_i(\tau_{2,+}), \altwn_j(\tau_{2,+}) \rangle)$, thanks to \cref{lemma:phase2}. The last one comes from our choice of $\varepsilon_3$ for a large enough $\alpha_0=\Theta(1)$.

In particular, this last inequality can be used to show\footnote{For that, we decompose  $\altwn_i=\alpha_{ij}\altwn_j+u_{ij}$ with $u_{ij}\perp\altwn_j$ and show that $\alpha_{ij}=1-\bigO{\lambda^{\frac{1}{2}-\varepsilon}}$ and $\|u_{ij}\|^2=\bigO{\lambda^{\frac{1}{2}-\varepsilon}}$.} that for any $i,j \in\cI_+$ and $t\in[\tau_{2,+},\tau_{3,+}]$,  $\altwn_i(t)=\altwn_j(t) + \bigO{\lambda^{\frac{1-2\varepsilon}{4}}}$. 
In particular, this yields for any $i\in\cI_+$ and $t\in[\tau_{2,+},\tau_{3,+}]$
\begin{align}
\hbeta_+(t) & = \sum_{j\in\cI_+}\alta_j(t)^2\altwn_j(t) \\
& = \left(\sum_{j\in\cI_+}\alta_j(t)^2\right)\left(\altwn_i(t) + \bigO{\lambda^{\frac{1-2\varepsilon}{4}}}\right).\label{eq:phase3tech3}
\end{align}
Since $\|\altwn_i(t)\|_2=1-\bigO{\lambda^{\frac{1}{2}-\varepsilon}}$, this last equality actually yields the following comparison for $t\in[\tau_{2,+},\tau_{3,+}]$:
\begin{equation}\label{eq:normbeta+}
\|\hbeta_+(t)\|_2 \leq  \sum_{j\in\cI_+}\alta_j(t)^2 \leq (1+\bigO{\lambda^{\frac{1-2\varepsilon}{4}}})\|\hbeta_+(t)\|_2.
\end{equation}
In particular, since $\|\hbeta_+(t)\|_2 = \bigO{1}$, this yields $\sum_{j\in\cI_+}\alta_j(t)^2= \bigO{1}$. 

%Moreover, using similar manipulations, we can show for any $t\in[\tau_{2,+},\tau_{3,+}]$ that
%\begin{align}\label{eq:beta+beta+top}
%\frac{\hbeta_{+}(t)\hbeta_{+}(t)^{\top}}{\sum_{j\in\cI_+}\alta_j(t)^2}&= \sum_{i\in\cI_+}\alta_i(t)^2\altwn_i(t)\altwn_i(t)^{\top} +\bigO{\lambda^{\frac{1-2\varepsilon}{4}}}.
%\end{align}
From there, for any $x_k\in\cS_+$ and $i\in\cI_+$, $\langle\altwn_{i}(t),x_k \rangle$ evolves as follows during the third phase
\begin{align*}
\frac{\df\langle\altwn_{i}(t),\frac{x_k}{\|x_k\|} \rangle}{\df t} &  = \langle D_+(t),\frac{x_k}{\|x_k\|}\rangle - \langle D_+(t),\altwn_i(t)\rangle\langle\altwn_i(t),\frac{x_k}{\|x_k\|}\rangle\\
& =  \langle \beta_{n,+}-\hbeta_+(t),\Sigma_{n,+}\frac{x_k}{\|x_k\|}\rangle - \bigO{\langle\altwn_i(t),\frac{x_k}{\|x_k\|}\rangle}\\
& =  \frac{1}{2}\langle \beta_{n,+}-\hbeta_+(t),\frac{x_k}{\|x_k\|}\rangle  + \langle (\Sigma_{n,+}-\frac{\bI_d}{2})(\beta_{n,+}-\hbeta_+(t)),\frac{x_k}{\|x_k\|}\rangle - \bigO{\langle\altwn_i(t),\frac{x_k}{\|x_k\|}\rangle}.
\end{align*}
Note that
\begin{align*}
\beta_{n,+} & =\betastar + \frac{\Sigma_{n,+}^{-1}}{n}\sum_{k\in\cS_+}\eta_k x_k.
\end{align*}
This then yields, thanks to \cref{eq:covarianceeigenvalue}
\begin{multline}\label{eq:phase3angle0}
\frac{\df\langle\altwn_{i}(t),\frac{x_k}{\|x_k\|} \rangle}{\df t} \geq \frac{1}{2}\langle \betastar-\hbeta_+(t),\frac{x_k}{\|x_k\|}\rangle  + \langle (\Sigma_{n,+}-\frac{\bI_d}{2})(\beta_{n,+}-\hbeta_+(t)),\frac{x_k}{\|x_k\|}\rangle\\ - \bigO{\frac{1}{n}\|\sum_{k\in\cS_+}\eta_k x_k\|_2} - \bigO{\langle\altwn_i(t),\frac{x_k}{\|x_k\|}\rangle}.
\end{multline}
From there, thanks to the third point of \cref{ass:noconvergence} and \cref{eq:phase3tech3}:
\begin{align}
\langle \betastar-\hbeta_+(t),\frac{x_k}{\|x_k\|}\rangle & \geq \frac{c}{\sqrt{d}} - \bigO{\langle\altwn_i(t),\frac{x_k}{\|x_k\|}\rangle} - \bigO{\lambda^{\frac{1-2\varepsilon}{4}}}.\label{eq:phase3tech0}
\end{align}
Additionally, using the fact that $\|\beta_{n,+}-\hbeta_+(t)\|_{\Sigma_{n,+}}$ is decreasing over time and smaller than $\|\beta_{n,+}\|_{\Sigma_{n,+}}+\bigO{\lambda^{2-4\varepsilon}}$ at the beginning of the second phase,
\begin{align*}
\langle (\Sigma_{n,+}-\frac{\bI_d}{2})(\beta_{n,+}-\hbeta_+(t)),\frac{x_k}{\|x_k\|}\rangle &\geq -\left\|\Sigma_{n,+}-\frac{\bI_d}{2}\right\|_{\op}\|\beta_{n,+}-\hbeta_+(t)\|_2\\
&\geq -\frac{1}{2}\left\|\Sigma-\bI_d\right\|_{\op}\sqrt{\frac{1}{\lambda_{\min}(\Sigma_{n,+})}}\|\beta_{n,+}-\hbeta_+(t)\|_{\Sigma_{n,+}}-\bigO{\sqrt{\frac{d+\log n}{n}}}\\
&\geq-\frac{1}{2}\left\|\Sigma-\bI_d\right\|_{\op}\sqrt{\frac{1}{\lambda_{\min}(\Sigma_{n,+})}}\left(\|\beta_{n,+}\|_{\Sigma_{n,+}}+\bigO{\lambda^{2-4\varepsilon}}\right)-\bigO{\sqrt{\frac{d+\log n}{n}}}\\
&\geq-\frac{1}{2}\left\|\Sigma-\bI_d\right\|_{\op}\sqrt{\frac{\lambda_{\max}(\Sigma_{n,+})}{\lambda_{\min}(\Sigma_{n,+})}}\|\beta_{n,+}\|_{2} - \bigO{\lambda^{2-4\varepsilon}+\sqrt{\frac{d+\log n}{n}}}\\
&\geq-\frac{1}{2}\sqrt{\frac{\lambda_{\max}(\Sigma_{n,+})}{\lambda_{\min}(\Sigma_{n,+})}}\left\|\Sigma-\bI_d\right\|_{\op}\|\betastar\|_{2} \\
&\phantom{\geq}\quad- \bigO{\lambda^{2-4\varepsilon}+\frac{1}{n}\|\sum_{k\in\cS_+}\eta_k x_k\|_2+\sqrt{\frac{d+\log n}{n}}}  
\end{align*}
Now using \cref{eq:covarianceeigenvalue} and the fourth point of \cref{ass:noconvergence}, note that
\begin{align*}
\sqrt{\frac{\lambda_{\max}(\Sigma_{n,+})}{\lambda_{\min}(\Sigma_{n,+})}}\leq 2+\bigO{\sqrt{\frac{d+\log n}{n}}}.
\end{align*}
So that the previous inequality yields
\begin{equation}\label{eq:phase3tech1}
\langle (\Sigma_{n,+}-\frac{\bI_d}{2})(\beta_{n,+}-\hbeta_+(t)),\frac{x_k}{\|x_k\|}\rangle\geq - \left\|\Sigma-\bI_d\right\|_{\op}\|\betastar\|_{2} -\bigO{\lambda^{2-4\varepsilon}+\frac{1}{n}\|\sum_{k\in\cS_+}\eta_k x_k\|_2+\sqrt{\frac{d+\log n}{n}}}.
\end{equation}
Finally, thanks to \cref{eq:concresiduals}, $\frac{1}{n}\|\sum_{k\in\cS_+}\eta_k x_k\|_2\leq z'$ with probability at least $1-\bigO{\frac{d}{z'^2n}}$. 
Using \cref{eq:phase3tech0,eq:phase3tech1} in \cref{eq:phase3angle0} finally yields for the third phase:
\begin{align*}
\frac{\df\langle\altwn_{i}(t),\frac{x_k}{\|x_k\|} \rangle}{\df t}  & \geq \frac{c}{2\sqrt{d}}-\|\Sigma-\bI_d\|_{\op}\|\betastar\|_2-\bigO{\lambda^{\frac{1-2\varepsilon}{4}}+z'+\sqrt{\frac{d+\log n}{n}}} - \bigO{\langle\altwn_i(t),\frac{x_k}{\|x_k\|}\rangle}.
\end{align*}
Thanks to the fourth point of \cref{ass:noconvergence}, $\frac{c}{2\sqrt{d}}-\|\Sigma-\bI_d\|_{\op}\|\betastar\|_2>0$, so that we can choose $\lambdastar,z'=\Theta(1)$ small enough and $\nstar=\Theta(d^3\log d)$ large enough so that the previous inequality becomes, with probability at least $1-\bigO{\frac{d}{n}}$ 
\begin{align*}
\frac{\df\langle\altwn_{i}(t),\frac{x_k}{\|x_k\|} \rangle}{\df t}  & \geq \Omega(\frac{1}{\sqrt{d}}) - \bigO{\langle\altwn_i(t),\frac{x_k}{\|x_k\|}\rangle}.
\end{align*}
A simple Gr\"onwall argument with the second point of \cref{lemma:phase2} then implies that for any $t\in[\tau_{2,+},\tau_{3,+}]$, $i\in\cI_+$ and $k\in\cS_+$,
\begin{align*}
\langle\altwn_{i}(t),\frac{x_k}{\|x_k\|} \rangle \geq \Omega(\frac{1}{\sqrt{d}}).
\end{align*}
Since the term  $\Omega(\frac{1}{\sqrt{d}})$ here does not depend on $\delta_3$, we can choose $\delta_3=\Theta(\frac{1}{\sqrt{d}})$ small enough so that
\begin{align*}
\langle\altwn_{i}(t),\frac{x_k}{\|x_k\|} \rangle \geq 2\delta_3.
\end{align*}
We can show similarly for $k\in\cS_-$, so that point 1 in \cref{lemma:phase3} holds, which concludes the proof.
\end{proof}

\subsection{Phase 4: final convergence}

The last phase is defined for some $\varepsilon_4>\varepsilon_3$ by the following stopping time, for any $\circ\in\{+,-\}$:
\begin{gather*}
\tau_{4,\circ} = \inf\{t\geq\tau_{3,\circ} \mid \|\hbeta_{\circ}(t)-\hbeta_{\circ}(\tau_{3,\circ})\|_{\Sigma_{n,\circ}} \geq \varepsilon_4\}.
\end{gather*}

\begin{restatable}{lem}{phase4}\label{lemma:phase4}
If \cref{ass:noconvergence} holds, for any $\varepsilon\in(0,\frac{1}{4})$, there exist $\lambdastar=\Theta(\frac{1}{d})$, $\epsstar_2=\Theta(d^{-\frac{3}{2}})$, $\nstar=\Theta(d^3\log d)$, $\alpha_0=\Theta(1)$, $\delta_3=\Theta(\frac{1}{\sqrt{d}})$, $\epsstar_3=\Theta(\frac{1}{\sqrt{d}})$ and $\varepsilon_4=\Theta(\epsstar_3)$ such that for any $\lambda\leq\lambdastar$, $n\geq \nstar$, $\circ\in\{+,-\}$, $\varepsilon_2\in[\lambda^{2-4\varepsilon},\epsstar_2]$ and $\varepsilon_3\in[\lambda^{\alpha_0\varepsilon\varepsilon_2},\epsstar_3]$, with probability $1-\bigO{\frac{d^2}{n}+\frac{1}{2^m}}$,  $\tau_{4, \circ}=+\infty$ and
\begin{enumerate}
\item neurons in $\cI_{\circ}$ are in the same cone as $\circ\betastar$ for any $t\geq\tau_{3,\circ}$:
\begin{equation*}
\forall i\in \cI_{\circ}, \quad  \min_{k\in\cS_{\circ}}\langle \altwn_i(t), \frac{x_k}{\|x_k\|}\rangle >0 \quad \text{ and }\quad  \max_{k\in\cS_{-\circ}}\langle \altwn_i(t), \frac{x_k}{\|x_k\|}\rangle <0.
\end{equation*}
\item $\lim_{t\to\infty}\alttheta(t)$ exists and $\lim_{t\to\infty}\hbeta_{\circ}(t)=\beta_{n,\circ}$.
\end{enumerate}
\end{restatable}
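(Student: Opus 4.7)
The plan is a bootstrap argument: on $[\tau_{3,\circ}, \tau_{4,\circ})$ the activation-cone condition of Lemma \ref{lemma:phase3} continues to hold by continuity, so the decoupled system \eqref{eq:autonomous} is linear in $\hbeta_\circ$ and satisfies a local Polyak--{\L}ojasiewicz inequality. This gives exponential decay of $\|\hbeta_\circ(t) - \beta_{n,\circ}\|_{\Sigma_{n,\circ}}$, which in turn bounds the total movement of each neuron direction by $O(\varepsilon_3)$; for $\epsstar_3$ chosen small enough this preserves the margin $\delta_3$ strictly, and a classical continuity argument pushes $\tau_{4,\circ}$ to infinity. Integrability of $\|D_\circ(t)\|$ then gives pointwise convergence of every $(\altw_i, \alta_i)$, hence of $\theta$ via Lemma \ref{lemma:autonomous}.

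Concretely, as already derived in the proof of Lemma \ref{lemma:phase3}, while the cone condition holds
$$\frac{d\hbeta_\circ(t)}{dt} = -M(t)\,\Sigma_{n,\circ}\bigl(\hbeta_\circ(t) - \beta_{n,\circ}\bigr), \quad M(t) = \sum_{i\in\cI_\circ}\bigl(\alta_i(t)^2 I_d + \altw_i(t)\altw_i(t)^{\top}\bigr).$$
Lemma \ref{lemma:phase3} together with \eqref{eq:normbeta+} gives $\sum_i \alta_i(\tau_{3,\circ})^2 = \Theta(1)$, and since its derivative equals $2\langle \hbeta_\circ(t), D_\circ(t)\rangle$ with $\|\hbeta_\circ\| = O(1)$, this lower bound persists throughout phase 4 provided $\int \|D_\circ\|$ stays $O(\varepsilon_3)$. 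Hence $M(t) \succeq \Omega(1) I_d$ and the same computation as \eqref{eq:phase3growthrate2} yields
$$\frac{d}{dt}\|\hbeta_\circ(t) - \beta_{n,\circ}\|^2_{\Sigma_{n,\circ}} \leq -\kappa\,\|\hbeta_\circ(t) - \beta_{n,\circ}\|^2_{\Sigma_{n,\circ}}$$
for some $\kappa = \Theta(1)$, giving exponential decay from the initial value $\varepsilon_3^2$. Consequently $D_\circ(t) = -\Sigma_{n,\circ}(\hbeta_\circ(t) - \beta_{n,\circ})$ satisfies $\int_{\tau_{3,\circ}}^\infty \|D_\circ(s)\|\,ds = O(\varepsilon_3)$.

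To close the bootstrap, the ODE $\frac{d\altwn_i}{dt} = D_\circ - \langle \altwn_i, D_\circ\rangle \altwn_i$ implies $\|\altwn_i(t) - \altwn_i(\tau_{3,\circ})\|_2 \leq 2\int_{\tau_{3,\circ}}^t \|D_\circ(s)\|\,ds = O(\varepsilon_3)$, so for every $k \in [n]$ the inner product $\langle \altwn_i(t), x_k/\|x_k\|\rangle$ differs from its phase-3 value by at most $O(\varepsilon_3)$. Choosing $\epsstar_3 = \Theta(1/\sqrt d)$ small enough that this perturbation is at most $\delta_3$, the margin guaranteed by Lemma \ref{lemma:phase3} is preserved; taking $\varepsilon_4 = \Theta(\epsstar_3)$ a bit larger than $2\varepsilon_3$, the triangle inequality $\|\hbeta_\circ(t) - \hbeta_\circ(\tau_{3,\circ})\|_{\Sigma_{n,\circ}} \leq 2\varepsilon_3 < \varepsilon_4$ shows the stopping condition defining $\tau_{4,\circ}$ is never triggered, i.e.\ $\tau_{4,\circ} = +\infty$, giving item 1. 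For item 2, \eqref{eq:autonomous} together with exponential decay of $\|D_\circ(t)\|$ gives $\|\dot\altw_i(t)\| + |\dot\alta_i(t)| = O(e^{-\kappa(t-\tau_{3,\circ})/2})$, so each $(\altw_i(t), \alta_i(t))$ is Cauchy and converges, and $\hbeta_\circ(t) \to \beta_{n,\circ}$. Since item 1 forces $T_+ = T_- = +\infty$ in Lemma \ref{lemma:autonomous}, we conclude $\theta(t) = \alttheta(t)$ converges as well.

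The main obstacle is the delicate balancing of constants needed to close the bootstrap: the $O(\varepsilon_3)$ displacement of each neuron direction must be strictly dominated by the margin $\delta_3 = \Theta(1/\sqrt d)$ of Lemma \ref{lemma:phase3}, which constrains $\epsstar_3$ at the scale $\Theta(1/\sqrt d)$; simultaneously $\sum_i \alta_i^2$ must remain bounded away from zero throughout phase 4, which requires controlling the sign-indefinite quantity $\frac{d}{dt}\sum_i \alta_i^2$ by integrating against $\|D_\circ\|$, and this feeds back into the PL constant $\kappa$. Carefully tracking these dependencies alongside the parameters $\lambda, n, \varepsilon_2$ inherited from previous phases is the tedious part, while the conceptual picture (linear contraction plus bootstrap) is straightforward.
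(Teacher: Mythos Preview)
Your proposal is correct and follows essentially the same approach as the paper: exponential contraction of $\|\hbeta_\circ - \beta_{n,\circ}\|_{\Sigma_{n,\circ}}$ via a Polyak--{\L}ojasiewicz-type inequality (the paper phrases this explicitly as a PL inequality on the loss $L_+$, you phrase it as the linear ODE for $\hbeta_\circ$, but these are the same computation), then integrability of $\|D_\circ\|$ to bound the total drift of each $\altwn_i$ by $O(\varepsilon_3)$, and finally choosing $\epsstar_3 = \Theta(1/\sqrt d)$ small enough to preserve the cone margin $\delta_3$ from Lemma \ref{lemma:phase3}.

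One minor packaging difference worth noting: to show $\|\hbeta_\circ(t) - \hbeta_\circ(\tau_{3,\circ})\|_{\Sigma_{n,\circ}} < \varepsilon_4$, the paper goes through the total parameter variation $\|\alttheta_+(t) - \alttheta_+(\tau_{3,+})\| \leq \int \|\nabla L_+\| = O(\varepsilon_3)$ and then Cauchy--Schwarz, whereas your triangle inequality through $\beta_{n,\circ}$ (both distances are $\leq \varepsilon_3$ by monotonicity) is slightly more direct. Regarding the circularity you flag in the last paragraph---that $\sum_i \alta_i^2 = \Omega(1)$ is needed for the PL constant but is itself controlled by $\int\|D_\circ\|$---the paper resolves this in the same implicit bootstrap fashion (it simply asserts $\sum_i \alta_i^2 = \Omega(1)$ ``in the considered set'' using the phase-3 relation \eqref{eq:normbeta+}), so your identification of this as the delicate point is accurate.
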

\begin{proof}
Similarly to the previous phases, we assume that $\circ=+$, that the random event $\cI_+\neq\emptyset$, \cref{eq:concentrationlinear,eq:xnorms,eq:covarianceeigenvalue} and the statements of \cref{lemma:phase3} all hold. 

Define in the following the positive loss $L_+$ for any $\alttheta_+\in\R^{(d+1)\times\cI_+}$ by
\begin{equation*}
L_+(\alttheta)=\frac{1}{2n}\sum_{k\in\cS_+} \left(\sum_{i\in\cI+}\alta_i \langle\altw_i,x_k\rangle-y_k\right)^2.
\end{equation*}
Note that the autonomous system given by \cref{eq:autonomous} actually defines a gradient flow over $L_+$, i.e., for $\alttheta_+=(\alta_i,\altw_i)_{i\in\cI_+}$,
\begin{align*}
\frac{\df \alttheta_+(t)}{\df t} = -\nabla L_+(\alttheta_+(t)).
\end{align*}
The main argument for this phase is to prove a local Polyak-\L ojasiewicz inequality:
\begin{gather}\label{eq:PL}
\|\nabla L_+(\alttheta_+)\|_2^2 \geq \Omega(1) (L_+(\alttheta_+)-L_{n,+}) \\
\notag\text{for any } \alttheta_+ \text{ such that}\quad \|\sum_{i\in\cI_+}\alta_i\altw_i-\hbeta_{+}(\tau_{3,\circ})\|_{\Sigma_{n,+}} \leq \varepsilon_4,\\
\notag \text{where}\quad L_{n,+} = \frac{1}{2n}\sum_{k\in\cS_+} \left( \langle\beta_{n,+},x_k\rangle-y_k\right)^2.
\end{gather}

Indeed, we can lower bound $\|\nabla L_+(\alttheta_+)\|_2$ for any such $\alttheta_+$ as follows
\begin{align*}
\|\nabla L_+(\alttheta_+)\|_2^2 & \geq \sum_{i\in\cI_+} \left\|\frac{\partial  L_+(\alttheta_+)}{\partial \altw_i}\right\|^2\\
& = \left(\sum_{i\in\cI_+} \alta_i(t)^2 \right) \|D_+(t)\|^2_2\\
& \geq \lambda_{\min}(\Sigma_{n,+}) \left(\sum_{i\in\cI_+} \alta_i(t)^2 \right) \|\hbeta_+-\beta_{n,+}\|^2_{\Sigma_{n,+}}
\end{align*}
where $\hbeta_+=\sum_{i\in\cI_+}\alta_i\altw_i$. The last inequality comes from \cref{eq:D+}. Note that for a small enough choice of $\epsstar_3=\bigO{1}$ and $\varepsilon_4=\Theta(\epsstar_3)$, $\sum_{i\in\cI_+} \alta_i(t)^2 =\Omega(1)$ in the considered set. Moreover, \cref{eq:covarianceeigenvalue} implies $\lambda_{\min}(\Sigma_{n,+})=\Omega(1)$, so that
\begin{equation}\label{eq:PLaux1}
\|\nabla L_+(\alttheta_+)\|_2^2 \geq\Omega(1) \|\hbeta_+-\beta_{n,+}\|^2_{\Sigma_{n,+}}.
\end{equation}
On the other hand, a simple algebraic manipulation yields for any $\alttheta_+$:
\begin{align*}
L_+(\alttheta_+)-L_{n,+} & =\frac{1}{2n} \sum_{k\in\cS_+}\left(\langle \hbeta_+, x_k \rangle-y_k\right)^2  - \left(\langle \beta_{n,+}, x_k \rangle-y_k\right)^2 \\
& = \frac{1}{2n} \sum_{k\in\cS_+}\left(\hbeta_+-\beta_{n,+}\right)^{\top} x_k  - \left(x_k^{\top} (\hbeta_+ -\beta_{n,+}+2\beta_{n,+})-2y_k\right) \\
& =\frac{1}{2} \left(\hbeta_+-\beta_{n,+}\right)^{\top} \Sigma_{n,+}\left(\hbeta_+-\beta_{n,+}\right)  + \frac{1}{n} \bX_{n,+} ( \bX_{n,+}^{\top} \beta_{n,+} - \by),
\end{align*}
where $\bX_{n,+}$ is the $|\cS_+|\times d$ matrix, whose rows are given by $x_k$ for $k\in\cS_+$. By definition of the OLS estimator $\beta_{n,+}$, $\bX_{n,+}^{\top} \beta_{n,+} - \by = \mathbf{0}$, so that
\begin{equation}\label{eq:PLaux2}
L_+(\alttheta_+)-L_{n,+}  = \frac{1}{2}\|\hbeta_+-\beta_{n,+}\|_{\Sigma_{n,+}}^2.
\end{equation}
Combining \cref{eq:PLaux1} with \cref{eq:PLaux2} finally yields the Polyak-\L ojasiewicz inequality given by \cref{eq:PL}.

From there, this implies by chain rule for any $t\in[\tau_{3,+},\tau_{4,+}]$
\begin{align*}
\frac{\df L_+(\alttheta_+(t))}{\df t} & =- \|\nabla L_+(\alttheta_+)\|_2^2\\
& \leq -\Omega(1) (L_+(\alttheta_+(t))-L_{n,+}).
\end{align*}
By Gr\"onwall inequality, this implies for some $\nu=\Theta(1)$, for any $t\in[\tau_{3,+},\tau_{4,+}]$
\begin{align}
L_+(\alttheta_+(t))-L_{n,+}  & \leq (L_+(\alttheta_+(\tau_{3,+}))-L_{n,+})e^{-\nu (t-\tau_{3,+})}\notag\\
& \leq \frac{\varepsilon_3^2}{2} e^{-\nu (t-\tau_{3,+})}. \label{eq:lossrate}
\end{align}
The last inequality comes from the fact that at the end of the third phase, $\|\hbeta_+(t)-\beta_{n,+}\|_{\Sigma_{n,+}}=\varepsilon_3$.

We bounded by below the norm of $\nabla L_+(\alttheta_+(s))$, but it can also easily be bounded by above as
\begin{align*}
\|\nabla L_+(\alttheta_+(s))\|^2_2 & \leq \left(\sum_{i\in\cI_+}\alta_i(t)^2+\|\altw_i(t)\|_2^2\right)\|D_+(t)\|_2^2\\
& \leq 2\lambda_{\max}(\Sigma_{n,+})\left(\sum_{i\in\cI_+}\alta_i(t)^2\right)\|\hbeta_+(t)-\beta_{n,+}\|_{\Sigma_{n,+}}^2\\
& \leq \bigO{1} (L_+(\alttheta_+(t))-L_{n,+})
\end{align*}
From there, the variation of $\alttheta_+(t)$ can easily be bounded for any $t\in[\tau_{3,+},\tau_{4,+}]$ as
\begin{align}
\|\alttheta_+(t)-\alttheta(\tau_{3,+})\|_2 & \leq \int_{\tau_{3,+}}^t \|\nabla L_+(\alttheta_+(s))\|\df s\notag\\
& \leq \bigO{1} \varepsilon_3\int_{0}^{t-\tau_{3,+}} e^{-\frac{\nu}{2} s}\df s\notag\\
& \leq \bigO{\varepsilon_3}.\label{eq:finitevariation}
\end{align}
Moreover, note that
\begin{align*}
\hbeta_{+}(t)-\hbeta_{\circ}(\tau_{3,+}) & =\sum_{\cI_+}(\alta_i(t)-\alta_i(\tau_{3,+}))\altw_i(\tau_{3,+}) + \sum_{\cI_+}(\altw_i(t)-\altw_i(\tau_{3,+}))\alta_i(\tau_{3,+}).
\end{align*}
In particular,
\begin{align*}
\|\hbeta_{+}(t)-\hbeta_{\circ}(\tau_{3,+})\|_2 & \leq \sum_{\cI_+}|\alta_i(t)-\alta_i(\tau_{3,+})|\|\altw_i(\tau_{3,+})\|_2 + \sum_{\cI_+}\|\altw_i(t)-\altw_i(\tau_{3,+})\|_2\alta_i(\tau_{3,+})\\
&\leq \sqrt{\sum_{\cI_+}(\alta_i(t)-\alta_i(\tau_{3,+}))^2}\sqrt{\sum_{\cI_+}\|\altw_i(\tau_{3,+})\|_2^2} + \sqrt{\sum_{\cI_+}\|\altw_i(t)-\altw_i(\tau_{3,+})\|_2^2}\sqrt{\sum_{\cI_+}\alta_i(\tau_{3,+})^2}\\
&\leq \bigO{1} \|\alttheta(t)-\alttheta(\tau_{3,+})\|_2\\
&\leq\bigO{\varepsilon_3}.
\end{align*}
We can thus choose $\epsstar_3=\bigO{1}$ and $\varepsilon_4=\Theta(\epsstar_3)$ small enough such that \cref{eq:PLaux1} still holds, but $\varepsilon_4$ large enough with respect to $\epsstar_3$ such that the previous inequality ensures for any $t\in[\tau_{3,+},\tau_{4,+}]$:
\begin{equation*}
\|\hbeta_{+}(t)-\hbeta_{\circ}(\tau_{3,+})\|_{\Sigma_{n,+}} \leq \frac{\varepsilon_4}{2}.
\end{equation*}
In particular, this implies that $\tau_{4,+}=+\infty$. Since $\alttheta_+(t)$ has finite variation (Equation~\ref{eq:finitevariation}), this also implies that $\lim_{t\to\infty}\alttheta_+(t)$ exists. The same holds for $\alttheta_-(t)$ by symmetric arguments, so that $\lim_{t\to\infty}\alttheta(t)$ exists. Moreover, \cref{eq:PLaux2,eq:lossrate} imply that
\begin{equation*}
\lim_{t\to\infty}\hbeta_{+}(t)=\beta_{n,+}.
\end{equation*}
This yields the second point of \cref{lemma:phase4}. 

It now remains to prove the first point of \cref{lemma:phase4}. Note that for any $t\geq\tau_{3,+}$ and $i\in\cI_+$:
\begin{align*}
\|\altwn_i(t)-\altwn_i(\tau_{3,+})\|_2 & \leq 2\int_{\tau_{3,+}}^t \|D_+(s)\|_2 \df s \\
&\leq\bigO{\varepsilon_3}.
\end{align*}
Thanks to the first point of \cref{lemma:phase3}, we can choose $\epsstar_3=\Theta(\frac{1}{\sqrt{d}})$ small enough so thatfor any $t\geq\tau_{3,+}$ and $i\in\cI_+$:
\begin{equation*}
 \min_{k\in\cS_{+}}\langle \altwn_i(t), \frac{x_k}{\|x_k\|}\rangle >0 \quad \text{ and }\quad  \max_{k\in\cS_{-}}\langle \altwn_i(t), \frac{x_k}{\|x_k\|}\rangle <0,
\end{equation*}
which concludes the proof of \cref{lemma:phase4}.
\end{proof}

\begin{proof}[Proof of \cref{thm:noconvergence}]
We can conclude the proof of \cref{thm:noconvergence} by noticing that we can indeed choose $\varepsilon,\varepsilon_2,\varepsilon_3,\varepsilon_4$ such that for any $\lambda\leq\lambdastar=\Theta(\frac{1}{d})$ and $n\geq \nstar = \Theta_{\mu}(d^3\log d)$, with probability $1-\bigO{\frac{d^2}{n}+\frac{1}{2^m}}$, the statements of \cref{lemma:phase1,lemma:phase2,lemma:phase3,lemma:phase4} all simultaneously hold. In particular, the stopping times $T_+$ and $T_-$ defined in \cref{lemma:autonomous} are infinite. \cref{lemma:autonomous} then implies that for any $t\geq\tau$, $\alttheta(t)=\theta(t)$. From then, \cref{lemma:phase4} implies \cref{thm:noconvergence}.
\end{proof}

\end{document}